\documentclass[oneside]{article}

\usepackage{style}

\usepackage[utf8]{inputenc} 
\usepackage[T1]{fontenc}    
\usepackage{hyperref}       
\usepackage{url}            
\usepackage{booktabs}       
\usepackage{amsfonts}       
\usepackage{nicefrac}       
\usepackage{microtype}      
\usepackage{lipsum}
\usepackage{fancyhdr}       
\usepackage{graphicx}       
\graphicspath{{media/}}     

\usepackage[mono=false]{libertine} 

\usepackage{luatex85}

\usepackage{iftex,listings,fancyvrb,longtable,xcolor,soul,setspace,babel}
\usepackage{latexsym}
\usepackage{capt-of}
\usepackage{booktabs}
\usepackage[most]{tcolorbox}
\usepackage{times}
\usepackage{tikz}
\usepackage{amsmath,amsthm,amssymb,mathrsfs,amsfonts,dsfont}
\usepackage{epsfig,graphicx}
\usepackage{tabularx}
\usepackage{blkarray}
\usepackage{slashed}
\usepackage{caption}
\usepackage{lipsum} 
\usepackage[toc,title,titletoc,header]{appendix}
\usepackage{minitoc}
\usepackage{color}
\usepackage{multicol} 
\usepackage{multirow}
\usepackage{bm}
\usepackage{imakeidx} 
\usepackage{hyperref}  
\usepackage{etoolbox}
\usepackage{filecontents}
\usepackage{catchfile}
\usepackage{pdfpages}
\usepackage{amscd}
\usepackage{tikz-cd}

\hypersetup{
    colorlinks=true,
    citecolor=magenta,
    linkcolor=blue,
    filecolor=green,      
    urlcolor=cyan,
}
\usepackage[capitalise]{cleveref}
\usepackage{subcaption}
\usepackage{enumitem}
\usepackage{mathtools}
\usepackage{physics}
\usepackage[linesnumbered,ruled,vlined,algosection]{algorithm2e}
\SetCommentSty{textsf}
\usepackage{epigraph}
\usepackage{geometry}

\usepackage{thmtools}
\declaretheorem[numberwithin=section]{theorem}

\declaretheorem[numberwithin=section]{lemma}
\declaretheorem[numberwithin=section]{proposition}

\declaretheorem[sibling=theorem]{corollary}
\usepackage{changepage}

\makeindex[columns=3, intoc, title=Alphabetical Index, options= -s index.ist]

\usepackage{fancyhdr}
\makeatletter
\def\chaptermark#1{\markboth{\protect\hyper@linkstart{link}{\@currentHref}{Chapter \thechapter ~ #1}\protect\hyper@linkend}{}}
\def\sectionmark#1{\markright{\protect\hyper@linkstart{link}{\@currentHref}{\thesection ~ #1}\protect\hyper@linkend}}
\makeatother

\usetikzlibrary{positioning}

\sethlcolor{yellow}
\newwrite\commentsfile
\AtBeginDocument{%
  \IfFileExists{comments.txt}{}{%
\immediate\openout\commentsfile=comments.txt
  }%
}
\AtEndDocument{%
  \immediate\closeout\commentsfile
}

\newcommand{\readcomments}{%
  \IfFileExists{comments.txt}{%
    \newcommand{\allcomments}{\input{comments.txt}}%
  }{%
    \def\allcomments{}%
  }%
}

\makeatletter
\renewcommand\paragraph{\@startsection{paragraph}{4}{\z@}%
  {-3.25ex \@plus -1ex \@minus -.2ex}%
  {1em}%
  {\normalfont\normalsize\bfseries}}
\makeatother

\newcommand{\probP}{\text{I\kern-0.15em P}} 

\newtheoremstyle{breakthm}
  {\topsep}        
  {\topsep}        
  {\itshape}       
  {}               
  {\bfseries}      
  { }              
  {\newline}       
  {}   

\theoremstyle{breakthm}
\newtheorem{definition}{Definition}[subsection]

\newtheorem{newcorollary}{Corollary}[subsection]
\newtheorem{newlemma}{Lemma}[subsection]

\newtheorem{remark}{Remark}[section]
\tcolorboxenvironment{boxeddefinition}{
  boxrule=0.5pt,
  colframe=black,
  colback=white,
  enhanced,
  rounded corners=southeast, 
  arc=3pt,               
  before skip=10pt,
  after skip=10pt
}

\numberwithin{equation}{subsection}

\title{\bf \huge Working Paper: Towards Schema-based Learning from a Category-Theoretic Perspective
}

\author{
  Pablo de los Riscos, Fernando J. Corbacho \\
  Cognodata I+D \&\\
  Universidad Autonoma de Madrid \\
  \texttt{\{pablo.delosriscos, fernando.corbacho | @cognodata.com\}} \\
  \And
  Michael A. Arbib \\
  University of California \\
  San Diego\\
  \texttt{arbib@usc.edu} \\
}

\begin{document}
\maketitle
\setcounter{tocdepth}{2}

\begin{abstract}
We introduce a hierarchical categorical framework for Schema-Based Learning (SBL) composed of four interconnected levels. At the schema level, we define a syntactic free multicategory $Sch_{syn}$ that encodes fundamental schemas and transformations. An implementation functor $\mathcal{I}$ maps syntactic schemas to concrete representational languages, and, via the Grothendieck construction, induces the total category $Sch_{impl}$. Each implemented schema is further mapped by the functor $Model$ into the Kleisli category $\mathbf{KL(G)}$ of the Giry monad, providing a probabilistic representational model, while an instances presheaf assigns to each implemented schema its space of evaluated instances. A semantic category $Sch_{sem}$ is given as a full subcategory of $\mathbf{KL(G)}$, and an interpretation functor relates the implemented schemas in $Sch_{impl}$ to their semantic counterparts in $Sch_{sem}$.

Additionally, at the SBL-agent level, $Sch_{impl}$ is equipped with a duoidal structure $\mathcal{O}_{Sch}$ with monoidal products and natural transformation, supporting the creation of schema-based workflows. A left duoidal action of $Sch_{impl}$ on the $Mind$ category enables the execution of schema-based workflows on mental objects. Objects of $Mind$ consist of tuples of mental spaces (observation, decision, and latent variables), internal predictive models, and a cognitive kernel formed by a memory system and a finite family of cognitive modules. Each cognitive module is defined by schema-typed domains and codomains, a family of duoidal workflows, a local success condition, and a logical signature given by a subduoidal structure of $\mathcal{O}_{Sch}$. The memory system is formalized as a category whose objects are memory subsystems, equipped with a presheaf $Data_M: Sch_{impl}^{op} \to \mathbf{V}$, a small monoidal category of operations $Ops_M$, and natural transformations $write_M$ and $read_M$. The $Body$ category specifies the physical interface with the environment, and together Mind and Body determine the SBL-agent layer.

On top of this, the SBL category is an object of the Agent Architectures Category $ArchCat$ that collects formalizations of heterogeneous agent paradigms. At the highest level, the $World$ Category models multi-agent and agent-environment interactions. Conceptually, the entire framework forms a weak hierarchical $n$-categorical system: the triad $(Sch_{syn}, Sch_{impl}, Sch_{sem})$ grounds the semantics of schemas; the duoidal and cognitive layers $(\mathcal{O}_{Sch}, CogMod, Mind)$ define the agent's internal computations; the embodied layer integrates Mind and Body within the SBL agents; and the upper layers ($ArchAgents, World$) extend the construction to architectural families and multi-agent settings.

\end{abstract}

\keywords{Category Theory \and Schema Theory \and AGI \and Causal Artificial Intelligence \and Causal Reinforcement Learning \and Cognitive Architecture \and Cognitive Development}

\newpage
\tableofcontents

\section{Introduction}

Schema-Based Learning (SBL) systems are characterized by \textbf{strong modularity, compositionality, and hierarchical organization in the information management}. Cognitive capabilities such as perception, prediction, decision-making, and others are typically implemented through \textbf{reusable schemas that can be combined, adapted, and reused across tasks and contexts}. Currently, there is no unified structural framework capable of expressing schema composition, agent-level cognition, and architectural variability within a single mathematical language. This limitation hinders principled comparisons with other agent paradigms such as Reinforcement Learning, Causal Reinforcement Learning, Active Inference, or Universal AI.

In this work, we propose a categorical framework for SBL organized as a hierarchy of different interconnected levels: schemas, workflows, minds and agents. Each level is constructed on top of the previous one through functorial or fibrational relationships, enabling both bottom-up and top-down reasoning about cognition, embodiment, and multi-agent interactions. This hierarchical organization provides a coherent structural view in which local cognitive mechanisms and global architectural properties can be analyzed within a common formal setting. This framework is leveraged with another comparative framework, that we are also working on, where we consider a formalism where different agents and types of architectures can be defined inside of different worlds and environment and lets us compare and measure different structural and semantic properties, from a theoretical and empirical perspective.  

At the \textbf{schema level}, we introduce a syntactic multicategory of schemas that serves as an abstract space for schema types and transformations. This syntactic layer is related, via functorial constructions, to a category of implemented schemas capturing concrete representational choices, and further to a semantic category based on stochastic morphisms that models empirical behavior with respect to the environment. This separation between syntax, implementation, and semantics allows schemas to be manipulated independently of their concrete realizations while preserving a precise connection to probabilistic models and implementations.

At the \textbf{workflow level}, the category of implemented schemas is equipped with a duoidal structure supporting sequential and parallel composition and other operations of schema transformations. This structure enables the formal definition of cognitive workflows that coordinate multiple schemas over time and across modalities. 
These workflows form part of the next level, acting on a categorical model of the agent's mind, whose objects integrate mental spaces, internal models, and a cognitive kernel comprising memory and cognitive modules. Cognitive modules are treated as structured morphisms that orchestrate schema workflows in order to perform well-defined cognitive tasks.
The highest level of this framework addresses the whole architecture of an SBL-agent. The \textbf{SBL architecture} is composed of two main parts, namely, the Body and the Mind.

The primary contribution of this paper is not a closed or finalized formal system, but a coherent structural program. By articulating SBL through a layered categorical framework, we provide a unifying language in which schemas, cognition, embodiment, and architectural variation can be expressed, composed, and compared. Several aspects of the framework, particularly those involving higher-categorical structures and multi-agent dynamics, are intentionally left open and are presented as directions for future work and some formalizations are left as work in progress.


\begin{figure}[ht!]
    \centering
    \includegraphics[width=0.5\textwidth]{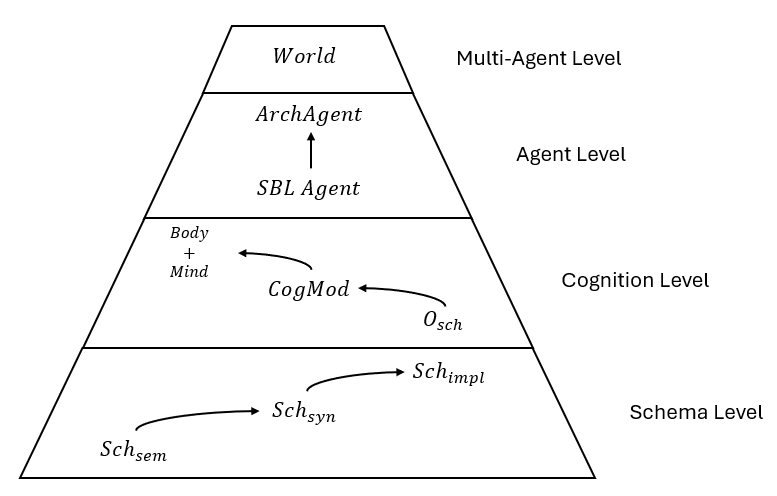}
    \caption{Category pyramid}
    \label{fig:Category pyramid}
\end{figure}

\subsection{Categorical formalization}
The following section summarizes the theoretical contents used along the formalization of the work.
We introduce a hierarchical categorical formalization composed of four interconnected levels. Each level builds upon the previous one, enabling both bottom–up and top–down analyses of agent structures and interactions.
The first level is concerned with schemas, representing the foundational algebraic layer of the overall framework.
The second level formalizes the SBL agent and its internal dynamics, constructed "on top of" the schema level.
Together, these four layers 
form a coherent categorical hierarchy through which the relations between embodiment, cognition, and multi-agent dynamics can be studied.

At the schema level, we build a \textbf{syntactic free multicategory}  $Sch_{syn}$ serving as an abstract sandbox for manipulating schemas, their subtypes and their transformations. 
An implementation functor $\mathcal{I}$ then maps syntactic schemas into objects within the implementation category $\textbf{Impl}$, where each object represents a concrete representational language and a set of implementation parameters. Through the \textbf{Grothendieck construction}, this functor induces the total category $Sch_{impl}$, capturing the interplay between schema syntax and their concrete realizations.
We also define the functor $Model$ that maps each implemented schema to the corresponding arrow in the kleisli of the Giry monad which is the representational model of the schema, and the instances presheaf that assign to each implemented schema its set of possible evaluated instances
Finally, a \textbf{semantic category} $Sch_{sem}$ is defined as a full \textbf{subcategory of a Kleisli} $\textbf{KL(G)}$ (or equivalently Stoch) category, where $\textbf{G}$ denotes the \textbf{Giry monad} on the category of measurable spaces and therefore morphisms correspond to Markov kernels.
Then, an interpretation functor  maps the implemented schemas in $Sch_{impl}$ to the semantic morphisms in $Sch_{sem}$, providing a categorical bridge from the implemented schemas in the agent architecture to the empirical semantics with respect to the environment.

At the SBL agent level, $Sch_{impl}$ is endowed with a \textbf{duoidal structure} $\mathcal{O}_{Sch}$, equipped with two compatible monoidal products $(\bullet, \otimes)$ and a natural transformation $\zeta$ satisfying the interchange law and ensuring the distributivity of $\bullet$ over $\otimes$. This structure enables the definition of complete workflows for schema transformations via basic cognitive (SBL) operators:
\begin{itemize}
    \item the product $\bullet$ encodes sequential composition of transformations, and
    \item $\otimes$ encodes parallel composition.
\end{itemize}
A \textbf{left duoidal action} of $Sch_{impl}$ on the $Mind$ category is then defined, allowing the evaluation of schema/SBL workflows on mental objects.

Then we define the $Mind$ category whose objects are triples comprising:
\begin{itemize}
    \item the mental spaces (observation, decision and hidden variables)
    \item the internal models (mainly predictive schemas)
    \item and the cognitive kernel, composed of a memory system and a cognitive system, this last being a finite family of cognitive modules.
\end{itemize}
Morphisms in the Mind category represent the execution of cognitive modules.
Each cognitive module is itself a structured unit that coordinates workflows of schema operators acting over implemented schemas with the goal of performing a cognitive task characterized by:
\begin{itemize}
    \item domain and codomain given by schema types of $Sch_{impl}$
    \item a family of workflows from the duoidal category over $\mathcal{O}_{Sch}$
    \item a local success condition, and
    \item a \textbf{logical signature}, defined as a subduoidal structure of $\mathcal{O}_{Sch}$, which allows connections with diverse categorical formulations.
\end{itemize}

The memory system is a category that define the sintax for dealing with memories (data, plans, beliefs, etc) and whose objects are memory subsystems, and whose morphirms are pair of a natural transformation and a functor. The memeory subsytems defines the template for storing, reading and writting memories:
\begin{itemize}
    \item A presheaf $Data_M:Sch_{impl}^{op} \rightarrow \mathbf{V}$ that assign to each implemented schema the structured storage of instances
    \item $Ops_M$ a small monoidal category whose objects are operations to act on $Data_M$
    \item two natural transformation $write_M$ and $read_M$ that determines the canonical interfaces for writing and reading data. 
\end{itemize}
Then, the $Body$ category, represents the physical interface of the SBL agent with its environment, enabling the definition of the $SBL$ category as the coupling between Body and Mind (to do...).

From a higher-categorical standpoint, the entire framework can be viewed as a \textbf{weak hierarchical n-category}, where each level is internally defined within the previous one through functorial or fibrational constructions.
At the lowest level, the semantic-syntactic-implementation triad $(Sch_{sem}, Sch_{syn},Sch_{impl})$ provides the categorical foundation for the interpretation, representation, and implementation of schemas, respectively.
Above this level, the operator and cognitive layers $(\mathcal{O}_{Sch},CogMod,Mind)$ define the internal transformations and duoidal actions that enable cognition and memory.
Next, the embodied layer $Mind+Body$ gives rise to the SBL agent category, which integrates cognition and embodiment.
Finally, the higher layers,..., forming a fibrational tower of categories that satisfies the conditions of a weak n-categorical system.
This perspective highlights the deep encapsulation and compositional coherence of the framework, allowing both bottom-up and top-down reasoning across all categorical levels.
\section{Design Principles}

The proposed Schema-based Learning (SBL) architecture is grounded on a small set of structural design principles (i.e. principles of organization) that govern how \textbf{knowledge is represented, manipulated, and learned} by an SBL agent. These principles are not tied to a specific learning algorithm or representational choice, but instead, define architectural constraints that shape the organization and dynamics of cognition. In this section, we summarize the core principles underlying the SBL framework.

\subsection{Modular, Hierarchical, and Factorized Knowledge Management}

A central SBL principle of organization is that knowledge and learning processes should be modular, hierarchical, and factorized in order to prevent \textbf{catastrophic forgetting/interference} when continuous learning and knowledge refactorization take place. Cognitive capabilities are not implemented as monolithic models, but as collections of schemas that encapsulate partial, reusable knowledge about the environment, the agent and their interactions. Each schema operates over a restricted domain and can be composed with other schemas to form more complex cognitive structures. Yet, changes in one schema should only affect, at most, with minimal changes in other schemas.

Modularity allows \textbf{schemas to be learned, adapted, and reused independently}. Hierarchical organization enables the construction of higher-level schemas and cognitive processes from lower-level ones, supporting abstraction and skill acquisition. Factorization ensures that learning and inference are decomposed across multiple interacting components rather than concentrated/dependent in a single global model. Together, these properties support scalability, interpretability, and incremental continuous learning.

\subsection{Compositionality and Reusability Across Scales}

The SBL architecture is designed to support compositionality across multiple scales. The same principles that govern schema composition also apply to cognitive workflows and agent-level behavior. This uniform compositional structure enables bottom-up \textbf{construction of complex behaviors from simple components}, as well as top-down analysis of global agent properties in terms of their constituent parts.
By adhering to these design principles, the SBL framework aims to provide a general, extensible foundation for building adaptive agents whose knowledge and learning processes are \textbf{structured, interpretable, and scalable}. These principles motivate the categorical formalization developed in the remainder of the paper.

\subsection{Separation Between Body and Mind}

Another fundamental principle is the explicit separation between Body and Mind. The Body represents the physical interface between the agent and its environment, including sensors, actuators, and low-level interaction dynamics. The Mind, by contrast, operates over internal representations derived from sensory data and is responsible for perception, prediction, decision-making, and learning.
This separation serves several purposes. First, it introduces an abstraction layer between raw sensory inputs and internal cognitive processes, allowing perception to be treated as a structured transformation rather than a direct reflection of the environment. Second, it \textbf{enables asynchrony} between the environment and internal cognition, that is, cognitive processes can operate, plan, or learn at temporal scales that are decoupled from the immediate sensorimotor interactions. Finally, it supports embodiment-independence, allowing the same cognitive architecture to be deployed across different physical bodies and therefore possibly enabling \textbf{transfer learning} through different agents.

\subsection{Cognitive Modules as Structural Units of Cognition}

Within the Mind, cognition is organized through cognitive modules. Each cognitive module is a structured unit responsible for a specific cognitive function, such as prediction, evaluation, control, or abstraction. Modules do not directly store knowledge; instead, they orchestrate transformations over schemas through well-defined workflows.

This modularization of cognition enables structural factorization at the level of cognitive processes. Different modules can operate concurrently, reuse shared schemas, and interact through composition rather than through global state updates. Importantly, cognitive modules provide the primary locus for learning dynamics, as they can adapt their internal workflows, success conditions, or schema selections over time and its a first step to enable the study of retrospective learning, where the agent not only learn about events and process of the environment but also over procedures and signals of its own mind.

\subsection{Explicit Differentiation Between Memory and Cognition}

SBL makes a strict distinction between memory and cognition. Memory is responsible for storing structured information such as data, experiences, beliefs, plans, emotions, etc. Cognition, by contrast, is responsible for acting on that information through inference, learning, and decision-making processes.

This separation avoids conflating storage with computation. Memory is treated as a passive but structured substrate, while cognitive modules are active processes that read from and write to memory through explicit interfaces. This design supports clarity of roles, facilitates the integration of different memory subsystems, and allows learning mechanisms to be expressed as transformations over memory rather than as implicit parameter updates.

\subsection{Architectural Requirements for AGI-Oriented Agents}

The design principles outlined above are not proposed as arbitrary architectural choices, but as necessary structural conditions for agents aiming at Artificial General Intelligence (AGI). We adopt the position that general intelligence cannot be achieved solely through improved optimization or scaling of monolithic models, but requires an architecture capable of managing knowledge and learning in a modular, hierarchical, and factorized manner, while maintaining a clear separation between embodiment and cognition.

Within this architectural perspective, we further identify a set of cognitive capabilities that we consider essential for AGI-oriented agents. These capabilities are not assumed to be present in the current framework, but instead serve as guiding targets for its progressive development and validation. Importantly, our objective is not to design the most performant or specialized algorithms for each individual capability, but rather to develop sufficiently functional mechanisms that can coexist and operate coherently within a single agent integrating all of them simultaneously.

In particular, we focus on the following five capabilities:
\begin{itemize}
    \item \textbf{Causal discovery}, understood as the ability to actively infer causal structure from interaction with the environment rather than relying on fixed or predefined causal models.
    \item \textbf{Latent variable learning}, enabling the agent to introduce, adapt, and reason about hidden variables that are not directly observable but are required to explain observed regularities.
    \item \textbf{Macroaction learning}, allowing the agent to construct higher-level actions, policies and plannings by composing and abstracting over lower-level behaviors.
    \item \textbf{Concept learning}, understood as the formation of abstract, reusable representations that capture invariant structure across experiences and tasks.
    \item \textbf{Goal adaptation}, referring to the ability to modify, refine, or generate new objectives in response to changes in the environment, internal state, or accumulated knowledge.
\end{itemize}

We view these capabilities as emergent properties that must be supported by the underlying architecture rather than as isolated algorithmic components. The modularity, compositionality, and explicit separation of cognitive functions provided by the SBL framework are intended to create the structural conditions under which such capabilities can be meaningfully integrated and studied. Determining how these capacities can be systematically realized within the proposed architecture constitutes a central direction for future work.
\section{Schema level}

\subsection{The syntactic layer of schemas $Sch_{syn}$}
This category represents the syntactic layer of the schemas, that is, the algebraic category where schemas are treated as abstract entities and the fundamental cognitive operators act as formal generators of transformations and combinations on the schemas.  
Formally, we describe this syntactic structure as a free multicategory, generated by a finite family of schema types and a collection of fundamental (SBL) operators.

\subsubsection{Definition of objects and fundamental operators}

\begin{definition}[Objects of $Sch_{syn}$]
    The objects of $Sch_{syn}$ are the abstract schemas types, that is, the schemas that belong to a certain family/type within a finite set 
    \[Ob(Sch_{syn})=\{S_P, S_M, S_G, S_{Pred}, S_{Abs}\}\]
    
    Each schema type is determined by the pair of measurable spaces that determine the domain and codomain:
    \begin{itemize}
        \item \textbf{$S_P$}: The domain is the space of sensors $\mathcal{S}$ and the codomain is the space of observation variables $\mathcal{P}(O)$.
        \item \textbf{$S_M$}: The domain is the space of decisions $D$ and the codomain is the space of effectors $\mathcal{P}(\mathcal{E})$.
        \item \textbf{$S_G$}: The domain is the product space of observation and decision variables $O \times A$ and the codomain is the space of real numbers $\mathcal{P}(\mathbb{R})$.
        \item $S_{Pred}$: The domain is, in the most general case, a product of mental domains such as observations $O$, decisions $D$, hidden variables $H$, (and goals $G$?):
        \[Dom(S_{Pred}) \subseteq O \times D \times H \times G,\]
        and its codomain is a prediction domain
        \[Cod(S_{Pred}) = \mathcal{P}(Pred),\]
        where $\mathcal{P}(Pred)$ denotes a measurable space of inferred quantities. 
        Hence, a predictive schema represents a syntactic morphism
        \[\psi_{pred}: T_{\mathrm{dom}} \xrightarrow{} T_{\mathrm{pred}},\]
        with $T_{\mathrm{dom}}, T_{\mathrm{pred}}$ drawn from the set of mental domains  $\{O,D,H,G\}$ and their finite products. 
        This type of schema is intended to capture inferential models of any kind, predictive, causal, diagnostic, or abductive, whose syntactic form encodes how information from one part of the mind can be used to infer or anticipate the state of another. 

        \item \textbf{$S_{Abs}$}: The abstract schema type encompasses higher-order schemas obtained through transformation over the atomic schemas, that are the ones presented previously. 
        These schemas act as syntactic composites, integrating multiple schema types into a single structure. 
        For instance, a schema molecule may combine one schema of each atomic type to represent a complete and coherent perception, action, prediction and evaluation loop. 
        Some subtypes of abstract schemas can be described depending on the atomic schemas that compose it. For example, an abstract schema of atomic motor schemas would be an Coordinated Control Program (CCP) or macroaction schema, whereas an abstract schemas formed by perceptual schemas would be a macro-observation schema.

        We are still working on a better formalization of the schema an how it fits with the fundamental operators. Therefore the definitions of the fundamental operators will not take in account the existence of the abstract schema. See \ref{sect:WIP Schema level} for more information about the work in progress related with the abstract schema. 
        
    \end{itemize}
    
\end{definition}

\begin{remark}[Subtypes and duality of predictive schemas]
    Depending on the specific choice of domain and codomain, different subtypes of predictive schemas can be identified within $S_{Pred}$. 
    For instance:
    \begin{itemize}
        \item Schemas with domain $O\times D\times H$ and codomain $\mathcal{P}(O \times H)$ represent forward causal models, that is, models that anticipate future or hidden observations from the current state.
        \item Schemas with domain $O\times D$ and codomain $\mathcal{P}(H)$ correspond to causal inference models, that is, models that estimate latent causes from observed and controlled variables.
        \item Schemas with domain $D$ and codomain $\mathcal{P}(G)$ represent action evaluation models, mapping each decision $d \in D$ to a probability distribution over some goal schemas. For instance, a predictive schema for estimating the expected value of each possible decision with respect to the agent's goals, such as the estimation of the action-value functions $Q(d)$ in reinforcement learning.
    \end{itemize}

    These subtypes arise purely from the syntactic configuration of domains and codomains within $Sch_{syn}$, and they remain agnostic to the specific semantic interpretation or learning process applied later in the agent.
    Moreover, we introduce the notion of \textbf{duality} relation between predictive schemas. 
    That is, given two predictive schemas
    \[\psi_1 : T_{\mathrm{dom}}^{(1)} \to T_{\mathrm{pred}}^{(1)}
    \quad\text{and}\quad
    \psi_2 : T_{\mathrm{dom}}^{(2)} \to T_{\mathrm{pred}}^{(2)},\]
    we say that $\psi_1$ and $\psi_2$ are dual, and write $\psi_1^*=  \psi_2$, if and only if
    \[T_{\mathrm{dom}}^{(1)} = T_{\mathrm{pred}}^{(2)} 
    \quad\text{and}\quad T_{\mathrm{pred}}^{(1)} = T_{\mathrm{dom}}^{(2)}.\]
    Intuitively, two schemas are dual when they encode reciprocal inferential directions between the same pair of variable domains: one predicts how a change in $T_{dom}$ affects $T_{pred}$ while its dual infers which configuration of $T_{dom}$ would lead to a desired value of $T_{pred}$.
    For example, a predictive schema that maps a decision command $d \in D$ to the expected next observation $o\in O$
    \[\phi_f: D\xrightarrow{} O\]
    whereas its dual schema
    \[\phi_f^*: O\xrightarrow{} D\]
    infers the decision $d \in D$ that would produce a hypothetical next observation $o \in O$
\end{remark}

\begin{definition}[Fundamental operators]
    The syntactic transformations between schemas are generated by the set of fundamental operators:
    \[\mathcal{O}=\{\varnothing,Comb, Encap, Ref, Ctx, Add, Del\}\]
    Each operator in $\mathcal{O}$ is a formal arrow of arity $k\geq1$, whose arity depends on the cognitive nature of the operation:
    \[\begin{array}{ll}
        Comb_{\parallel},\, Comb_{\rightarrow},\, Encap &: k=2,\\[3pt]
        Ref &: k=1,\\[3pt]
        Ctx &: k\ge1,\\[3pt]
        Add,\,Del &: k\ge1,\\[3pt]
        \varnothing &: k=1.
        \end{array}
        \]
        
    Although the operators are treated formally, their intuitive meaning corresponds to cognitive operations on schemas:
    \begin{itemize}
            \item $\varnothing$  is the null operator (identity). 
            \item $Comb$ is an operator that enables the serial or parallel combination of two schemas, preserving the information of both schemas.
            \item $Encap$ is the operator that allows to create an abstract schema from two schemas, where these schemas could be seen as instances or specializations from the created schema.
            \item $Ref$ is the operator of refactorization, that is, the operator that, given a schema,  it decomposes it into two smaller/more specific schemas. 
            \item $Ctx$ is the operator that imposes a condition to a schema, that is, it modulates the activation of a schema depending on whether other schemas meet the specified condition. In this way, the operator transforms a schema, into a conditioned schema.
            \item $Add$ is the operator that extends a set of schemas with a new schema.
            \item $Del$ is the operator that deletes a specific schema from a set of schemas.
    \end{itemize}
    We formalize each operator bellow.
\end{definition}
\subsubsection{Definition of $Sch_{syn}$ as a free multicategory}
\begin{definition}[Free multicategory of schemas]
Let $\Sigma=(\mathrm{Ob},\mathcal{O})$ be the signature formed by the set of schema objects and the fundamental operators.
We define the syntactic category of schemas as the
free multicategory
\[\mathbf{Sch}_{\mathrm{syn}} := \mathrm{FreeMulticat}(\Sigma)\]
This means that:
    \begin{itemize}
        \item Every morphism (or multimorphism) in $\mathbf{Sch}_{\mathrm{syn}}$ is a formal composite of operators in $\mathcal{O}$, obtained by serial and parallel composition and by permutation of inputs.
        \item Composition is associative and unital.
        \item No additional equations are imposed between composites, except those required by the multicategory axioms.
    \end{itemize}
\end{definition}

\begin{remark}[Identities and composition]
For every object $\psi\in\mathbf{Sch}_{\mathrm{syn}}$ there exists an identity morphism
\[
id_\psi = \varnothing_\psi,
\]
and if $f:(\psi_1,\ldots,\psi_n)\to\phi$ and 
$g:(\phi,\psi_{n+1},\ldots,\psi_{n+m})\to\omega$
are multimorphisms, their composite is given by substitution,
\[
g\circ_i f : (\psi_1,\ldots,\psi_n,\psi_{n+1},\ldots,\psi_{n+m})\to\omega,
\]
fulfilling associativity, unitality, and equivariance with respect to input permutations.
\end{remark}
\subsubsection{Formal specification of fundamental operators}
\begin{definition}[Null operator]
The null operator, denoted by $\varnothing$, is a unary generator in $Sch_{syn}$ of the form
\[\varnothing_\psi : (\psi) \xrightarrow{} \psi,\]
which represents the neutral transformation that leaves any schema unchanged.
Formally, $\varnothing$ acts as the \textbf{identity morphism} for every object $\psi\in Sch_{syn}$ and satisfies:
\begin{enumerate}
    \item \textbf{Left and right identity:} for every multimorphism $f:(\psi_1,\ldots,\psi_n)\to\phi$, 
    \[
    f\circ_i \varnothing_{\psi_i} = f = \varnothing_\phi \circ f;
    \]
    \item \textbf{Idempotence:} $\varnothing_\psi \circ \varnothing_\psi = \varnothing_\psi$;
\end{enumerate}
Intuitively, $\varnothing$ encodes the absence of transformation or the persistence of a schema in its current form.
\end{definition}

\begin{definition}[Parallel combination operator]
The parallel combination operator, denoted by $Comb_{\parallel}$, is a binary generator in $Sch_{syn}$ of the form
\[Comb_{\parallel} : (\psi_1, \psi_2) \xrightarrow{} \Psi,\]
representing the cognitive operation that merges two schemas side by side, preserving the information of both. 
Its output $\Psi$ is a compound schema that integrates the domains and codomains of the inputs:
\[
\Psi : (D_1 \times D_2) \longrightarrow (C_1 \times C_2),
\]
whenever $\psi_1:D_1\to C_1$ and $\psi_2:D_2\to C_2$ are of compatible schema types.
This operator satisfies the following algebraic properties:
\begin{enumerate}
    \item \textbf{Symmetry:} $Comb_{\parallel}(\psi_1,\psi_2)=Comb_{\parallel}(\psi_2,\psi_1)$;
    \item \textbf{Associativity:} $Comb_{\parallel}(Comb_{\parallel}(\psi_1,\psi_2),\psi_3)
    =Comb_{\parallel}(\psi_1,Comb_{\parallel}(\psi_2,\psi_3))$;
    \item \textbf{Identity:} $Comb_{\parallel}(\psi,\varnothing)=\psi=Comb_{\parallel}(\varnothing,\psi)$;
    \item \textbf{Closure:} the resulting schema $\Psi$ belongs to $\mathrm{Ob}(\mathbf{Sch}_{\mathrm{syn}})$.
\end{enumerate}

Intuitively, $Comb_{\parallel}$ expresses the coexistence or concurrent activation of two schemas.
It is not interpreted as a categorical tensor product, but as a syntactic operator describing information-preserving aggregation.
\end{definition}

\begin{definition}[Serial combination operator]
The serial combination operator, denoted by $Comb_{\rightarrow}$, is a binary generator in $Sch_{syn}$ of the form
\[
Comb_{\rightarrow} : (\psi_1, \psi_2) \xrightarrow{} \Psi,
\]
representing the ordered combination of two schemas, where the activation of $\psi_1$ precedes that of $\psi_2$. 
The resulting schema
\[
\Psi : (D_1 \times D_2) \longrightarrow (C_1 \times C_2)
\]
encodes a sequential or temporally structured composition.
This operator satisfies:
\begin{enumerate}
    \item \textbf{Non-commutativity:} $Comb_{\rightarrow}(\psi_1,\psi_2)\neq Comb_{\rightarrow}(\psi_2,\psi_1)$;
    \item \textbf{Associativity:} $Comb_{\rightarrow}(\psi_1,Comb_{\rightarrow}(\psi_2,\psi_3))
    =Comb_{\rightarrow}(Comb_{\rightarrow}(\psi_1,\psi_2),\psi_3);$
    \item \textbf{Identity:} $Comb_{\rightarrow}(\psi,\varnothing)=\psi=Comb_{\rightarrow}(\varnothing,\psi)$;
    \item \textbf{Closure:} the resulting schema $\Psi$ belongs to $\mathrm{Ob}(\mathbf{Sch}_{\mathrm{syn}})$.
\end{enumerate}

Importantly, $Comb_{\rightarrow}$ does not represent the categorical composition of morphisms but a syntactic ordering relation between schemas.
Its interpretation in the semantic category will correspond to temporal or causal sequencing.
\end{definition}

\begin{definition}[Encapsulation operator]
The encapsulation operator, denoted by $Encap$, is a binary generator in $Sch_{syn}$ of the form
\[Encap: (\psi_1, \psi_2) \xrightarrow{} \Psi,\]
representing the abstraction of two schemas into a single, more general schema $\Psi$ that subsumes the informational content of both inputs.
Formally, we introduce a syntactic relation of specialization
\[\psi_i \preceq \Psi \qquad (i=1,2)\]
to express that each input schema can be recovered as a particular case or marginal projection of the encapsulated schema.
The operator satisfies the following algebraic properties:
\begin{enumerate}
    \item \textbf{Symmetry:} $Encap(\psi_1,\psi_2) = Encap(\psi_2,\psi_1);$
    \item \textbf{Idempotence:} $Encap(\psi,\psi)=\psi;$
    \item \textbf{Weak associativity:} 
    $Encap(\psi_1,Encap(\psi_2,\psi_3)) \approx 
    Encap(Encap(\psi_1,\psi_2),\psi_3),$
    where $\approx$ denotes informational equivalence;
    \item \textbf{Closure:} the output $\Psi$ is an object in $Ob(Sch_{syn})$.
\end{enumerate}

Intuitively, $Encap$ acts as an abstraction operator, it constructs a minimal schema $\Psi$ that contains both $\psi_1$ and $\psi_2$ as specializations, preserving their information but not distinguishing them completely. 
The operation induces a partial order $(Sch_{syn},\preceq)$ of specialization among schemas, where $\psi\preceq\phi$ whenever $\phi$ encapsulates $\psi$.
\end{definition}

\begin{definition}[Refactorization operator]
The refactorization operator, denoted by $Ref$, is a unary generator in $Sch_{syn}$ of the form
\[Ref : (\Psi) \xrightarrow{} (\psi_1, \psi_2),\]
which decomposes a composite or abstract schema $\Psi$ into two more specific schemas $\psi_1$ and $\psi_2$ satisfying
\[\psi_1, \psi_2 \preceq \Psi.\]
This expresses that each component schema specializes the original one, capturing different substructures or functional aspects of $\Psi$.
The operator satisfies the following algebraic properties:
\begin{enumerate}
    \item \textbf{Weak duality with encapsulation:} 
    \[
    Encap(Ref(\Psi)) \approx \Psi,
    \qquad
    Ref(Encap(\psi_1,\psi_2)) \approx (\psi_1,\psi_2),
    \]
    where $\approx$ denotes equivalence up to information preservation;
    \item \textbf{Weak idempotence:}
    \[
    Ref(Ref(\Psi)) \approx Ref(\Psi);
    \]
    \item \textbf{Coherent decomposition:} 
    the domains and codomains of $\psi_1$ and $\psi_2$ form consistent subspaces of those of $\Psi$, i.e.
    \[
    D_{\psi_1} \times D_{\psi_2} \subseteq D_\Psi,
    \qquad
    C_{\psi_1} \times C_{\psi_2} \subseteq C_\Psi;
    \]
    \item \textbf{Closure:} both $\psi_1$ and $\psi_2$ belong to $\mathrm{Ob}(\mathbf{Sch}_{\mathrm{syn}})$.
\end{enumerate}

Intuitively, $Ref$ performs the inverse operation of $Encap$: it divides an abstract or compound schema into two more specific components that together recover the informational content of the original one.  
It can be seen as a syntactic mechanism of structural decomposition or modular analysis of schemas.
\end{definition}

\begin{definition}[Context operator]
The context operator, denoted by $Ctx$, is an $(n+1)$-ary generator in $Sch_{syn}$ of the form
\[Ctx : (\psi, \phi_1, \ldots, \phi_n) \xrightarrow{} \psi_{[\phi_1 \wedge \cdots \wedge \phi_n]},\]
which produces a contextualized schema $\psi_{[\phi_1 \wedge \cdots \wedge \phi_n]}$ obtained by conditioning the activation of the schema $\psi$ on the conjunction of a finite family of contextual schemas $(\phi_1,\ldots,\phi_n)$.
To express the conditioning formally, we associate to each schema $\phi_i$ a syntactic predicate
\[\mathsf{Cond}(\phi_i)\]
representing the logical condition under which $\phi_i$ is considered active.
Then, the contextualized schema is defined syntactically by
\[\psi_{[\phi_1 \wedge \cdots \wedge \phi_n]} :=
\begin{cases}
\psi, & \text{if } \bigwedge_{i=1}^{n} \mathsf{Cond}(\phi_i) = \text{true}, \\[4pt]
\varnothing_\psi, & \text{otherwise.}
\end{cases}\]
This definition remains syntactic; the actual evaluation of $\mathsf{Cond}(\phi_i)$ will be given later by the semantic functor.
The operator satisfies the following laws:
\begin{enumerate}
    \item \textbf{Neutrality:} contextualizing with the empty context leaves a schema unchanged,
    \[Ctx(\psi;\{\}) = \psi;\]
    \item \textbf{Idempotence:} contextualizing an already contextualized schema with the same context has no effect,
    \[Ctx(\psi_{[\Phi]}; \Phi) = \psi_{[\Phi]};\]
    \item \textbf{Inclusion:} if $\Phi_1 \subseteq \Phi_2$, then
    \[\psi_{[\Phi_2]} \preceq \psi_{[\Phi_1]},\]
    expressing that activation under a more restrictive context implies activation under any more relaxed one;
    \item \textbf{Additivity of contexts:}
    \[Ctx(Ctx(\psi; \Phi_1); \Phi_2)= Ctx(\psi; \Phi_1 \cup \Phi_2);\]
    \item \textbf{Closure:} $\psi_{[\Phi]}$ is an object in $Ob(Sch_{syn})$ for any finite set $\Phi$ of contextual schemas.
\end{enumerate}

Intuitively, $Ctx$ expresses the dependency of one schema’s activation on the joint satisfaction of a set of other schemas.
It formalizes contextual modulation: $\psi$ is “active” only when all $\phi_i$ in the context hold.
\end{definition}

\begin{definition}[Add operator]
The add operator, denoted by $Add$, is a variadic generator in 
$Sch_{syn}$ of the form
\[Add : (\Sigma, \psi) \longrightarrow \Sigma',\]
where $\Sigma = \{\psi_1, \ldots, \psi_n\}$ is a finite set of schemas and  $\Sigma' = \Sigma \cup \{\psi\}$.

Intuitively, $Add$ extends a collection of schemas with a new one, representing the formation of a larger structured set of active schemas.

Given $\psi \in Ob(Sch_{syn})$ and $\Sigma \subseteq Ob(Sch_{syn})$ finite,
\[Add(\Sigma, \psi) := \Sigma \cup \{\psi\},\]
and we interpret the morphism
\[Add_\Sigma : (\psi_1, \ldots, \psi_n, \psi) \xrightarrow{} \Sigma \cup \{\psi\}\]
as the syntactic constructor of the extended schema families.
The operator satisfies:
\begin{enumerate}
    \item \textbf{Neutrality:} adding the null schema leaves the collection unchanged,
    \[Add(\Sigma, \varnothing) = \Sigma;\]
    \item \textbf{Idempotence:} re-adding an already present schema has no effect,
    \[\psi \in \Sigma \implies Add(\Sigma, \psi) = \Sigma;\]
    \item \textbf{Commutativity:} the result is invariant under permutation of inputs,
    \[Add(Add(\Sigma, \psi_1), \psi_2)= Add(Add(\Sigma, \psi_2), \psi_1);\]
    \item \textbf{Associativity:} extending a set sequentially is equivalent to extending it by the union of new schemas,
    \[Add(Add(\Sigma, \psi_1), \psi_2)
    = Add(\Sigma, \{\psi_1,\psi_2\});\]
    \item \textbf{Closure:} $\Sigma'$ is an admissible object of $Sch_{syn}$ for every finite $\Sigma$. 
\end{enumerate}

The $Add$ operator formalizes the constructive accumulation of schemas, it expresses syntactic combination without compositional interaction.
\end{definition}

\begin{definition}[Delete operator]
The delete operator, denoted by $Del$, is a variadic generator in $Sch_{syn}$ of the form
\[Del : (\Sigma, \psi) \longrightarrow \Sigma',\]
where $\Sigma = \{\psi_1, \ldots, \psi_n\}$ is a finite set of schemas and $\Sigma' = \Sigma \setminus \{\psi\}$.

Intuitively, $Del$ removes a schema from a collection, representing the syntactic elimination or deactivation of a schema within a family.

For any finite $\Sigma \subseteq Ob(Sch_{syn})$ and schema $\psi$,
\[Del(\Sigma, \psi) := \Sigma \setminus \{\psi\},\]
and the corresponding multimorphism
\[Del_\Sigma : (\psi_1, \ldots, \psi_n, \psi) \longrightarrow \Sigma \setminus \{\psi\}\]
constructs the reduced schema family.
The operator satisfies:
\begin{enumerate}
    \item \textbf{Neutrality:} deleting the null schema leaves the collection unchanged,
    \[Del(\Sigma, \varnothing) = \Sigma;\]
    \item \textbf{Idempotence:} deleting a schema not contained in the collection has no effect,
    \[\psi \notin \Sigma \implies Del(\Sigma, \psi) = \Sigma;\]
    \item \textbf{Commutativity:} the order of deletions does not affect the result,
    \[Del(Del(\Sigma, \psi_1), \psi_2) = Del(Del(\Sigma, \psi_2), \psi_1);\]
    \item \textbf{Associativity:} sequential deletions correspond to the deletion of the union of all removed schemas,
    \[Del(Del(\Sigma, \psi_1), \psi_2)= Del(\Sigma, \{\psi_1,\psi_2\});\]
    \item \textbf{Absorption:} deleting all elements of a set yields the empty schema collection,
    \[Del(\Sigma, \Sigma) = \emptyset;\]
    \item \textbf{Duality:} $Del$ is the formal dual of $Add$, satisfying
    \[Del(Add(\Sigma, \psi), \psi) = \Sigma,
    \quad
    Add(Del(\Sigma, \psi), \psi) = \Sigma;\]
    \item \textbf{Closure:} $\Sigma'$ is an admissible object of $Sch_{syn}$ for every finite $\Sigma$.
\end{enumerate}

The $Del$ operator captures the syntactic counterpart of schema deactivation or forgetting.
\end{definition}

\subsection{The implementation layer of schemas $Sch_{impl}$}
In this section, we will define a functor between the category $Sch_{syn}$ and $\textbf{Impl}$. The main reason is to assign to each schema a certain implementation model. In this way, we can continue working with schemas in an abstract manner, bearing in mind that each one can subsequently be implemented in different ways or using different implementation models. Finally, we will use the Grothendieck constructor to create the total category that combines the category of schemas with the representation of its implementation, and where two new operators  arise: $Upd$ and $Transf$, apart from the extension of the rest of the fundamental operators previously defined.
\begin{definition}[The category of Implementations]
The category $\mathbf{Impl}$ of concrete model implementations is defined as the Grothendieck construction
\[\mathbf{Impl} := \int \mathsf{Par} \xrightarrow{} \mathbf{Lang},\]
where:
\begin{itemize}
    \item $\mathbf{Lang}$ is the category of representation languages, whose objects are modelling paradigms
    (e.g. neural networks, symbolic systems, fuzzy logics, probabilistic programs, etc.)
    and whose morphisms are language embeddings or translations between paradigms.
    \item $\mathsf{Par} : \mathbf{Lang} \to \mathbf{Cat}$ assigns to each language $L$ the category $\mathsf{Par}(L)$
    of measurable parameter spaces associated with that language.
    Each object $\Theta \in \mathsf{Par}(L)$ corresponds to the parameter space of a specific model type within $L$, for instance, in the Language of neural networks, it would be the set of all weights and biases of a neural architecture.
\end{itemize}
In section \ref{sect:WIP Schema level} we details the work in progress about this category and discuss about its formalization
\end{definition}

\begin{definition}[Implementation functor]
Let 
\[\mathcal{I} : Sch_{syn} \xrightarrow{} \mathbf{Impl}\]
be the implementation functor, which assigns to each syntactic schema a concrete model implementation described by a representation language and a measurable parameter space.\\
For each schema $\psi \in Ob(Sch_{syn})$, 
\[\mathcal{I}(\psi) = (L_\psi, \Theta_\psi)\]
where:
\begin{itemize}
    \item $L_\psi \in Ob(\mathbf{Lang})$ is the representation language chosen to implement $\psi$;
    \item $\Theta_\psi \in \mathsf{Par}(L_\psi)$ is the measurable space of parameters corresponding to a particular model structure or specification  within that language.
\end{itemize}
\vspace{2mm}
For each operator (morphism) $f : \psi \to \phi$ in ${Sch}_{syn}$, the functor acts as
\[\mathcal{I}(f) = (\ell_f, f_\Theta),\]
where:
\begin{itemize}
    \item $\ell_f : L_\psi \to L_\phi$ is a morphism in $\mathbf{Lang}$ that translates between representation languages;
    \item $f_\Theta : \Theta_\psi \to \mathsf{Par}(\ell_f)(\Theta_\phi)$
    is a measurable transformation describing how the operator $f$ induces a transformation between parameter spaces, for instance, transferring, initializing or adapting model parameters.
\end{itemize}

$\mathcal{I}$ preserves identities and composition:
\[
\mathcal{I}(\mathrm{id}_\psi) = (\mathrm{id}_{L_\psi}, \mathrm{id}_{\Theta_\psi}), 
\qquad
\mathcal{I}(g \circ f) = \mathcal{I}(g) \circ \mathcal{I}(f),
\]
ensuring coherence between syntactic transformations of schemas and the transformations of their implementations.

$\mathcal{I}$ establishes the bridge between the syntactic schema level and the implementation level. Each schema acquires an implementation defined by both a modeling language and a parameter space,
enabling the agent to represent, train and adapt that schema within its chosen representational paradigm.
\end{definition}

\begin{definition}[Total schemas category $Sch_{impl}$]
Let $\mathcal{I} : \mathbf{Sch}_{\mathrm{syn}} \to \mathbf{Impl}$ be the implementation functor.
The schemas category, denoted by
\[
Sch_{impl}:=\int \mathcal{I},
\]
is the Grothendieck total category associated to $\mathcal{I}$, whose objects and morphisms are defined as follows:

\begin{itemize}
    \item \textbf{Objects}
    An object in $Sch_{impl}$ is a pair
    \[(\psi, (L_\psi, \theta)),\]
    where $\psi \in Ob(Sch_{syn})$ is a syntactic schema,
    $L_\psi \in Ob(\mathbf{Lang})$ is its representation language,
    and $\theta \in \Theta_\psi \in \mathsf{Par}(L_\psi)$ is a measurable parameter configuration
    describing a concrete implementation of the schema in that language.
    \item \textbf{Morphisms}
    A morphism
    \[(f, \eta) : (\psi, (L_\psi, \theta)) \longrightarrow (\phi, (L_\phi, \theta'))\]
    consists of:
    \begin{itemize}
        \item a syntactic morphism $f : \psi \to \phi$ in $\mathbf{Sch}_{\mathrm{syn}}$;
        \item a morphism in $\mathbf{Impl}$ of the form
        \[\eta = (\ell_f, f_\Theta) :
        (L_\psi, \Theta_\psi) \longrightarrow (L_\phi, \Theta_\phi),\]
        such that
        \[f_\Theta(\theta) = \mathsf{Par}(\ell_f)(\theta'),\]
         that is, the parameter $\theta$ is transformed consistently with the underlying implementation morphism induced by $f$.
    \end{itemize}
    \item \textbf{Identities}
    For each object $(\psi, (L_\psi, \theta))$, the identity morphism is
    \[\mathrm{id}_{(\psi, (L_\psi, \theta))} =
    (\mathrm{id}_\psi, (\mathrm{id}_{L_\psi}, \mathrm{id}_{\Theta_\psi})).\]
    \item \textbf{Composition}
    Given morphisms
    \[(f, \eta_f) : (\psi, (L_\psi, \theta)) \to (\phi, (L_\phi, \theta'))
    \quad\text{and}\quad
    (g, \eta_g) : (\phi, (L_\phi, \theta')) \to (\xi, (L_\xi, \theta'')),\]
    their composition is defined componentwise by
    \[(g, \eta_g) \circ (f, \eta_f)
    := (g \circ f,\, \eta_g \circ \eta_f),\]
    with
    \[\eta_g \circ \eta_f =
    (\ell_g \circ \ell_f,\, g_\Theta \circ f_\Theta),\]
    and $\mathcal{I}$ being a functor ensures the coherence condition
    \[g_\Theta(f_\Theta(\theta))
    = \mathsf{Par}(\ell_g \circ \ell_f)(\theta'').\]
\end{itemize}

Thus, the total category $Sch_{impl}$ gathers all syntactic schemas together with their concrete implementations.
Each object represents a schema implemented in a specific representation language and parameter configuration,
and each morphism represents both a structural schema transformation and the corresponding transformation in the implementation space.
\end{definition}

\begin{definition}[Vertical operators in $Sch_{impl}$]
Let 
\[\pi: Sch_{impl} \xrightarrow{} Sch_{syn}\]
be the canonical projection from the total category to the syntactic category of schemas. The morphisms of $Sch_{impl}$ satisfying $\pi(f) = id_\psi$ for some $\psi \in Sch_{syn}$ are called vertical morphisms. 
These morphisms act exclusively on the implementation components, keeping the syntactic schema fixed.
Among them, we distinguish two fundamental vertical operators:
\[Transform, \ Update \in \mathrm{Hom}_{Sch_{impl}}^{vert}.\]

\begin{enumerate}
 
    \item \textbf{Transformation operator}\\
    For each schema $\psi \in Ob(Sch_{syn})$,
    the transformation operator is defined as a vertical morphism of the form
    \[Transform_\psi : (\psi, (L, \theta)) \longrightarrow (\psi, (L', \theta')),\]
    where $L, L' \in Ob(\mathbf{Lang})$ are two representation languages and 
    $\theta \in \Theta_L, \theta' \in \Theta_{L'}$ are measurable parameter configurations.
    Formally, it is represented as the pair
    \[\mathrm{Transform}_\psi = (\mathrm{id}_\psi, (\ell, t_\Theta)),\]
    where:
    \begin{itemize}
        \item $\ell : L \to L'$ is a morphism in $\mathbf{Lang}$ representing a language translation or re-encoding,
        \item $t_\Theta : \Theta_L \to \mathsf{Par}(\ell)(\Theta_{L'})$ is a measurable transformation of parameters compatible with $\ell$.
    \end{itemize}
    Intuitively, $Transform_\psi$ expresses a change of implementation paradigm while preserving the syntactic structure of $\psi$.

    \item \textbf{Update operator}\\
    For each schema $\psi \in Ob(Sch_{syn})$ and fixed language $L_\psi$,
    the update operator is a vertical morphism of the form
    \[Update_\psi : (\psi, (L_\psi, \theta)) \longrightarrow (\psi, (L_\psi, \theta')),\]
    represented as
    \[Update_\psi = (id_\psi, (id_{L_\psi}, u_\Theta)),
    \]
    where $u_\Theta : \Theta_\psi \to \Theta_\psi$ is a measurable transformation 
    that modifies the parameters of the implementation.
    
    The operator satisfies:
    \begin{enumerate}
        \item \textbf{Idempotence:} applying the same update twice with identical rule has no effect, 
        \[Update_\psi^2 = Update_\psi;\]
        \item \textbf{Compositionality:} sequential updates correspond to the composition of parameter transformations,
        \[Update_\psi^{(2)} \circ Update_\psi^{(1)} 
        = (id_\psi, (id_{L_\psi}, u_\Theta^{(2)} \circ u_\Theta^{(1)}));
        \]
        \item \textbf{Closure:} the resulting object $(\psi, (L_\psi, \theta'))$ remains in the same fibre $\pi^{-1}(\psi)$.
    \end{enumerate}
\end{enumerate}
\end{definition}

\begin{remark}[Vertical–horizontal interaction.]
Vertical operators act within the fibers of $\pi$, while horizontal morphisms 
(those generated from the fundamental operators of the syntactic part) connect different fibres. These two types of morphisms satisfy the standard composition laws of a fibred category:
\[
\begin{aligned}
& \text{(Vertical–vertical)} & 
(v_2 \circ v_1) \text{ is vertical},\\[2pt]
& \text{(Horizontal–vertical)} & 
(h \circ v) = (h,\, \eta_h \circ v_\Theta),\\[2pt]
& \text{(Vertical–horizontal)} &
(v \circ h) = (h,\, v_\Theta \circ \eta_h),
\end{aligned}
\]
ensuring that updates and transformations in the implementation level commute coherently with syntactic schema morphisms.    
\end{remark}

\subsubsection{Formalization of Model, instances and others in implementation schemas}
Since we work with models attached to schemas, we need to formalize some concepts like the model, the instatiation and the evaluation of a schema of a schema. These are essential when using schemas in practice.

\begin{definition}[Model functor]
Given the total category of implemented schemas $Sch_{impl}$ we define the Model functor
\[
Model : Sch_{impl} \longrightarrow Arr(Sch_{sem}) \subset Arr(\mathbf{KL(G)}),
\]
where $Arr(\mathbf{KL(G)})$ denotes the arrow category of the Kleisli of Giry.

\begin{itemize}
    \item For each object $(\psi, (L_\psi, \theta)) \in Sch_{impl}$,
    \[Model((\psi, (L_\psi, \theta))) :=
    \big(Dom_{(\psi,\theta)} \xrightarrow{\;\mathrm{model}_{(\psi,\theta)}\;} Cod_{(\psi,\theta)}\big),
    \]
    where $Dom_{(\psi,\theta)}$ and $Cod_{(\psi,\theta)}$ are measurable spaces, namely, the input and output domains of the implemented schema respectively,
    and $model_{(\psi,\theta)}$ is the measurable map (or kernel) defined by
    the model with parameters $\theta$ within the representation language $L_\psi$.
    
    \item For each morphism
    \[
    (f, \eta): (\psi, (L_\psi, \theta))
    \longrightarrow (\phi, (L_\phi, \theta'))
    \]
    in $Sch_{impl}$, the functor $Model$ assigns the commutative square
    \[
    \begin{CD}
    Dom_{(\psi,\theta)} @>{\mathrm{model}_{(\psi,\theta)}}>> Cod_{(\psi,\theta)}\\
    @V{dom(f,\eta)}VV @VV{cod(f,\eta)}V\\
    Dom_{(\phi,\theta')} @>{\mathrm{model}_{(\phi,\theta')}}>> Cod_{(\phi,\theta')}
    \end{CD}
    \]
    where the vertical maps are measurable transformations
    induced by the morphisms $f$ and $\eta = (\ell_f, f_\Theta)$, acting on the corresponding
    input and output spaces consistently with the change of schema $f$, the language translation $\ell_f$ and the parameter transformation $f_\Theta$.
\end{itemize}
Hence, $Model$ defines a functorial correspondence between implemented schemas
and measurable mappings describing their operational behavior.
\end{definition}

\paragraph{Schema instances category}

\begin{definition}[Instances presheaf]
Let
\[Model : Sch_{impl} \to Arr(\mathbf{KL(G)})\] 
be the Model functor defined above. We define the Instances presheaf
\[Inst : Sch_{impl}^{op} \xrightarrow{} \mathbf{Meas}\]
($Sch_{impl}^{op}$ denotes this functor is contravariant) by assigning to each implemented schema its set of possible evaluated instances:
\[Inst((\psi,(L_\psi,\theta))) :=
\big\{\, (x,y) \,\big|\,
x \in Dom_{(\psi,\theta)},\;
y \in model_{(\psi,\theta)}(x)
\,\big\}.\]
For a morphism
\[m= (f,\eta) : (\phi,(L_\phi,\theta'))
\xrightarrow{} (\psi,(L_\psi,\theta))\]
in $Sch_{impl}$, the presheaf action
\[Inst(m) :Inst((\psi,(L_\psi,\theta)))
\xrightarrow{}
Inst((\phi,(L_\phi,\theta')))\]
is given by the natural pullback/pushforward induced by $\mathrm{Model}(m)$,
that is,
\[\mathrm{Inst}(m)(x,y) = (\, dom(m)(x),\, cod(m)(y)\,).\]
This definition ensures contravariance and functoriality,
making $\mathrm{Inst}$ a well-defined presheaf on $Sch_{impl}$.
\end{definition}

\begin{remark}
The choice of defining $Inst$ as a presheaf, that is,
as a contravariant functor $Inst : Sch_{impl}^{op} \to \mathbf{Meas}$,
is motivated by the epistemic direction of information flow in the agent.
When a schema morphism 
\[m = (f,\eta) : (\psi,(L_\psi,\theta)) \to (\phi,(L_\phi,\theta'))\]
represents a change in the schema, a translation between representation languages, 
or an adaptation of model parameters, the corresponding instances must be reinterpreted with respect to the original schema.
In other words, the transformation of models induces a pullback operation on instances:
each instance of the target implementation can be transported backwards
to the domain of the source implementation, yielding its corresponding representation in the original input–output space.
This reversal of direction justifies the contravariant nature of $Inst$;
if instances were instead propagated forward (as in data generation or simulation),
a covariant functor (a copresheaf) would be more appropriate.
\end{remark}

\begin{remark}
If one needs to treat concrete schema instances as categorical objects
rather than mere elements of a presheaf, it is possible to construct the Grothendieck category of instances
\[\int \mathrm{Inst},\]
whose objects are pairs
\[\big((\psi,(L_\psi,\theta)),\,(x,y)\big)\]
and whose morphisms are induced by those of $Sch_{impl}$
via the action of the presheaf $Inst$.
The canonical projection
\[\pi : \int \mathrm{Inst} \longrightarrow Sch_{impl}\]
is then a fibration, providing a natural categorical environment
for reasoning about episodic data or evaluated schema instances.
\end{remark}

\begin{definition}[Evaluation of implemented schemas]
Given an implemented schema $(\psi,(L_\psi,\theta)) \in Sch_{impl}$ and
an input point $x \in Dom_{(\psi,\theta)}$, its concrete evaluation is defined as
\[y := \mathrm{model}_{(\psi,\theta)}(x), \qquad
(x, y) \in Inst((\psi,(L_\psi,\theta))).
\]
This pair represents the evaluated instance of the schema. Equivalently, in categorical notation,
\[eval_{(\psi,\theta)}(x)
= model_{(\psi,\theta)} \circ x,\]
and the pair $(x,eval_{(\psi,\theta)}(x))$
corresponds to a point of the fiber $Inst((\psi,(L_\psi,\theta)))$.
\end{definition}

\begin{center}
\begin{tikzcd}[column sep=large, row sep=large]
1 \arrow[r, "x"] \arrow[dr, swap, "\mathrm{eval}_{(\psi,\theta)}(x)"] 
& Dom_{(\psi,\theta)} \arrow[d, "\mathrm{model}_{(\psi,\theta)}"] \\
& Cod_{(\psi,\theta)}
\end{tikzcd}
\end{center}
where $1$ is the terminal object in $\textbf{Meas}$, representing a single concrete sample or event and $x : 1 \to Dom_{(\psi,\theta)}$ is the injection of that sample into the input domain.

\subsection{The semantic layer of schemas $Sch_{sem}$}

\begin{definition}[Giry monad]
Let $\mathbf{Meas}$ be the category of measurable spaces and measurable functions.
The Giry monad
\[G : \mathbf{Meas} \xrightarrow{} \mathbf{Meas}\]
assigns to each measurable space $X$ the space $G(X) = \mathcal{P}(X)$ of probability measures on $X$,
equipped with the smallest $\sigma$-algebra making the evaluation maps $\mu \mapsto \mu(A)$ measurable for every measurable $A \subseteq X$.
\end{definition}

\begin{definition}[Kleisli category of the Giry monad]
    The Kleisli category $\mathbf{KL}(G)$ has:
    \begin{itemize}
        \item \textbf{Objects}: measurable spaces $X$,
        \item \textbf{Morphisms}: measurable maps $f:X \to G(Y)$, that is, Markov kernels assigning to each $ x \in X$ a probability measure $f(x) \in \mathcal{P}(Y)$
        \item \textbf{Identity}: the Dirac kernel $\eta_X: X \to G(Y)$ defined by $\eta_X(x)=\delta_x$,
        \item \textbf{Composition}: for $f:X \to G(Y)$ and $g: Y \to G(Z)$, the composite $g \circ f: X \to G(Z)$ given by:
        \[(g \circ f)(x)(C)=\int_Y \, f(x)dy\, \quad \text{for each measurable } C\subseteq Z.\]
    \end{itemize}
    This category is equivalent to the usual category $\mathbf{Stoch}$ of measurable spaces and stochastic kernels.
\end{definition}

\begin{definition}[Semantic schemas category]
    The semantic schemas category, denoted by $Sch_{sem}$, is defined as a full subcategory of the Kleisli category $\mathbf{KL}(G)$ (or equivalently of $\mathbf{Stoch}$), whose objects and morphisms provide the probabilistic semantics of schemas.
    \begin{itemize}
        \item \textbf{Objects}: measurable spaces representing the domains and codomains of schemas.
        Typical examples include:
        \[O \text{ (observations)}, \quad D \text{ (decisions)}, \quad
        \mathcal{E} \text{ (effectors)}, \quad H \text{ (hidden variables)}, \quad
        \mathcal{S} \text{ (sensors)}.\]

        \item \textbf{Morphisms}: stochastic kernels
        \[K : X \longrightarrow G(Y),\]
        where each kernel corresponds to a semantic interpretation of a schema,
        expressing a probabilistic mapping from input variables in $X$ to distributions over $Y$.
        In particular:
        \[
        \begin{aligned}
        &S_P: \mathcal{S} \to G(\mathcal{P}(O)), &&\text{(perceptual schemas)}\\
        &S_M: D \to G(\mathcal{P}(E)), &&\text{(motor schemas)}\\
        &S_G: O \times D \to G(\mathbb{R}), &&\text{(goal/evaluation schemas)}\\
        &S_{\mathrm{Pred}}: O \times D \times H \to G(\mathcal{P}(\mathrm{Pred})), &&\text{(predictive schemas)}
        \end{aligned}\]

        \item \textbf{Identity and composition:} inherited from $\mathbf{Kl}(G)$, i.e.
        \[
        \eta_X(x) = \delta_x, \quad
        (g \circ f)(x)(C) = \int_Y g(y)(C) f(x)(dy).\]
    \end{itemize}
    $Sch_{sem}$ provides the probabilistic semantics of schemas,
    each syntactic schema is interpreted as a stochastic kernel
    that maps input measurable spaces to probability distributions over output spaces.
\end{definition}

\begin{remark}
Note that, although the composition law is inherited from $\mathbf{Kl}(G)$,
it is only defined when the codomain of the first kernel coincides with the domain
of the second one, and both measurable structures are compatible.
In particular, two semantic schemas can be composed
only if the output probability measure of the first kernel is measurable
with respect to the input $\sigma$-algebra of the second.
Otherwise, the corresponding composition is left undefined.
\end{remark}

\subsubsection{Interpretation funtor}

\begin{definition}[Interpretation functor]
    Let 
    \[ \mathcal{J}: Sch_{impl} \xrightarrow{} Sch_{sem}\]
    be the interpretation functor, which assigns to each implemented schema its measurable (probabilistic) semantics.

    \begin{itemize}
        \item \textbf{On objects:}
        For each implemented schema $(\psi, (L_\psi, \theta)) \in Ob(Sch_{impl})$, the functor assigns a measurable kernel
        \[J(\psi, (L_\psi, \theta)) : Dom(\psi) \longrightarrow G(Cod(\psi))\]
        representing the probabilistic semantics induced by the type of schema, the implementation language $L_\psi$ and its parameters $\theta$.
        
        Formally, for each measurable subset $A \subseteq Cod(\psi)$ and each $x \in Dom(\psi)$,
        \[\mathcal{J}(\psi, (L_\psi, \theta))(x)(A)
        = \Pr_{L_\psi,\,\theta}[\text{output} \in A \mid \text{input}=x],\]
        
        where the probability is evaluated under the model determined by $(L_\psi, \theta)$.

        \item \textbf{On morphisms:}
        For each operator $f : (\psi,(L_\psi,\theta)) \to (\phi,(L_\phi,\theta'))$
        in $Sch_{impl}$, 
        the functor assigns a measurable transformation between the corresponding kernels,
        \[\mathcal{J}(f) : J(\psi,(L_\psi,\theta)) \Rightarrow J(\phi,(L_\phi,\theta')),\]
        
        defined point-wise by the measurable map $\mathcal{I}(f): \Theta_\psi \to \Theta_\phi$
        that transforms the parameters and the representation language.

         \item \textbf{Functoriality:}
            The functor $\mathcal{J}$ is strict, satisfying
            \[\mathcal{J}(g \circ f) = \mathcal{J}(g) \circ \mathcal{J}(f),
            \qquad
            \mathcal{J}(id_{(\psi,L_\psi,\theta)}) = id_{\mathcal{J}(\psi,L_\psi,\theta)}.
            \]
    \end{itemize}
\end{definition}
\begin{proposition}
The interpretation functor 
$\mathcal{J} : Sch_{impl} \to Sch_{sem}$
is well-defined and strict: it preserves identities and compositions.
\end{proposition}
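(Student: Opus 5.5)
The plan is to make the definition of $\mathcal{J}$ precise enough that well-definedness and strictness follow from structure already established: the functoriality of $\mathcal{I}$, the composition law of the Grothendieck construction $Sch_{impl}=\int\mathcal{I}$, and the functoriality of $Model$. As stated, $\mathcal{J}$ sends objects to kernels and morphisms to "transformations between kernels". So the first step is to fix the target. I would read $\mathcal{J}$ as landing in $Arr(Sch_{sem})$, exactly as $Model$ does: an object $(\psi,(L_\psi,\theta))$ goes to the kernel $\mathcal{J}(\psi,(L_\psi,\theta)) : Dom(\psi)\to G(Cod(\psi))$, and a morphism $(f,\eta)$ goes to a commuting square in $\mathbf{KL}(G)$. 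Its vertical legs are the Kleisli lifts $\eta_{Cod}\circ dom(f,\eta)$ and $\eta_{Cod}\circ cod(f,\eta)$ of the measurable maps supplied by $Model(f,\eta)$, obtained by postcomposing with Dirac units.

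\textbf{Well-definedness on objects.} I will check that $x\mapsto \Pr_{L_\psi,\theta}[\text{output}\in A\mid \text{input}=x]$ is a Markov kernel. Each $x$ must give a probability measure on $Cod(\psi)$, and $x\mapsto \mathcal{J}(\cdot)(x)(A)$ must be measurable for each measurable $A$. This holds because the construction coincides with $\mathrm{model}_{(\psi,\theta)}$ from the $Model$ functor, which is by definition a measurable map into $G(Cod)$. The $\sigma$-algebra on $G(Cod)$ is generated by the evaluation maps $\mu\mapsto\mu(A)$, so measurability of the map into $G(Cod)$ is equivalent to measurability of each evaluation.

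\textbf{Well-definedness on morphisms.} Here I must verify that the square commutes in $\mathbf{KL}(G)$. Commutativity of the underlying square from $Model(f,\eta)$ gives it directly, because the Kleisli inclusion $\mathbf{Meas}\to\mathbf{KL}(G)$, $h\mapsto\eta\circ h$, is a functor and Kleisli composition with a Dirac-lifted map reduces to pushforward. The output must also be independent of the representative $(f,\eta)$. This is automatic in the total category, since by the definition of $\int\mathcal{I}$ the component $\eta=\mathcal{I}(f)$ is determined by $f$.

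\textbf{Strictness.} For identities, $(\mathrm{id}_\psi,(\mathrm{id}_{L_\psi},\mathrm{id}_{\Theta_\psi}))$ is sent by $Model$ to the identity square, since $\mathcal{I}$ preserves identities. Its Dirac lift is the identity square in $\mathbf{KL}(G)$ because the units $\eta_X(x)=\delta_x$ are the Kleisli identities. For composition, the Grothendieck composite is $(g\circ f,(\ell_g\circ\ell_f,g_\Theta\circ f_\Theta))$. By functoriality of $Model$, the square assigned to it is the vertical pasting of the two squares. Since the Kleisli inclusion preserves composition, the chain $\mathcal{J}(g\circ f)=\mathcal{J}(g)\circ\mathcal{J}(f)$ then reduces to $\eta\circ(h_2\circ h_1)=(\eta\circ h_2)\circ_{KL}(\eta\circ h_1)$, a one-line integral against a Dirac measure.

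\textbf{Main obstacle.} The hard step is the coherence between the parameter-level description "defined point-wise by $\mathcal{I}(f)$" and the kernel-level square. One must show that $\mathcal{J}(\psi,(L_\phi,f_\Theta(\theta)))$ equals the kernel obtained by transporting $\mathcal{J}(\psi,(L_\psi,\theta))$ along the legs of the square. I would try to derive this first from the cartesian-lift condition $f_\Theta(\theta)=\mathsf{Par}(\ell_f)(\theta')$ in the definition of $Sch_{impl}$. If that does not suffice, I would impose it as a hypothesis: $Model$ is assumed to be a functor, and this compatibility is what that assumption says. Strictness rather than pseudofunctoriality then holds because $\mathcal{I}$ is a strict functor, so no coherence isomorphisms are introduced.
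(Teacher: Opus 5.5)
\textbf{Verdict: there is a genuine gap.} The argument is circular at its load-bearing step. For context, the paper gives no proof of this proposition: strictness is simply written into the ``Functoriality'' clause of the definition of $\mathcal{J}$.

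Your retargeting of $\mathcal{J}$ to $Arr(Sch_{sem})$ is a sensible repair of the type mismatch (the paper sends objects to kernels), and the Dirac-lift bookkeeping is fine. But everything rests on $Model(f,\eta)$ being a commuting square whose legs are chosen functorially in $(f,\eta)$, and that is not available. The paper never specifies the legs $dom(f,\eta)$ and $cod(f,\eta)$. For the one class of morphisms where it does (Appendix A, where $Model(Upd)$ has identity domain leg), the square cannot commute in general.

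Here is a counterexample inside the paper's own value-iteration module. Take the tabular value schema with $O=\{o_1,o_2\}$, $model_\theta(o)=\delta_{\theta^{(o)}}$, and $\theta=(0,0)$. Choose rewards ($R\equiv 0$ from $o_1$, $R\equiv 1$ from $o_2$) so that one Bellman update gives $u_\Theta(\theta)=(0,1)$. With $dom=\mathrm{id}_O$ you need a kernel $c$ with $c\circ_{KL} model_\theta = model_{u_\Theta(\theta)}$. Evaluating at $o_1$ and $o_2$ forces both $c(0)=\delta_0$ and $c(0)=\delta_1$, which is impossible.

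So ``commutativity of the underlying square from $Model(f,\eta)$'' is exactly what fails. Your fallback of imposing functoriality of $Model$ as a hypothesis assumes the very compatibility that strictness of $\mathcal{J}$ amounts to. Nor can the cartesian condition $f_\Theta(\theta)=\mathsf{Par}(\ell_f)(\theta')$ supply it. That condition constrains parameters only and says nothing about how a language $L_\psi$ turns $\theta$ into a kernel.

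Separately, your claim that $\eta=\mathcal{I}(f)$ is determined by $f$ contradicts the paper's own $Sch_{impl}$. $Update_\psi$ and $Transform_\psi$ lie over $\mathrm{id}_\psi$ with $\eta\neq\mathcal{I}(\mathrm{id}_\psi)$. Under your reading, the cartesian condition with $f=\mathrm{id}_\psi$ would force $\theta'=\theta$, so every vertical morphism would be an identity. There is also no representative-independence to check, since morphisms of $Sch_{impl}$ are the pairs themselves.

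The relevant point is the opposite one: the legs must depend on $\eta$, not only on $f$. This is also why the paper's recipe ``defined point-wise by $\mathcal{I}(f)$'' is ill-defined on vertical morphisms: it would put identity legs between two different kernels.

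What you can honestly prove is a conditional statement. Suppose each implemented schema carries a kernel $model_{(\psi,\theta)}$, and each morphism $(f,\eta)$ is assigned legs $(d,c)$ with $c\circ_{KL} model_{(\psi,\theta)}=model_{(\phi,\theta')}\circ_{KL} d$, chosen functorially in $(f,\eta)$. Then $\mathcal{J}\colon Sch_{impl}\to Arr(Sch_{sem})$ is well-defined and strict, essentially trivially. That extra data is the missing ingredient. It has to be added to the definitions; it cannot be derived from them.
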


\begin{figure}[ht!]
    \centering
    \includegraphics[width=\textwidth]{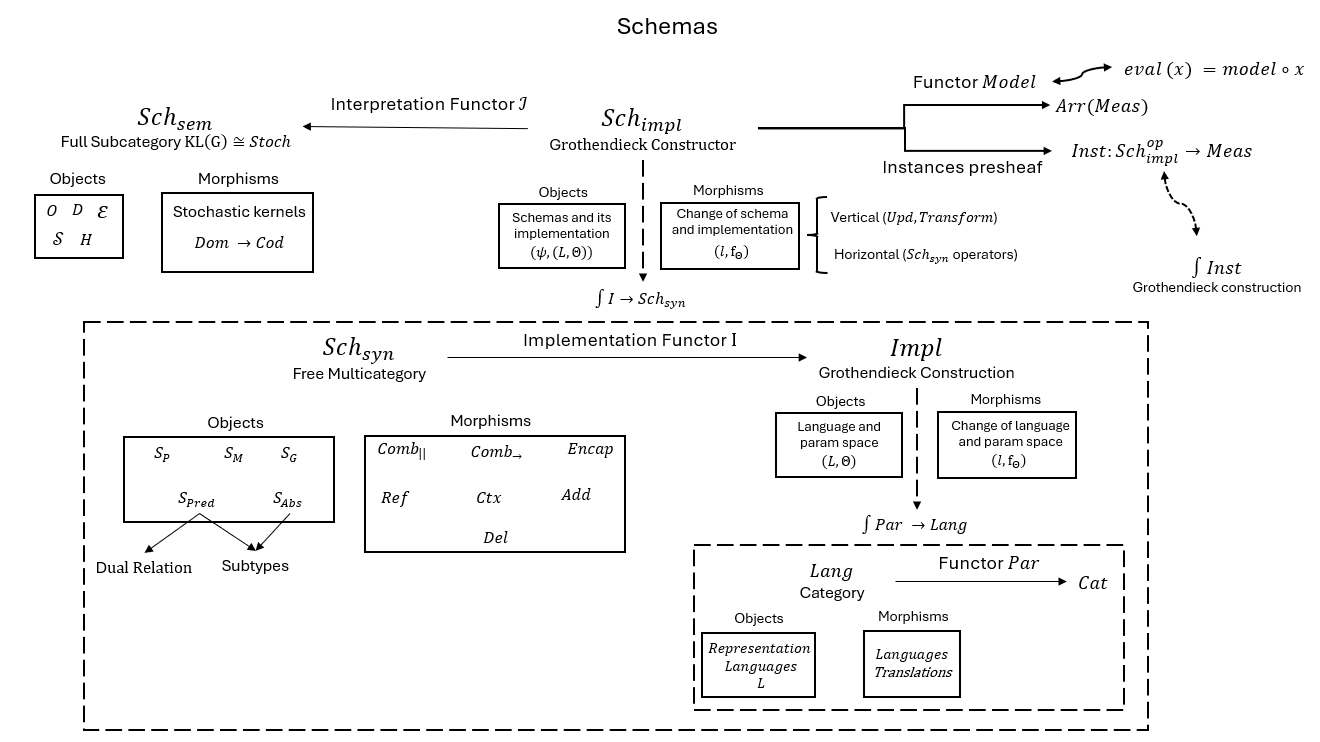}
    \caption{Schemas formalization map}
    \label{fig:Schemas_map}
\end{figure}

We refer to \ref{sect:WIP Schema level} for the work in progress related to this category
\section{Workflow level}

The workflow level provides an intermediate layer between individual schemas and higher-level cognitive processes. Its purpose is to formalize how elementary schema transformations can be composed into \textbf{structured, reusable procedures} that resemble abstract programs operating over the agent's internal representations. While the schema level defines what can be represented and transformed, the workflow level specifies how these transformations are structurally coordinated, ordered, and combined.

The central objective of this level is to capture the \textbf{compositional and control structure of cognitive operations} without yet committing to a specific computational model, optimization strategy, or execution mechanism. In particular, we aim to formalize workflows that support \textbf{modularity, parallelism, and hierarchical composition}, while remaining sufficiently abstract to be instantiated by different learning, inference, and memory-access mechanisms at later stages.
Workflows are not cognitive processes themselves, but \textbf{declarative structural operators acting over implemented schemas}. They serve as abstract specifications from which cognitive modules and mental dynamics will be constructed at the Mind level. From this perspective, the workflow level can be understood as a \textbf{categorical language for describing how schema-level operations are sequenced, combined, reused, and conditionally organized}, independently of their concrete execution.

At the current stage of development, we focus on two fundamental modes of composition: sequential composition, capturing temporal ordering of operations, and parallel composition, capturing simultaneous or independent application of operations over disjoint sets of schemas. More expressive control structures, such as conditional branching or iterative loops, are not excluded conceptually but are intentionally left outside the present formalization. These extensions are discussed in the work-in-progress section \ref{sect:WIP Workflow level}, where we outline how richer control structures may be incorporated without altering the core architectural principles.
To support these objectives, we endow the category of implemented schemas $Sch_{impl}$ with a duoidal structure. This structure provides a minimal, yet expressive, algebraic framework for composing schema transformations and for lifting such compositions to the level of evaluated instances. The resulting workflows act on the category Mind via a structured action, enabling a principled connection between schema manipulation and cognitive state evolution, while deferring learning dynamics and memory interaction to higher levels of the architecture.

\begin{definition}[Duoidal structure over $Sch_{impl}$]
    Let $Sch_{impl}$ be the implementation category of schemas, whose morphisms are generated from the extensions of fundamental operators of $Sch_{syn}$ and the new extended operators $Transf, Upd$. Each morphism in $Sch_{impl}$ represents a composite operator acting over a finite family of implemented schemas together with the induced transformations on their associated instance spaces given by the presheaf $Inst$. 
    In this way, each operator encodes both the structural transformation between implemented schemas and the corresponding reinterpretation of their evaluated instances.

    We endow $Sch_{impl}$ with a duoidal structure
    \[\mathcal{O}_{Sch}=(Sch_{impl}, \bullet, \otimes, I_\bullet, I_\otimes)\]
    defined by two compatible monoidal products:
    \begin{itemize}
        \item \textbf{Sequential composition $\bullet$}:
        \[\bullet: Sch_{impl} \times Sch_{impl} \xrightarrow{} Sch_{impl}\]
        represents the temporal or sequential composition of schemas operators:
        \[(o_2 \bullet o_1)(\Sigma) = o_2(o_1(\Sigma))\]
        where $\Sigma \subseteq Obj(Sch_{impl})$. It satisfies:
        \[I_\bullet \bullet o = o = o \bullet I_\bullet 
        \quad 
        (o_3 \bullet o_2) \bullet o_1 = o_3 \bullet (o_2 \bullet o_1)\]
        \item \textbf{Parallel composition $\otimes$}
        \[\otimes: Sch_{impl} \times Sch_{impl} \xrightarrow{} Sch_{impl}\]
        represents the simultaneous or parallel composition of operators acting over sets of schemas:
        \[(o_1 \otimes o_2) (\Sigma_1 \cup \Sigma_2) = (o_1(\Sigma_1),o_2(\Sigma_2))\]
        where $\Sigma_1,\Sigma_2 \subseteq Obj(Sch_{impl})$. It satisfies the monoidal axioms:
        \[I_\otimes \otimes o = o = o \otimes I_\otimes
        \quad
        (o_3 \otimes o_2) \otimes o_1= o_3 \otimes (o_2 \otimes o_1)\]
    \end{itemize}

    The two monoidal structures are related by natural transformations expressing their compatibility:
    \[\zeta: (a \otimes b) \bullet (c \otimes d) \implies (a \bullet c) \otimes  (b \bullet d)\]
    where $\zeta$ expresses the interchange law ensuring that sequential composition distributes over parallel composition. This two compatible structures endow $Sch_{impl}$ with a duoidal category structure.
    Both monoidal products lift naturally through the presheaf $\mathrm{Inst}$,
    inducing transformations on the corresponding instance spaces:
    \[
    Inst(o_2 \bullet o_1) \Rightarrow 
    Inst(o_1) \circ Inst(o_2),
    \qquad
    Inst(o_1 \otimes o_2) \Rightarrow
    Inst(o_1) \times Inst(o_2),
    \]
    expressing, respectively, the sequential reinterpretation and the parallel evaluation
    of instances along implemented workflows.
    
    We also define a left duoidal action of $Sch_{impl}$ on the category Mind:
    \begin{equation}
        \star: Sch_{impl} \times Mind \xrightarrow{} Mind
    \end{equation}
    This action assigns to each $o \in SchOps$ and to each object $M\in Mind$ (that we will define later) an object $o \star M \in Mind$. This action satisfies:
    \begin{itemize}
        \item Identity laws:
        \[I_\bullet\star M \cong M \quad I_\otimes\star M \cong M\]
        \item \textbf{Sequential coherence} (associativity with $\bullet$):
        \[(o_2 \bullet o_1) \star M \cong o_2 \star (o_1 \star M)\]
        \item \textbf{Parallel coherence} (associativity with $\otimes$):
        \[(o_2 \otimes o_1) \star M \cong (o_1 \star M) \otimes_M (o_2 \star M)\]
        where $\otimes_M$ denotes the internal parallel composition within Mind, corresponding to the simultaneous update of the Mind (explained below).
    \end{itemize}
\end{definition}
We will see that the morphisms in $Mind$ can thus be expressed as the  application of operators through this duoidal action:
\[\mu = o \star (-)\]

At this level, workflows are intentionally agnostic to learning objectives, optimization criteria, and memory management policies. These aspects are introduced at the Mind level, where workflows are embedded within cognitive modules and coupled to explicit memory systems. This separation allows the workflow layer to remain purely structural and declarative, while supporting multiple cognitive and learning interpretations.
Within this framework, workflows may explicitly specify which cognitive queries, metrics, or conditions are to be evaluated, as well as how their outcomes induce structural branching or repetition. However, workflows remain declarative: they describe what must be evaluated and where bifurcations or loops occur, without prescribing how such evaluations are carried out. The concrete computation of metrics, the retrieval of data from memory, and the resolution of conditional branches are performed by cognitive modules that execute the workflow, in accordance with its specification.
As a result, workflows can be seen as \textbf{abstract control structures} that coordinate schema-level operations and reference cognitive evaluations, while the responsibility for numerical computation, memory access, and state-dependent execution lies entirely within the Mind level. This distinction is essential to \textbf{preserve modularity, enable asynchronous interaction with the environment}, and even allow different cognitive modules to share and reuse common workflow structures under different execution semantics.


\section{Mind level}

The Mind level introduces the internal organization of the agent's mind  that explains the construction of the agent´s  behaviours. Its purpose is to formalize how structured schema workflows, defined at the workflow level, are instantiated, executed, and coordinated in order to produce coherent cognitive behavior over time. While the previous level focuses on the compositional structure of schema transformations, the Mind level captures how such structures are embedded into a persistent internal state endowed with memory, models, states and cognitive capabilities.

Conceptually, the Mind represents the agent’s internal arena where perception-derived representations, internal models, memory contents, and decision-related structures coexist and evolve. This level explicitly separates structural descriptions of computation (workflows) from their execution and contextualization within a cognitive state. In this sense, the Mind level bridges abstract schema manipulation and concrete cognitive dynamics.
A key design principle of this level is the separation between what is structurally specified and how it is operationally carried out. Workflows remain declarative objects describing compositions of schema operators, while their execution, metric evaluation, memory and model access are delegated to cognitive modules operating within the Mind. This separation ensures \textbf{modularity, reusability, and interpretability} of cognitive processes.
At the core of the Mind level lie the cognitive modules. A \textbf{cognitive module} encapsulates a specific cognitive function, such as prediction, model updating, planning, or evaluation, by coordinating one or more workflows over implemented schemas. Each module defines:
\begin{enumerate}
    \item The types of schemas it operates on,
    \item a family of admissible workflows,
    \item a success or evaluation condition that determines task completion, and
    \item a local operator signature that specifies which schema-level operations the module is allowed to use.
\end{enumerate}
Cognitive modules thus serve as the primary interface between abstract workflows and goal-directed cognitive activity.
The aggregation of all the cognitive modules forms the \textbf{cognitive kernel} that additionally operates the distributional execution of the cognitive modules.
Complementing cognitive modules and the cognitive kernel, the Mind level incorporates an explicit and structured notion of memory. \textbf{Memory} is not treated as a monolithic store, but as a collection of specialized memory subsystems, each responsible for storing, transforming, and retrieving particular kinds of information (e.g. episodic traces, learned internal models, aggregated statistics, or control signals). These subsystems expose standardized read and write interfaces and remain coherent under schema transformations, ensuring that stored contents are consistently reinterpreted when schemas evolve.
A global memory system integrates all memory subsystems into a unified structure, allowing cognitive modules to access and combine information across different memory types while preserving modularity. Importantly, workflows may declaratively specify which data, metrics, or conditions are required at certain points, but the concrete interaction with memory, such as retrieving data, computing aggregates, or evaluating predicates, is performed by the cognitive modules during execution.

Finally, the Mind level is formalized as a category whose objects represent complete mental states, comprising mental spaces, internal models, memory systems, and available cognitive modules. Morphisms in this category correspond to cognitive processes: structured transformations of mental states induced by executing workflows via cognitive modules. The \textbf{duoidal structure} introduced at the workflow level acts on the Mind category, ensuring coherence between sequential and parallel composition of schema operations and the corresponding evolution of mental states.
In summary, the Mind level provides the structural and organizational backbone for cognition within the SBL framework. It integrates workflows, cognitive modules, and memory into a unified categorical setting, enabling the principled study of internal cognitive dynamics while remaining agnostic to specific learning algorithms or optimization procedures.

At the current stage, the Mind level should be understood as a presentation of the core ideas and structural commitments underlying our approach to cognition in SBL, rather than as a fully polished or final formalization. Compared to the schema level and the workflow level, this section is intentionally less refined in terms of organization and categorical detail, as it integrates multiple interacting components, workflows, cognitive modules, memory systems, and mental states, whose mutual relationships are still being consolidated in currently on-going work. The goal is to make explicit the conceptual backbone of the Mind architecture and its intended mode of operation, while leaving room for simplification, reorganization, and alternative formal presentations. Additional ongoing developments, refinements, and future research directions related to this level are discussed in the Work in Progress section \ref{sect:WIP Mind level}.


\subsection{Cognitive module formalization}
\begin{definition}[Cognitive Module]
A cognitive module, denoted $k$, is a structured unit that performs a specific cognitive function by coordinating workflows of schema operators within the duoidal structure $\mathcal{O}_{Sch}$, acting on implemented schemas and with the goal of performing a specific task.
Formally, a cognitive module is a tuple:
\[
\mathcal{M} = (D, C, \mathcal{O}, \Phi, \mathrm{Sig}(\mathcal{M}))
\]
where:
\begin{itemize}
    \item $D, C \subseteq Ob(\mathbf{Sch}_{impl})$ are the domain and codomain schema types on which the module operates.
     \item $\mathcal{O}$ is a family of schema operators workflows, each element
     \[w \in \mathcal{O}\]
    being a morphism of the duoidal category $\mathcal{O}_{Sch}$, that is, a composite operator built from the primitive generators by means of the two monoidal products:
    \[
    w = o_n \bullet (o_{n-1} \otimes \dots \otimes o_1),
    \qquad o_i \in Obj(Sch_{impl}),
    \]
    where $\bullet$ represents sequential composition and $\otimes$ represents parallel composition.
    Thus, $\mathcal{O}$ is not merely a set of operators, but a family of structured workflows generated within $\mathcal{O}_{Sch}$.

    \item $\Phi$ denotes the module’s success functional condition.
    It is a predicate or evaluation rule that specifies when the module’s task has been successfully completed.
    Formally, it can be represented as a morphism or functional relation:
    \[\Phi : C \to \mathbf{Bool} \quad \text{or more generally } \quad \Phi : C \to V\]
    where $V$ is a value space (utility, error, or confidence). The condition $\Phi(c)$ 
    indicates whether the module’s cognitive goal has been achieved on output schema $c \in C$.

    \item $\mathrm{Sig}(\mathcal{M}) = \langle G_{\mathcal{M}} \rangle$ is the local signature, that is, the minimal sub-duoidal structure of $\mathcal{O}_{Sch}$ generated by the subset of fundamental operators $G_{\mathcal{M}}$ that the module effectively uses.
\end{itemize}
\end{definition}

\subsection{Memory formalization}
\begin{definition}[Memory subsystem]
A memory subsystem is a tuple
\[
SubMem = (Data_M,\; Ops_M,\; write_M,\; read_M)
\]
where:
\begin{itemize}
    \item $Data_M : Sch_{impl}^{op} \to \mathbf{V}$ is a (co)presheaf assigning to each implemented schema the structured storage of instances relevant for $SubMem$ (e.g. sets of traces, distributions, feature maps, policy descriptions). Concretely, $Data_M((\psi,\theta))$ is the storage associated to schema $(\psi,\theta)$.
    \item $Ops_M$ is a small (monoidal) category whose objects are operations
    to act on $Data_M$ (e.g. $\mathsf{store}$, $\mathsf{retrieve}$, $\mathsf{aggregate}$, $\mathsf{forget}$). Each operation has a domain and codomain expressed in terms of $Data_M$'s values.
    \item $write_M : \mathrm{Inst} \Longrightarrow Data_M$ is a natural transformation (the canonical write interface) that maps evaluated instances into the subsystem storage.
    \item $read_M : Data_M \Longrightarrow \mathcal{A}_M$ is a natural transformation (the read/aggregation interface) into a ``extraction'' presheaf $\mathcal{A}_M:Sch_{impl}^{op}\to\mathbf{V}$.
\end{itemize}
where $\mathbf{V}$ is the corresponding category ($\textbf{Set},\textbf{Meas},\textbf{Stoch},\textbf{Cat},\dots)$
\end{definition}

\begin{remark}[Summary]
Each subsystem supplies:
\begin{itemize}
    \item a signature for $Ops_M$ (types expressed using $Data_M$-values),
    \item algebraic laws (monoidal structure, idempotence, commutativity where appropriate),
    \item implementations of primitive operations as natural transformations or morphisms in $\mathbf{V}$
    (for instance $\mathsf{store}_M : 1 \Rightarrow Data_M$ for single insertion,
    $\mathsf{retrieve}_M(s): 1 \Rightarrow \mathcal{P}(Data_M)$ for selector $s$, etc.),
    \item a protocol contract ensuring that operators commute with schema-change reindexings:
    for each $o\in Ops_M$ and $m\in Mor(Sch_{impl})$ there is a coherence isomorphism
    \[
    Data_M(m)\circ o_{(\phi,\theta')} \cong o_{(\psi,\theta)} \circ Data_M(m)
    \]
    (or a suitable lax version).
\end{itemize}
\end{remark}

\begin{definition}[Global Memory System ]
The global memory system is the category whose objects are memory subsystems
$SubMem$ and whose morphisms $\alpha : M \to N$ are pairs
\[
\alpha = (\tau_\alpha, \; \Phi_\alpha)
\]
where
\begin{itemize}
    \item $\tau_\alpha : Data_M \Rightarrow Data_N$ is a natural transformation of presheaves
    (a translation of structured storages), and
    \item $\Phi_\alpha : Ops_M \to Ops_N$ is a functor relating the operation signatures, such that the following coherence square commutes (compatibility of translation with write/read):
    \[
    \begin{tikzcd}
    \mathrm{Inst} \ar[r, "write_M"] \ar[d, "write_N"'] & Data_M \ar[d, "\tau_\alpha"] \\
    Data_N \ar[r, equal] & Data_N
    \end{tikzcd}
    \]
    (alternatively: $\tau_\alpha \circ write_M = write_N$), and similarly for $read$.
\end{itemize}
Composition of morphisms is defined componentwise.
\end{definition}

\begin{definition}[Global content functor and indexing]
Define the \emph{global memory content} presheaf
\[
\mathcal{M} : Sch_{impl}^{op} \longrightarrow \mathbf{V}
\]
by pointwise combining the subsystem storages (choice of product or coproduct as appropriate):
\[
\mathcal{M}((\psi,\theta)) \;:=\; \prod_{M\in Ob(\mathsf{MemSys})} Data_M((\psi,\theta))
\quad\text{(or }\bigsqcup_M Data_M((\psi,\theta))\text{)}.
\]
Thus, $\mathcal{M}$ aggregates, for each implemented schema, the structured contents
of all the memory subsystems associated with that schema.
\end{definition}

\begin{definition}[Schema-change coherence / re-interpretation]
For any morphism $m:(\psi,\theta)\to(\phi,\theta')$ in $Sch_{impl}$, the presheaf
nature of each $Data_M$ gives reindexing maps
\[
Data_M(m) : Data_M((\phi,\theta')) \longrightarrow Data_M((\psi,\theta))
\]
which implements the required \emph{reinterpretation} of stored contents when a schema
is transformed. The write/read interfaces satisfy the following naturality / coherence squares:
\[
\begin{tikzcd}
\mathrm{Inst}((\phi,\theta')) \ar[r, "write_M"] \ar[d, "\mathrm{Inst}(m)"'] &
Data_M((\phi,\theta')) \ar[d, "Data_M(m)"] \\
\mathrm{Inst}((\psi,\theta)) \ar[r, "write_M"'] & Data_M((\psi,\theta))
\end{tikzcd}
\]
so that \(\; Data_M(m)\circ write_M = write_M \circ \mathrm{Inst}(m)\;\).
\end{definition}

\subsection{Mind formalization}

\begin{definition}[Mind Category]
    The Mind category, denoted $Mind$, represents the space of mental states and their transformations induced by cognitive modules acting through the duoidal structure of schema operators $\mathcal{O}_{Sch}$. Is defined as follows:
    \[Mind = (Obj_{Mind}, Mor_{Mind}, \circ, id, ...)\]
    \begin{itemize}
        \item \textbf{Objects}: Each object $M_i \in Mind$ represents a complete mind state, defined by a triple:
        \[M_i=(V_i,\mathcal{F}_i,K_i)\]
        where:
        \begin{itemize}
            \item $V_i=(O_i,D_i,H_i)$: are the mental spaces of observation, decision and hidden variables, respectively. 
            \item $\mathcal{F}_i=\{f_j\}_j$ is the set of internal models (schemas), objects of $Sch_{impl}$.
            \item $K_i=(Q_i,\mathcal{K}_i)$ is the cognitive kernel composed by the memory system $Q_i$ and the cognitive system that consists of a set of cognitive modules $\mathcal{K}_i=\{k_j\}$.
        \end{itemize}
    \item \textbf{Morphisms}: A morphism in $Mind$, $\mu: M_i \xrightarrow{} M_j$ represents a cognitive process that transform a mental state $M_i$ into another $M_j$, through the application of one or more cognitive modules from $\mathcal{K}_i$:
    \[\mu_{k} = w \star M_i\]
    where $w \in \mathcal{O}_{k} \subseteq Mor(\mathcal{O}_{Sch})$ is a workflow of operators belonging to the module $k$, and the duoidal action
    \[\star : Sch_{impl} \times Mind \to Mind\]
    determines how operator workflows act on mental states.
    
    \item \textbf{Composition:}
    Composition in $Mind$ corresponds to the sequential execution of cognitive processes, inheriting the sequential monoidal product $\bullet$ from $\mathcal{O}_{Sch}$:
    \[(\mu_2 \circ \mu_1)(M) = \mu_2(\mu_1(M)) 
    = (w_2 \bullet w_1) \star M.\]
    Associativity and unit laws follow from the coherence of $\bullet$ in the duoidal category.
    
    \item \textbf{Parallel composition:}
    $Mind$ is endowed with a monoidal operation
    \[\otimes_M : \mathbf{Mind} \times \mathbf{Mind} \to \mathbf{Mind},
    \]
    called the mental parallel composition, defined by:
    \[(M_1 \otimes_M M_2) = 
    ((O_1 \cup O_2), (D_1 \cup D_2), (H_1 \cup H_2),
    (\mathcal{F}_1 \cup \mathcal{F}_2), (Q_1 \otimes Q_2, \mathcal{K}_1 \cup \mathcal{K}_2)).\]
    This operation represents the simultaneous evolution of two subsystems of the mind under independent cognitive processes, and is coherent with the parallel monoidal product $\otimes$ in $\mathcal{O}_{Sch}$ through the law:
    \[(w_1 \otimes w_2) \star (M_1 \otimes_M M_2) 
    \;\cong\; 
    (w_1 \star M_1) \otimes_M (w_2 \star M_2).\]

    \item \textbf{Identity:}
    The identity morphism represents cognitive inertia (no active process):
    \[
    id_M : M \to M, 
    \qquad id_M = I_\bullet \star M,
    \]
    where $I_\bullet$ is the unit of the sequential monoidal product in $\mathcal{O}_{Sch}$.
    The identity object for the parallel composition is denoted $I_M$, corresponding to the neutral or quiescent mental state.
    \end{itemize}

    This category $Mind$ is therefore a left duoidal category under the action of $\mathcal{O}_{Sch}$:
    \[(\mathcal{O}_{Sch}, \bullet, \otimes) \curvearrowright (\mathbf{Mind}, \circ, \otimes_M)\]
    satisfying the coherence conditions:
    \[
    \begin{aligned}
    (o_2 \bullet o_1) \star M &\;\cong\; o_2 \star (o_1 \star M), \\
    (o_2 \otimes o_1) \star M &\;\cong\; (o_1 \star M) \otimes_M (o_2 \star M),
    \end{aligned}\]
    together with the interchange law ensuring compatibility between sequential and parallel composition:
    \[\zeta: (a \otimes b) \bullet (c \otimes d) 
    \Rightarrow (a \bullet c) \otimes (b \bullet d).\]
\end{definition}

\section{Architecture/Agent level}

The Architecture (or Agent) level constitutes the highest level of abstraction in the present framework and is concerned with the formal definition of an agent as an integrated whole. At this level, the internal cognitive organization captured by the Mind category is coupled with a formal description of embodiment, yielding a categorical notion of an agent that interacts with an environment through perception and action.

The purpose of this level is twofold. First, it provides a precise categorical account of the SBL agent architecture, explicitly separating and relating its cognitive and physical components. Second, it establishes a structural setting in which different agent architectures, not limited to SBL, can be defined, compared, and related within a common mathematical language. This is essential both to avoid unnecessary over-complexity in the description of a single architecture and to remain aligned with the current state of the art, where multiple heterogeneous agent paradigms coexist and evolve rapidly.

At this stage, the constructions presented in this section should be regarded as preliminary. While they capture the intended structural relationships between cognition, embodiment, and interaction, several aspects of the formalization remain open and are the subject of ongoing work.

\subsection{Body Category}

Before defining the SBL agent as a whole, we introduce a categorical description of the agent’s body. The Body category formalizes the physical interface through which the agent is coupled to its environment, abstracting away from specific implementations while retaining the essential sensorimotor structure. This explicit separation between Body and Mind is a core design principle of the SBL architecture, as it enables asynchronous interaction with the environment and a clear distinction between raw sensory data and internal cognitive representations.

\begin{definition}[Body category]
We define the body category, denoted $\mathbf{Body}$, as the category describing the physical interface of the agent, composed of its perceptual and motor spaces:
\[\mathbf{Body} = (\mathrm{Obj}_{Body}, \mathrm{Mor}_{Body}, \circ, id)\]
\begin{itemize}
    \item \textbf{Objects:} Each object $B_i \in \mathrm{Obj}_{Body}$ represents a particular body configuration, defined as a pair:
    \[B_i = (S_i, E_i)\]
    where:
    \begin{itemize}
        \item $S_i$: the sensory space, corresponding to the raw perceptual inputs received from the environment.
        \item $E_i$: the effector space, corresponding to the motor commands or actuator variables that the agent can control.
    \end{itemize}
    \item \textbf{Morphisms:} Each morphism $\beta: B_i \to B_j$ represents a structural or functional transformation of the body, such as:
    \begin{itemize}
        \item a change in embodiment or morphology (different sensors/effectors);
        \item a mapping between body configurations preserving the sensorimotor semantics.
    \end{itemize}
    Formally, $\beta$ is a map between sensory–effector pairs:
    \[
    \beta = (\beta_S, \beta_E): (S_i, E_i) \longrightarrow (S_j, E_j)
    \]
    such that $\beta_S : S_i \to S_j$ and $\beta_E : E_i \to E_j$.

    \item \textbf{Composition and identity:} Standard composition of morphisms and the identity morphism correspond to functional composition and identity mappings on $(S,E)$.
\end{itemize}
\end{definition}

\subsection{SBL Category}

The SBL category is intended to formalize the agent architecture obtained by coupling a Mind with a Body. Conceptually, an SBL agent is not merely a cognitive system nor a physical system, but a structured interaction between both: cognitive processes operate over internal representations while being continuously constrained, informed, and perturbed by sensorimotor interaction with the environment.

This level aims to capture both informational transformations (within the Mind) and dynamical evolution (at the Body–environment interface), as well as their mutual influence. The precise categorical nature of this coupling, whether best expressed as a category, bicategory, or higher-categorical structure, is still under investigation, and multiple equivalent or complementary formulations may be considered depending on the properties one wishes to analyze.
\begin{enumerate}
        \item Subcategoría de Body-Mind, objetos son pares de estado del cuerpo y estado de la mente, morfismos son o evolucion dinamica o transformacion de la informacion
        \item Funtores Body Mind: Proyeccion de ambas direcciones para ver influencias, transformacion natural, como el cuerpo es afectado y afecta a la mente.
        \item Teoremas: Invarianza sensimotora: conmutacion entre percepcion y accion. Explicación de surgimiento de Minimizacion de sorpresa como límite?
    \end{enumerate}

Beyond the specific case of SBL, this level is also designed to connect with a level of a broader comparative framework that we are currently developing in parallel. In that framework ... 

The motivation for introducing this additional framework is twofold. On the one hand, it avoids forcing all architectural considerations into a single, overly complex formalism. On the other hand, it reflects the current research landscape, where new agent architectures are continuously proposed and refined. The SBL architecture presented here is intended to be one concrete instance within this broader space where we can compare the properties of the different architectures designs. We are currently in the process of consolidating and formalizing this comparative framework, with the goal of submitting it as a separate contribution to an upcoming conference.

In this broader architectural framework, the architectures are represented using wiring or string diagrams, which emphasize compositional structure, information flow, and interaction interfaces. From this perspective, an architecture is viewed as a network of interconnected component whose composition defines the overall behaviour of the system.

Following this approach, the SBL architecture can be represented as a wiring diagram that makes explicit the coupling between Mind and Body, the internal organization of cognitive modules, and the flow of information between perception, memory, and action. Figure  provides a preliminary diagrammatic representation of the SBL agent architecture. This diagram is intended as an intuitive guide rather than a definitive formal object, and its precise categorical interpretation remains part of ongoing work.


\begin{figure}[ht!]
    \centering
    \includegraphics[width=0.95\textwidth]{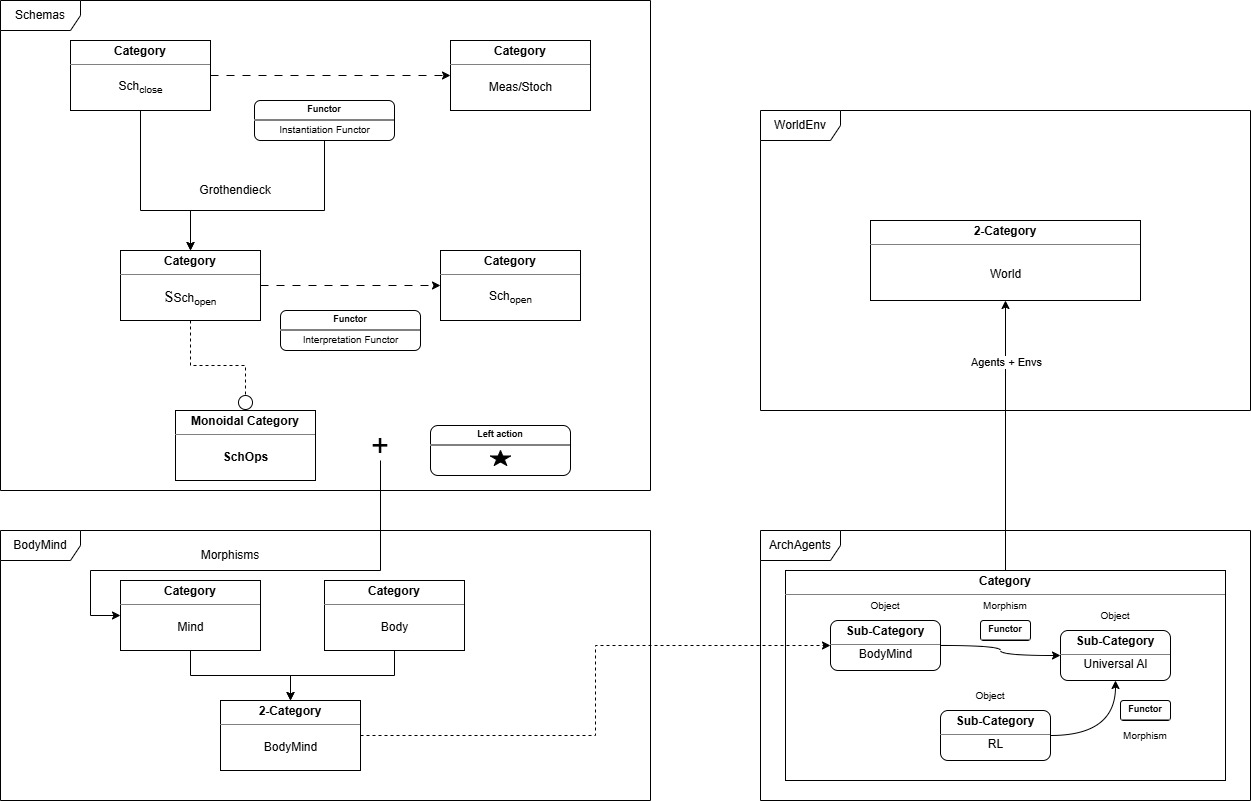}
    \caption{Category formalization diagram}
    \label{fig:Category formalization diagram}
\end{figure}

\section{Work in Progress}
\label{sect:Work in Progress}

\subsection{Schema level Formalization}
\label{sect:WIP Schema level}

\begin{itemize}
    \item Work in progress about the definition that a set of schemas that match with add and delete operator.
    \item Work in progress about the Abstract schema definition
    \item Work in progress about the introduction of $S_{Abs}$ in the fundamental operators definition
    \item Work in progress about the proofs of all the properties detailed that the fundamental operators have to fullfil.
    \item Work in progress about formalization and proof Compositional Encapsulation property.
    \item Work in progress about the completion of the set of fundamental operators.
    \item Work in progress and discussion of Implementation category
    \item Work in progress and discussion about $Sch_{sem}$ and resolve differences and similarities with $Sch_{impl}$. Simplify formalization if possible.
\end{itemize}

\subsection{Workflow level Formalization}
\label{sect:WIP Workflow level}

The workflow level presented captures a minimal compositional core for structuring schema-level operations. We are currently formalization several extensions in order to increase its expressive power while preserving its declarative and modular nature. The following points summarize the main directions of ongoing work:

\begin{itemize}
    \item \textbf{Conditional branching and control flow.}  
    The current duoidal structure captures sequential and parallel composition but does not yet formalize conditional execution. We plan to extend the workflow language with explicit branching points, where the continuation of the workflow depends on the evaluation of a condition or metric. Importantly, such conditions will be specified declaratively within the workflow, while their evaluation will be delegated to the executing cognitive modules at the Mind level.

    \item \textbf{Iterative structures and loops.}  
    Beyond one-shot branching, workflows may require iterative execution until a condition is satisfied or a stopping criterion is met. We are exploring how loop-like structures (e.g., ``repeat until'' or bounded iteration) can be incorporated as higher-order workflow constructs, without collapsing the distinction between structural specification and cognitive execution.

    \item \textbf{Explicit reference to metrics and cognitive queries.}  
    Future workflows will be able to specify which metrics, statistics, or cognitive queries must be evaluated at particular points (e.g., computing an average, a divergence, or a confidence measure). These references will not encode the computation itself, but will act as typed placeholders indicating that a value must be obtained during execution.

    \item \textbf{Interaction with memory systems.}  
    Many metrics and conditions require access to stored data. We plan to make explicit, at the workflow level, when information must be retrieved from memory (e.g., episodic traces, aggregated statistics, or schema instances). The workflow will specify \emph{what} data is required, while the Mind-level execution will determine \emph{how} this data is accessed, cached, or updated.

    \item \textbf{Data-dependent operators of schemas.}  
    Operators such as \texttt{Upd} may depend on specific observational data or summaries extracted from memory. Ongoing work aims to clarify how workflows can declare such dependencies, allowing update operators to be parameterized by memory-derived values at execution time, without embedding data manipulation into the workflow itself.

    \item \textbf{Separation between planning and execution.}  
    A key design principle under active development is the strict separation between workflow specification and workflow execution. Workflows define the structural plan: the ordering of operations, the points where evaluations occur, and the possible control-flow paths. Cognitive modules at the Mind level implement a concrete executor that computes metrics, queries memory, and resolves branches in accordance with the workflow specification.

    \item \textbf{Higher-order workflow structures.}  
    We are investigating whether richer algebraic structures (e.g., tri- or multi-oidal extensions) are required to natively support control flow, guarded execution, or hierarchical workflows. These extensions aim to increase expressiveness while maintaining compatibility with the existing duoidal core.

    \item \textbf{Reusability and parameterization of workflows.}  
    Another open direction concerns the reuse of workflows across different cognitive contexts. We are exploring mechanisms for parameterizing workflows by schema sets, metric types, or execution policies, enabling the same abstract workflow to be instantiated by different cognitive modules.

    \item \textbf{Refinement of the formalization via wiring diagrams.}  
    While the current formalization relies on duoidal categorical structures, we are actively exploring alternative but equivalent presentations based on wiring diagrams and string-diagrammatic formalisms. The goal is not to increase expressive power, but to simplify the formal description, improve readability, and facilitate the transition from abstract definitions to concrete implementations. Wiring diagrams may provide a more intuitive representation of workflows, data dependencies, and compositional structure, while preserving the underlying algebraic properties required by the framework.

    \item \textbf{Concrete graphical example of a schema-based workflow.}
    We plan to include an end-to-end illustrative example of a concrete LLM workflow, but expressed entirely in terms of schemas and operators. A particularly relevant case is an LLM-based agent, where token-level processing, internal representations, tool usage, and reasoning loops could be described using schema compositions and workflows. This example aims to demonstrate the applicability of the framework to modern agentic systems without committing to specific learning algorithms or our specific whole architecture of SBL agents.

    \item \textbf{Temporal and asynchronous extensions.}
    The explicit treatment of time, delays, and asynchronous interaction between body and mind remains an open direction. Future extensions may incorporate time-indexed schemas or event-based workflows.
\end{itemize}

These extensions are guided by a central objective: to enrich the expressive power of workflows while preserving their declarative, structural, and model-agnostic character. The workflow level is not intended to perform computation, learning, or optimization, but to provide a principled language for specifying how such processes are orchestrated within an adaptive cognitive architecture.

\subsection{Mind level Formalization}
\label{sect:WIP Mind level}

\begin{itemize}
    \item \textbf{Structural refinement of the Mind category.}  
    The current definition of the Mind category integrates multiple interacting components (mental spaces, internal schemas, cognitive modules, and memory systems) in a single construction. A key line of ongoing work is to simplify and reorganize this structure, possibly through fibrations, indexed categories, or layered constructions that separate representational, operational, and control aspects more cleanly.

    \item \textbf{Clarification of the role and granularity of cognitive modules.}  
    While cognitive modules are currently defined as structured units coordinating workflows, their precise scope, lifecycle, and interaction patterns remain under refinement. Future work will clarify how modules are instantiated, selected, composed, and possibly learned?, as well as how hierarchical or meta-modules can be defined.

    \item \textbf{Integration of control flow and execution semantics.}  
    The present formalization treats workflows declaratively and delegates execution semantics to cognitive modules. An important extension is to formalize how conditional branching, looping, termination conditions, and metric-driven control are represented and managed at the Mind level, while preserving the separation between structural specification and concrete computation.

    \item \textbf{Deeper coupling between cognitive modules and memory subsystems.}  
    Although memory subsystems are formally defined and coherently indexed by schemas, their operational coupling with cognitive modules is still preliminary. Future work will refine how workflows explicitly declare memory dependencies, how memory queries and updates are scheduled during execution, and how different memory subsystems interact or compete.

    \item \textbf{Dynamics of mental state evolution.}  
    The current framework models mental states as objects of a category and cognitive processes as morphisms. Further work is required to study the dynamical properties of these transformations, including stability, convergence, reversibility, and long-term evolution under repeated module execution.

    \item \textbf{Learning and adaptation of the Mind structure itself.}  
    At present, the structure of the Mind (its modules, memories, and internal schemas) is assumed to be given. A central research direction is to allow the agent to modify its own cognitive architecture: creating, deleting, refining, or reorganizing modules and memory subsystems as a result of experience.

    \item \textbf{Simplification via diagrammatic or operational presentations.}  
    As with the workflow level, we are exploring alternative representations of the Mind level using wiring diagrams, string diagrams, or operational semantics, with the aim of improving readability, reducing formal overhead, and facilitating practical implementations.

    \item \textbf{Concrete illustrative examples.}  
    The Mind level would benefit from explicit worked examples, such as a minimal SBL agent or a schema-based abstraction of an multi LLM-driven agent environment, showing how cognitive modules, workflows, and memory subsystems could possibly interact in practice under this framework.

    \item \textbf{Parallel, distributed, and selective execution of cognitive modules.}  
    The current formulation allows for parallel composition of cognitive processes via the monoidal structure of the Mind category, but the concrete mechanisms by which cognitive modules are scheduled, executed concurrently, or distributed across mental subsystems remain to be studied. Future work will investigate how mental state updates are resolved under parallel or asynchronous execution, how conflicts or dependencies between modules are handled, and how the cognitive kernel selects, prioritizes, or suppresses module execution based on context, resource constraints, or internal goals. Use of optics/lenses

    \item \textbf{Partial access and update of mental states via optics (lenses).}  
    The current formulation treats cognitive processes as global transformations of mental states. A promising extension is to introduce optics (in particular lenses) to formalize partial views and localized updates of the Mind object. This would allow cognitive modules to explicitly specify which components of the mental state they read from and write to (e.g., specific submemories, internal models, or mental spaces), while guaranteeing global coherence.
    
    \item \textbf{Non-interference and parallel execution through lens independence.}  
    Optics provide a principled criterion for parallel or concurrent execution of cognitive modules: modules whose associated lenses are disjoint or commute can be executed in parallel without conflict. This perspective offers a structural semantics for the parallel composition $\otimes_M$ in the Mind category and clarifies when concurrent mental updates are admissible.
    
    \item \textbf{Separation between workflow specification and state focus.}  
    By combining workflows with lenses, it becomes possible to separate the specification of cognitive procedures (workflows of schema operators) from the specification of the mental substructures they act upon. Workflows describe \emph{what} transformations are performed, while lenses describe \emph{where} in the mental state these transformations apply. This separation aligns with the declarative nature of workflows and the execution semantics of cognitive modules.
    
    \item \textbf{Memory access and aggregation as optic-based operations.}  
    Lenses offer a natural formalization of read/write interfaces to memory subsystems, enabling selective access, aggregation, and update of memory contents. This approach may unify the existing presheaf-based memory formalization with operational notions of querying and updating, and clarify how memory interactions are embedded within cognitive module execution.
    
    \item \textbf{Optic-based factorization of cognitive processes.}  
    Cognitive morphisms in the Mind category may be factorized into a view–update–put pattern induced by lenses, making explicit the internal structure of mental transformations. This factorization would support finer-grained reasoning about cognitive dynamics, reversibility, and partial updates, without altering the existing high-level definitions.
    
    \item \textbf{Diagrammatic presentation of mental transformations.}  
    Optics admit a natural string-diagrammatic representation, which could significantly improve the readability and usability of the Mind level. Such representations may serve as an intermediate layer between the abstract categorical formalism and concrete implementations, especially for complex cognitive workflows involving multiple memories and modules.
\end{itemize}

\subsection{Agent level Formalization}
\label{sect:WIP Agent level}

\begin{itemize}
    \item \textbf{Refinement of the Body formalization.}  
    The current definition of the Body category provides a minimal abstraction of the agent’s sensorimotor interface. Ongoing work aims to refine this formalization to better capture different forms of embodiment, sensorimotor constraints, and environment coupling, including possible extensions to dynamical, stochastic, or hybrid body models.

    \item \textbf{Categorical nature of the SBL architecture.}  
    The precise categorical structure underlying the SBL architecture remains under investigation. We are exploring alternative formulations, including categories, bicategories, and weak higher-categorical structures, to adequately represent the interaction between informational transformations (Mind) and physical or dynamical evolution (Body), and to select the most appropriate level of expressiveness for different analytical goals.

    \item \textbf{Natural integration of asynchrony with the environment.}  
    A central open problem is how to model asynchronous interaction between the agent and its environment in a principled way. Future work will study how perception, action, and internal cognitive processes evolve on different time scales, and how such asynchrony can be represented categorically without collapsing into ad hoc control mechanisms.

    \item \textbf{Coupling between Body and Mind.}  
    While the Body and Mind categories are defined independently, their coupling within the SBL agent is currently only sketched. Further work is needed to formalize how sensorimotor data is transformed into mental representations, how actions are generated from cognitive processes, and how feedback loops between Body and Mind are coherently represented.

    \item \textbf{Architectures as compositional wiring diagrams.}  
    We are developing a diagrammatic representation of agent architectures based on wiring or string diagrams, in which components such as cognitive modules, memory systems, and sensorimotor interfaces are composed explicitly. This approach aims to improve both conceptual clarity and practical implementability, while remaining faithful to the underlying categorical semantics.

    \item \textbf{Comparative architectural framework.}  
    The integration of SBL within a broader framework of heterogeneous agent architectures is ongoing. Future work will formalize how different architectures can be expressed, compared, and related using structural invariants and functorial mappings, and how architectural properties translate into behavioral or semantic differences.

    \item \textbf{Executable and illustrative examples.}  
    The Architecture level would benefit from concrete examples, such as a minimal SBL agent or a schema-based abstraction of an LLM-driven agent, showing explicitly how Mind and Body are coupled and how the resulting architecture operates over time.
    
\end{itemize}

\subsection{Initial statements, properties and proofs}
\label{sect:WIP Initial statements}

This section collects a set of initial statements and desired properties of the SBL architecture. At the present stage, these statements should be understood as guiding principles and proof objectives rather than fully closed theorems. Their role is twofold: first, to clarify which structural and semantic/agentic properties we expect from a schema-based architecture for AGI; and second, to provide a common ground for comparison with alternative agent architectures, as developed in a parallel work.

We distinguish between global architectural properties, which concern the expressivity and stability of the schema framework itself, and module-level properties, which concern the behavior of specific cognitive capabilities implemented within the architecture.

\begin{itemize}

    \item \textbf{Consistency of the schema theory.}  
    Informally, we say that the schema theory is consistent if distinct schemas cannot be collapsed into a single schema under the application of the admissible schema operators.

    \textit{Intuitive meaning and consequences:}
    \begin{itemize}
        \item Consistency implies conceptual differentiability: the agent is able to maintain and reason about distinct internal representations, even when they are structurally similar.
        \item An inconsistent schema theory would lead to a degeneration in which all schemas collapse into a single equivalence class, severely limiting the agent’s representational capacity.
        \item Consistency guarantees that information loss induced by schema transformations never results in a total destruction of representational diversity.
    \end{itemize}

    \item \textbf{Convergence properties.}  
    At the architectural level, we are interested in establishing conditions under which repeated execution of workflows and cognitive modules leads to stable internal representations or policies. Unlike classical convergence results tied to a single algorithm, convergence in SBL is expected to be modular and context-dependent, reflecting the heterogeneous nature of cognitive processes.

    \item \textbf{Completeness of the schema-based representation.}  
    Completeness informally refers to the ability of the schema language and its operators to express all transformations required by the cognitive modules implemented within the architecture. This notion is not tied to computational universality, but rather to representational adequacy with respect to the targeted cognitive capacities.

    \item \textbf{Convergence of a cognitive module for value iteration.}  
    As a first concrete example, we analyze a cognitive module implementing value iteration within the SBL framework. A preliminary convergence result is provided in the appendix \ref{app:Convergence value iteration}. At this stage, this result should be regarded as exploratory: it demonstrates feasibility and architectural compatibility rather than delivering a fully abstract categorical proof.

    \item \textbf{Convergence of a cognitive module for causal discovery.}  
    Similarly, we present in the appendix \ref{app:Convergence Causal Discovery} an initial convergence analysis for a cognitive module performing causal discovery. This result serves as an existence proof that non-trivial learning processes can be embedded within the SBL architecture, while leaving open the refinement of the underlying assumptions and proof techniques.
\end{itemize}

Overall, these statements delineate the space of theoretical guarantees that the SBL architecture aims to support. A systematic development of fully formal proofs, as well as a comparative analysis with other agent architectures based on these properties, is left for future work.

\subsection{Agents and LLMs workflows}
\begin{itemize}
    \item Definition of an LLM workflow based in schemas
\end{itemize}

\section{Conclusion}
Your conclusion here

\subsection{Agents and LLMs}
Respondiendo a la pregunta de cómo cuadran los agentes LLM en nuestro marco, para mi pueden ser 3 cosas:

Primera: pueden funcionar como orquestadores del sistema cognitivo, 

Segunda: son agentes muy básicos (una especie de reflex agents with goals si lo intentamos cuadrar en la clasificacion de Russel y Norvig) con un único módulo cognitivo,

Tercera: lo que llaman agentes, para nosotros son módulos cognitivos, con blackboxes por dentro

\section*{Acknowledgments}
This was was supported in part by......

\paragraph{Funding}
This research was supported by Cognodata Consulting SL.

\paragraph{Conflicts of Interest}
Pablo de los Riscos was employed by the company Cognodata Consulting

\appendix

\section{Convergence of Cognitive module for value iteration}
\label{app:Convergence value iteration}
In this section we are going to detail a cognitive module that builds internal models through value iteration. We are going to prove that any agent with this cognitive module will converge on the internal models constructed. Property of model convergence: all internal models of the agent converge.

\subsubsection{Definition of the Cognitive Module}
This Cognitive Module gets as input three implemented schemas $\psi_T, \psi_G,\psi_R \in Sch_{impl}$:
\[\text{Pred Schema }\,\psi_T: O\times D \xrightarrow{} O\]
\[\text{Goal Schema }\,\psi_V: O \xrightarrow{} \mathcal{P}(\mathbb{R})\]
\[\text{Goal Schema }\,\psi_R: O\times D\times O \xrightarrow{} \mathcal{P}(\mathbb{R})\]
These three schemas are all implemented in the language of tabular tables $Tables$, that is, tables with a finite dimension and size, storing the values of parameters:
\[Tab_{\psi_T}= Table(|O|\times |D|) \qquad Tab_{\psi_V}= Table(|O|\times |D|) \qquad Tab_{\psi_R}= Table(|O|\times |O| \times |D|)\]
Then the model functor lets us use the corresponding table with some specific parameters $\Theta$, and thus, evaluate the implemented schemas:
\[eval_{(\psi_T,\theta_T)}(o,d)=\Delta=\theta_T^{(o,d)} \quad eval_{(\psi_V,\theta_V)}(o)=V(o)=\theta_V^{(o)} \quad 
eval_{(\psi_R,\theta_R)}(o,d,o^\prime)=R(o,d,o^\prime)=\theta^{(o,d,o^\prime)}_R\]
where $\Delta$ means that the evaluation returns a probability distribution over the possible values $o^\prime$ and given by the parameters $\theta_T^{(o,d)}$, we will denote by $T(o,d,o^\prime)$ to the probability assigned to $o^\prime$ in $\Delta$.

Then the workflow of the Cognitive module is as follows:
\[Loop (|V^{new}(o)-V^{old}(o)|<\delta, Upd(\psi_V))\]
where $Upd(\psi_V)$ transform the parameters of $\psi_V$ through the function $u_{\Theta}:\Theta \longrightarrow \Theta$ as follows:

\[\theta_V^{(o)}=\max_{d} \sum_{o^\prime} T(o,d,o^\prime) \cdot \big(R(o,d,o^\prime) + \gamma\theta_V^{(o^\prime)} \big)\]

\begin{newlemma}[Identifiability of Upd operator with Bellman operator]
\label{lemma: Identifiability Upd and Bellman}
    Let $\hat{B}$ the optimality bellman operator defined in \cite{vajjha2020certrlformalizingconvergenceproofs}:
    \[\hat{B}:(O \xrightarrow{} \mathbb{R}) \xrightarrow{} (O \xrightarrow{} \mathbb{R}) \]
    \[ W \mapsto \lambda o, \max_{d\in D} (r(o,d) + \gamma \mathbb{E}_{T(o,d)}[W])\]
    Then, for the implemented goal schema $(\psi_V,(L_{\psi_V},\theta_V)) \in Sch_{impl}$ the following diagram commutes
   \[
    \begin{tikzcd}
    (\psi_V, (L_{\psi_V},\theta_V)) \ar[r, "Upd"] \ar[d, "\mathbf{Model}"] &
    (\psi_V, (L_{\psi_V},\theta^\prime_V)) \ar[d, "\mathbf{Model}"] \\
    \big(O \xrightarrow{} \mathcal{P}(\mathbb{R})\big) \ar[r, "\hat{B}"] & \big(O \xrightarrow{} \mathcal{P}(\mathbb{R})\big)
    \end{tikzcd}
    \]
\end{newlemma}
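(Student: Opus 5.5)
The plan is to prove commutativity by a pointwise computation on the tabular implementation. The Model functor sends $(\psi_V,(Tables,\theta_V))$ to the map $o\mapsto \theta_V^{(o)}$, read as the Dirac kernel $o\mapsto\delta_{\theta_V^{(o)}}\in\mathcal{P}(\mathbb{R})$. We therefore first fix how $\hat{B}$ acts on arrows $O\to\mathcal{P}(\mathbb{R})$. The natural choice is to restrict to deterministic (Dirac-valued) kernels, identify them with functions $W:O\to\mathbb{R}$, and extend $\hat{B}$ along this identification; on non-Dirac kernels one could instead pass through the expectation map $\mathcal{P}(\mathbb{R})\to\mathbb{R}$. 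Under this identification, going down and then right gives $\hat{B}(\theta_V)$. Going right and then down gives $o\mapsto \theta_V'^{(o)}$, where $\theta_V'=u_\Theta(\theta_V)$ is the update defined just before the lemma.

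The key steps, in order, are these.

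\textbf{Step 1.} Define the reward term of $\hat B$ from the goal schema $\psi_R$ by
\[
r(o,d):=\sum_{o'}T(o,d,o')R(o,d,o'),
\]
where $T(o,d,\cdot)$ is the distribution $\Delta=\mathrm{eval}_{(\psi_T,\theta_T)}(o,d)$ and $R=\mathrm{eval}_{(\psi_R,\theta_R)}$. Likewise take $\mathbb{E}_{T(o,d)}[W]=\sum_{o'}T(o,d,o')W(o')$; this is a finite sum because $O$ and $D$ are finite in the $Tables$ language.

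\textbf{Step 2.} Use linearity of the finite sum to rewrite the update rule:
\[
\sum_{o'}T(o,d,o')\bigl(R(o,d,o')+\gamma\theta_V^{(o')}\bigr)=r(o,d)+\gamma\,\mathbb{E}_{T(o,d)}[\theta_V].
\]
Taking $\max_d$ on both sides shows that $\theta_V'^{(o)}=\hat B(\theta_V)(o)$ for every $o$.

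\textbf{Step 3.} Conclude that the two composite arrows agree as Dirac kernels. This is equality in $Arr(\mathbf{KL(G)})$, so the square commutes. We also note that $Upd$ is a vertical morphism $(id_{\psi_V},(id_{L},u_\Theta))$, so the top arrow is legitimate in $Sch_{impl}$.

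The main obstacle is not the computation but typing. As drawn, $\hat{B}$ acts on $O\to\mathbb{R}$ while the bottom objects are kernels $O\to\mathcal{P}(\mathbb{R})$. In addition, $T$, $R$ and $\gamma$ are not part of $\psi_V$'s data, so the square really depends on the fixed parameters $\theta_T,\theta_R$. I would state these explicitly as standing hypotheses, namely that $\theta_T,\theta_R$ are held fixed during the loop and $0\le\gamma<1$. I would also justify the Dirac embedding $\mathbb{R}^O\hookrightarrow \mathrm{KL(G)}(O,\mathbb{R})$ as a faithful, injective identification, so that the pointwise equality of Step 2 lifts to equality of arrows. If this identification turns out to be problematic, the fallback is to state the lemma with the bottom row $\mathbb{R}^O\to\mathbb{R}^O$, composing Model with the expectation map.
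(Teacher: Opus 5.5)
Your proposal is correct and follows the same route as the paper's proof. Both arguments treat $Upd$ as the vertical morphism $(id_{\psi_V},(id_{L_{\psi_V}},u_\Theta))$, evaluate the updated table pointwise, and use the evaluation rule $model_{(\psi_V,\theta_V)}(o')=\theta_V^{(o')}$ to identify the result with $\hat{B}$ applied to the old model. Your explicit choice $r(o,d)=\sum_{o'}T(o,d,o')R(o,d,o')$ with the linearity step, and your Dirac-embedding fix for the $O\to\mathbb{R}$ versus $O\to\mathcal{P}(\mathbb{R})$ mismatch, are details the paper leaves implicit (it writes $\hat{B}$ directly in expanded form and treats the model as real-valued), so they tighten the argument without changing it.
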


\begin{proof}
By definition, $Upd(\psi_V)$ is a vertical morphism in $Sch_{impl}$, 
\[Upd(\psi_V) = (id_{\psi_V}, (id_{L_{\psi_V}}, u_\Theta)),\]
where $u_\Theta : \Theta_{\psi_V} \to \Theta_{\psi_V}$ is the update of the measurable parameters.
Also by definition the action of $\mathbf{Model}$ on vertical morphisms is
\[\mathbf{Model}(Upd(\psi_V)) = (id_{Dom_{(\psi_V,\theta_V)}},\, model_{(\psi_V,u_\Theta(\theta_V))}).\]
Hence, the composite $\mathbf{Model}\circ Upd(\psi_V)$ corresponds to evaluating
the updated model with parameters $u_\Theta(\theta_V)$, that is,
\[(\mathbf{Model}\circ Upd_{\psi_V})(\psi_V,(L_{\psi_V},\theta_V)) 
= model_{(\psi_V,u_\Theta(\theta_V))}.\]

Now, by the definition of the goal schema update, the measurable map
$model_{(\psi_V,u_\Theta(\theta_V))}: O \to \mathbb{R}$ acts on each $o\in O$ as
\[model_{(\psi_V,u_\Theta(\theta_V))}(o)
= \max_{d\in D}\sum_{o'\in O} T(o,d,o')\,
\big(R(o,d,o') + \gamma\,\theta_V^{(o')}\big).\]

On the other hand, applying the Bellman operator to the model $\mathbf{Model}((\psi_V,(L_{\psi_V},\theta_V))) = model_{(\psi_V,\theta_V)}$ yields
\[(\widehat{B}\circ \mathbf{Model})((\psi_V,(L_{\psi_V},\theta_V)))(o)
= \max_{d\in D}\sum_{o'\in O} T(o,d,o')\,
\big(R(o,d,o') + \gamma\, model_{(\psi_V,\theta_V)}(o')\big).\]

Since by the evaluation rule $model_{(\psi_V,\theta_V)}(o')=\theta_V^{(o')}$,
both expressions coincide pointwise for every $o \in O$.
Therefore,
\[
\mathbf{Model}\circ Upd_{\psi_V} = \widehat{B}\circ \mathbf{Model},
\]
and the diagram commutes in $Arr(\mathbf{KL(G)})$. 
\end{proof}

\begin{remark}[Functorial structure and naturality of the update process]
The commutativity condition 
\[\mathbf{Model}\circ Upd = \widehat{B}\circ \mathbf{Model}\]
can be understood as a naturality condition connecting the implementation level and the semantic level of the agent’s representations. 
Formally, the functor 
\[\mathbf{Model} : Sch_{impl} \longrightarrow Arr(\mathbf{KL(G)})\]
acts as a natural transformation between the endofunctors 
\[Upd,\; \widehat{B} : Sch_{impl} \to Sch_{impl}\]
and 
\[Arr(\mathbf{KL(G)}) \to Arr(\mathbf{KL(G)}),\]
respectively, in the sense that it preserves the update dynamics across all fibres:
\[\forall\,(\psi,(L_\psi,\theta))\in Sch_{impl}, \qquad 
\mathbf{Model}\big(Upd(\psi,(L_\psi,\theta))\big)
= \widehat{B}\big(\mathbf{Model}(\psi,(L_\psi,\theta))\big).\]
Intuitively, this means that the modeling process $\mathbf{Model}$ “covers” the Bellman update functor $\widehat{B}$, establishing a natural bridge between implementation updates and their abstract functional semantics. 
\end{remark}

\begin{newcorollary}[Upd as a lifting of $\widehat{B}$ and convergence by Banach]
\label{cor: Upd as a lifting of B}
Let $(\psi_V,(L_{\psi_V},\theta_V))\in Ob(Sch_{impl})$ be the implemented goal schema defined with the same notation and assumptions: $O,D$ finite, for each $(o,d)$ the row $T(o,d,\cdot)$ 
is a probability distribution, $R$ bounded 
, and $0 \leq \gamma < 1$. 
Let $Upd$ be the vertical update operator, the Model functor and the Bellman optimality operator defined as in \cite{vajjha2020certrlformalizingconvergenceproofs} 
Then:
\begin{enumerate}
    \item \textbf{(Lifting / naturality)} The following categorical identity holds:
    \[
    \mathbf{Model}\circ Upd_{\psi_V} \;=\; \widehat{B}\circ \mathbf{Model},
    \]
    that is, $Upd_{\psi_V}$ is a lifting of the Bellman operator $\widehat{B}$ to the total category $Sch_{impl}$.
    
    \item \textbf{(Convergence of evaluations)}  
    For any initial parameter configuration $\theta_V^{(0)}\in\Theta_{\psi_V}$, 
    consider the sequence defined recursively by
    \[\theta_V^{(n+1)} := u_\Theta(\theta_V^{(n)}).\]
    Then the corresponding sequence of evaluated models
    \[V_n := \mathbf{Model}\big((\psi_V,(L_{\psi_V},\theta_V^{(n)}))\big)\]
    satisfies $V_{n+1} = \widehat{B}(V_n)$ and converges in $\|\cdot\|_\infty$ to the unique fixed point $V^\star$ of $\widehat{B}$.

    \item \textbf{(Convergence in parameter space)}  
    If, in addition, the restriction of $\mathbf{Model}$ to the fibre $\pi^{-1}(\psi_V)$ is injective 
    (for instance, in the tabular case where each $o\in O$ corresponds to one unique parameter $\theta_V^{(o)}$),
    then the sequence of parameters $\theta_V^{(n)}$ converges in the supremum norm to a unique
    $\theta_V^\star\in\Theta_{\psi_V}$ such that
    \[\mathbf{Model}((\psi_V,(L_{\psi_V},\theta_V^\star))) = V^\star.\]
\end{enumerate}
\end{newcorollary}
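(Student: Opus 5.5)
Part 1 follows directly from Lemma \ref{lemma: Identifiability Upd and Bellman}. That lemma already shows that the square with horizontal arrows $Upd$ and $\widehat{B}$ and vertical arrows $\mathbf{Model}$ commutes for every parameter configuration $\theta_V$ in the fibre $\pi^{-1}(\psi_V)$. Since $\theta_V$ is arbitrary, the pointwise identity $\mathbf{Model}((\psi_V,(L_{\psi_V},u_\Theta(\theta_V)))) = \widehat{B}(\mathbf{Model}((\psi_V,(L_{\psi_V},\theta_V))))$ holds on the whole fibre. This is exactly the statement that $Upd_{\psi_V}$ lifts $\widehat{B}$ along $\mathbf{Model}$. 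No new computation is needed beyond restating the lemma fibrewise.

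For part 2, I would first apply part 1 inductively. Setting $\theta_V^{(n+1)}=u_\Theta(\theta_V^{(n)})$ gives $V_{n+1}=\mathbf{Model}(Upd(\ldots\theta_V^{(n)}))=\widehat{B}(V_n)$, so $V_n=\widehat{B}^n(V_0)$.

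I would then prove that $\widehat{B}$ is a $\gamma$-contraction on the Banach space $(\mathbb{R}^O,\|\cdot\|_\infty)$. This space is complete because $O$ is finite, and $\widehat{B}$ maps it into itself because $R$ is bounded. The key estimate uses $|\max_d a_d-\max_d b_d|\le \max_d|a_d-b_d|$. This gives
\[
|\widehat{B}W(o)-\widehat{B}W'(o)| \le \gamma \max_{d}\sum_{o'}T(o,d,o')\,|W(o')-W'(o')| \le \gamma\|W-W'\|_\infty ,
\]
where the last step uses that each row $T(o,d,\cdot)$ is a probability distribution. Banach's fixed point theorem then yields a unique fixed point $V^\star$. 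It also gives the rate $\|V_n-V^\star\|_\infty\le\gamma^n\|V_0-V^\star\|_\infty$, so $V_n\to V^\star$. Alternatively, I could cite this directly from \cite{vajjha2020certrlformalizingconvergenceproofs}.

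For part 3, in the tabular case $\mathbf{Model}$ restricted to the fibre is the identification $\theta_V\mapsto(o\mapsto\theta_V^{(o)})$. Under this identification $\|\theta_V^{(n)}-\theta_V^{(m)}\|_\infty=\|V_n-V_m\|_\infty$. The sequence $(\theta_V^{(n)})$ is therefore Cauchy and converges to the parameter $\theta_V^\star$ with $\theta_V^{\star(o)}=V^\star(o)$. Uniqueness follows from injectivity: any $\theta$ with $\mathbf{Model}(\theta)=V^\star$ must equal $\theta_V^\star$.

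The main obstacle is making this transfer rigorous for a general injective $\mathbf{Model}$. Injectivity alone does not make the inverse continuous, nor does it guarantee that $V^\star$ lies in the image. I would therefore state part 3 under the additional hypothesis that $\mathbf{Model}|_{\pi^{-1}(\psi_V)}$ is an isometry onto its image, or at least a homeomorphism onto a closed image containing $V^\star$. The tabular case satisfies this trivially, since the map is an isometric bijection onto $\mathbb{R}^O$. A smaller point of care is the mismatch between the codomain $\mathcal{P}(\mathbb{R})$ and real-valued $V$. I would resolve it by identifying real values with Dirac measures, so that the sup-norm argument is carried out on $\mathbb{R}^O$.
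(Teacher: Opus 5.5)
Your proposal follows the paper's own route: read the identifiability lemma fibrewise, iterate to get $V_{n+1}=\widehat{B}(V_n)$, conclude by value-iteration convergence, then transfer to parameters. The one difference in part 2 is that you prove the $\gamma$-contraction on $(\mathbb{R}^O,\|\cdot\|_\infty)$ and apply Banach yourself, whereas the paper simply defers this step to \cite{vajjha2020certrlformalizingconvergenceproofs}. Your caveat on part 3 is well founded: the paper's step ``each $\theta_V^{(o)}$ is uniquely determined by $V(o)$, so convergence of $V_n$ gives convergence of $\theta_V^{(n)}$'' tacitly relies on the tabular identification, namely a continuous inverse whose image contains $V^\star$, which bare injectivity does not provide. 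Your strengthened hypothesis, which the tabular setting of this module satisfies, is therefore the right repair, and with it injectivity plus the lifting identity also gives the paper's extra remark that $u_\Theta(\theta_V^\star)=\theta_V^\star$.
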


\begin{proof}
By the Identifiability Lemma \ref{lemma: Identifiability Upd and Bellman} we have
\[\mathbf{Model}\circ Upd_{\psi_V} = \widehat{B}\circ \mathbf{Model}.\]
\begin{enumerate}
    \item  This directly expresses that $Upd_{\psi_V}$ is a \emph{lifting} of $\widehat{B}$ to $Sch_{impl}$.
    \item  Let $\theta_V^{(0)}$ be arbitrary and define the sequence $\theta_V^{(n+1)} = u_\Theta(\theta_V^{(n)})$. 
    Let $V_n := \mathbf{Model}((\psi_V,(L_{\psi_V},\theta_V^{(n)})))$. 
    By the commutative identity,
    \[V_{n+1} = \mathbf{Model}\big(Upd_{\psi_V}((\psi_V,(L_{\psi_V},\theta_V^{(n)})))\big)
    = \widehat{B}\big(\mathbf{Model}((\psi_V,(L_{\psi_V},\theta_V^{(n)})))\big)
    = \widehat{B}(V_n).\]
    Hence the $V_n$ sequence is exactly the value-iteration process for $\widehat{B}$, and by the same way it is proved in \cite{vajjha2020certrlformalizingconvergenceproofs}, the value iteration algorithm converges. 
    \item If $\mathbf{Model}|_{\pi^{-1}(\psi_V)}$ is injective, 
    then each coordinate $\theta_V^{(o)}$ is uniquely determined by $V(o)$. 
    Therefore, convergence of $V_n$ implies coordinatewise convergence of $\theta_V^{(n)}$ to a unique $\theta_V^\star$, 
    which is the preimage of $V^\star$ under $\mathbf{Model}$. 
    By the commutative law, $\theta_V^\star$ is a fixed point of $u_\Theta$. 
\end{enumerate} 
\end{proof}

\begin{proposition}[Convergence of the Cognitive Module]
\label{prop:cognitive-module-convergence}
Let $\mathcal{M}_V$ denote the Cognitive Module defined before. Under the conditions of \ref{lemma: Identifiability Upd and Bellman} and \ref{cor: Upd as a lifting of B}, the module $\mathcal{M}_V$ converges:
\[
(\psi_V,(L_{\psi_V},\theta_V^{(n)}))
\;\xrightarrow[n\to\infty]{Upd}\;
(\psi_V,(L_{\psi_V},\theta_V^\star)),
\]
where the corresponding model $\mathbf{Model}((\psi_V,(L_{\psi_V},\theta_V^\star))) = V^\star$ 
is the unique fixed point of the Bellman operator $\widehat{B}$.

\begin{proof}
By Lemma \ref{lemma: Identifiability Upd and Bellman} and Corollary~\ref{cor: Upd as a lifting of B}, each execution of the Cognitive Module $\mathcal{M}_V$ converges to an implemented goal schema whose model realizes the fixed point of the Bellman operator.
\end{proof}
\end{proposition}

\section{Convergence of Cognitive module for causal inference and causal discovery}
\label{app:Convergence Causal Discovery}
\subsubsection{Predictive causal schemas}

\begin{definition}[Predictive Causal Schema]
Let $\mathbf{Sch}_{\mathrm{syn}}$ denote the syntactic category of schemas.
A predictive causal schema is a specific type of predictive schema, and therefore, an object in $Sch_{syn}$. The predictive causal schema $\psi_{CP}$ is formalized like a normal predictive schema $\psi_P$
\[
  \psi_{P} \;:\; T_{Dom} \longrightarrow T_{Cod},
\]
but the domain and codomain are specified as follows:
\begin{enumerate}
    \item The \textbf{domain} $T_{Dom}$ is a finite product of variables
    on which interventions or evidential assignments may occur, typically
    \[
      T_{Dom} \;\subseteq\; O \times D \times H,
    \]
    where $O$ denotes the space of observable variables, $D$ is the space of
    decision or action variables, and $H$ is the space of latent variables.
    The domain therefore represents the set of variables whose values
    condition or modify the causal mechanisms represented by the schema.
    \item The \textbf{codomain} $T_{Cod}$ corresponds to the space of
    variables upon which the causal mechanism performs inference,
    \[
      T_{Cod} \;\subseteq\; \mathcal{P}(O \times H \times G),
    \]
    where $G$ denotes the space of goal or utility variables.
\end{enumerate}
It is called a predictive causal schema $\psi_{CP}$ because it
encapsulates or represents a \textbf{causal mechanism}, that is, a directed
dependency structure determining how changes or interventions on the variables
of $T_{Dom}$ propagate to produce changes in the variables of
$T_{Cod}$.
Equivalently, $\psi_{CP}$ represents an abstract causal process
linking potential interventions to their expected evidential consequences,
without yet specifying its concrete implementation or probabilistic semantics.
\end{definition}

\begin{definition}[Arc-schema]
Let $X$ and $Y$ be two variables in a causal predictive schema.  
An arc-schema from $X$ to $Y$, denoted
\[ \psi_{X \to Y}, \]
is a primitive causal predictive schema in $Sch_{syn}$ encoding the structural dependency
\[X \longrightarrow Y.\]

Syntactically, $\psi_{X \to Y}$ is a schema whose domain
and codomain correspond to the input-output types of the causal mechanism
relating $X$ and $Y$.
\end{definition}

\begin{definition}[Causal predictive implemented schema]
Let $\Psi_{CP}=(\psi_{CP},(L_\psi,\theta))\in Ob(Sch_{impl})$ be the image through the implementation functor $\mathcal{I}$ of the causal predictive schema $\psi_{CP}$. Then we call $\Psi_{CP}$ a causal predictive implemented schema and its image through $Model(\Psi_{CP})$ has to satisfy the two causal requirements below:
\begin{enumerate}[label=(C\arabic*)]
  \item \textbf{DAG / Markov factorization:} There exists a directed acyclic graph (DAG) $G=(V,E)$ whose vertex set $V$ is identified with the component variables of
    $Dom(\psi_{CP})\cup Cod(\psi_{CP})$ relevant to the schema, and such that the model $Model(\Psi_{CP})$ factorizes according to $G$. Concretely, when it is
    represented by a probability kernel $K$, there is a factorization
    \[K(x_{Cod} \mid x_{Dom})\;=\;\prod_{v\in V_{Cod}} K\big(x_v \mid x_{Pa_G(v)}\big),\]
    where $Pa_G(v)$ denotes the parents of $v$ in $G$, and where the product denotes the usual composition / product of conditionals appropriate to the chosen formalism (finite product of conditional densities, Markov kernel composition, or the monoidal composition used in a Markov-category presentation). The DAG $G$ must be compatible with the dependency pattern specified syntactically by $\psi$.
  \item \textbf{Stability under atomic interventions / do-operations:} For every intervention
    $\mathrm{do}(V:=v')$ that replaces the conditional mechanism of coordinates in a subset $V\subseteq V$ by an exogenous assignment, the intervened kernel $K^{do(V:=v')}$ obtained by the standard replacement still belongs to $Arr(Sch_{\mathrm{sem}}) \subset Arr(\mathbf{KL(G)})$ and factorizes according to the intervened DAG
    $G^{do(V)}$ (the DAG obtained from $G$ by deleting incoming edges into $V$). Equivalently, the model
    admits the usual do-calculus substitutions and the resulting intervened distributions are again represented by
    measurable arrows of the accepted semantic subcategory.
\end{enumerate}
\end{definition}

\begin{remark}
Condition (C1) pins down the notion of causal structure by enforcing a DAG/Markov factorization compatible with the
syntactic dependency description. Condition (C2) is operational: it requires that model behaviour under interventions
matches the standard causal semantics and remains in the semantic class we accept. Both conditions together are the usual categorical/operational formulation of structural causal models in the measurable-kernel setting.
\end{remark}
\begin{definition}[Subcategory of causal predictive implemented schemas]
Denote by $Sch_{impl}^{causal}$ the full subcategory of $Sch_{impl}$
whose objects are precisely those implementations of causal predictive schemas satisfying (C1) and (C2). That is,
\[Ob\big(Sch_{impl}^{causal}\big) \;=\;
  \big\{\, \Psi_{CP}\in Ob(Sch_{impl})\;:\;
  \psi_{CP} \;\text{is causal predictive schema} \text{ and (C1),(C2) hold}\,\bigr\},\]
and morphisms are exactly the morphisms of \(Sch_{impl}\) between those objects (full subcategory).
\end{definition}

\begin{proposition}
$Sch_{impl}^{causal}$ is a full subcategory of $Sch_{impl}$, closed
under identities and composition.
\end{proposition}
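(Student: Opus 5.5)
The claim is a standard fact about full subcategories determined by a predicate on objects, so I will argue directly from the definition of $Sch_{impl}^{causal}$ rather than from any property specific to (C1) and (C2). The plan is to fix the object class
\[
\mathcal{C} := \{\Psi_{CP}\in Ob(Sch_{impl}) : \psi_{CP} \text{ is a causal predictive schema and (C1),(C2) hold}\},
\]
and, for $X,Y\in\mathcal{C}$, to set $\mathrm{Hom}_{Sch_{impl}^{causal}}(X,Y):=\mathrm{Hom}_{Sch_{impl}}(X,Y)$. Fullness then holds by construction. What remains is to check that this data forms a subcategory, that is, that it contains the identities and is closed under the composition inherited from $Sch_{impl}$.

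\textbf{Identities.} For $X=(\psi_{CP},(L_\psi,\theta))\in\mathcal{C}$, the identity of $Sch_{impl}$ is $(\mathrm{id}_{\psi_{CP}},(\mathrm{id}_{L_\psi},\mathrm{id}_{\Theta_\psi}))$, as given in the definition of the total category $Sch_{impl}$. Its source and target are both $X\in\mathcal{C}$, so it belongs to $\mathrm{Hom}_{Sch_{impl}^{causal}}(X,X)$. No property of the model is used here; membership depends only on the endpoints.

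\textbf{Composition.} Take $(f,\eta_f):X\to Y$ and $(g,\eta_g):Y\to Z$ with $X,Y,Z\in\mathcal{C}$. Their composite in $Sch_{impl}$ is $(g\circ f,\eta_g\circ\eta_f)$, which is well defined by the coherence condition that follows from functoriality of $\mathcal{I}$, as stated in the definition of $Sch_{impl}$. It is a morphism $X\to Z$ between objects of $\mathcal{C}$, so it lies in $\mathrm{Hom}_{Sch_{impl}^{causal}}(X,Z)$. Associativity and unit laws are inherited verbatim from $Sch_{impl}$, and the inclusion $Sch_{impl}^{causal}\hookrightarrow Sch_{impl}$ is then a fully faithful functor.

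\textbf{Main obstacle.} The only step that needs care is a subtlety, not a difficulty. Closure under composition concerns morphisms, not objects, so no argument about (C1)/(C2) being preserved by morphisms is required. Such an argument would require $Model$ to carry a DAG-factorizing kernel to another one, and that would fail in general. I would state this point explicitly to avoid confusing ``full subcategory'' with ``replete'' or ``closed under images''. I would also note that the definition of (C1),(C2) must be a well-defined predicate on objects, which it is, since it refers only to $Model(\Psi_{CP})$. If one additionally wanted the subcategory to be replete, this would require an isomorphism invariance of (C1),(C2) under $Model$, which I would leave as a remark rather than part of the claim.
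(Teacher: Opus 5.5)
Your proof is correct, but it takes a different and more elementary route than the paper. Both arguments get fullness directly from the definition. The difference lies in closure under identities and composition.

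The paper routes this through $Model$. It argues that $Model(\mathrm{id}_{\Psi_{CP}})$ is an identity square that ``preserves factorization and leaves interventions unchanged.'' It then argues that the pasted square $Model(g\circ f)$ ``corresponds to composition of kernels that preserves the DAG/Markov factorization'' and commutes with surgical interventions. Both steps rely on an assumed closure of $Arr(Sch_{sem})$ under such compositions.

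You instead observe that in a full subcategory cut out by a predicate on objects, membership of a morphism depends only on its endpoints. So the identities and composites inherited from $Sch_{impl}$ automatically lie in $Sch_{impl}^{causal}$, and (C1),(C2) never need to be re-verified.

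Your version gives a complete proof with no extra hypotheses. It also shows that the paper's kernel-composition argument is unnecessary here: $g\circ f$ still goes from $X$ to $Z$, and the pasted square still has $Model(X)$ and $Model(Z)$ as source and target, which satisfy (C1),(C2) by assumption.

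What the paper's route points toward, without establishing it, is extra information about how $Model$ acts on morphisms between causal schemas: that the squares $Model(f)$ respect the factorization structure. The later action of $\varrho$ on morphisms (squares ``preserving the factorization structure'') would need this, but it goes beyond the proposition as stated.
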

\begin{proof}
Fullness is by definition (we take all morphisms of $Sch_{impl}$ between causal predictive implemented schemas ).
Closure under identities and composition follows from the functoriality of $Model$ together with the fact
that $Arr(Sch_{sem}) \subset Arr(\mathbf{KL(G)})$ is chosen to be closed under identity arrows and under the kinds of composition that preserve DAG/Markov factorizations and intervention-stability.

More concretely:
\begin{itemize}
  \item If $\Psi_{CP}$ is CPI schema then $Model(\Psi_{CP})$ factorizes by a DAG $G$ and satisfies intervention stability. The identity morphism $\mathrm{id}_{\Psi_{CP}}$ maps under $Model$ to a commuting square whose verticals are identities in $Arr(\mathbf{KL(G)})$. Because the identity preserves factorization and leaves interventions unchanged, $\mathrm{id}_{\psi_{CP}}$ is a morphism between CPI schemas.
  \item If $f:X\to Y$ and $g:Y\to Z$ are morphisms between CPI schemas then $\mathrm{Model}(f)$ and $\mathrm{Model}(g)$ are squares in $Arr(Sch_{sem})$. Their composite $Model(g\circ f)$ is the composite square, which again corresponds to composition of kernels that preserves the DAG/Markov factorization constraints (the composed kernel factorizes according to the composed dependency structure) and respects interventions because surgical replacement commutes with composition in the standard theory. Hence $g\circ f$ is again a morphism in $Sch_{impl}^{causal}$.
\end{itemize}
Thus $Sch_{impl}^{causal}$ is a full subcategory closed under identities and composition.
\end{proof}

\begin{corollary}
The inclusion functor $\iota:Sch_{impl}^{causal}\hookrightarrow Sch_{impl}$
is full and faithful, and the functor $Model$ restricts to a functor
\[ Model|_{causal}:Sch_{impl}^{causal}\longrightarrow
  Arr(Sch_{sem}) \subset Arr(\mathbf{KL(G)}),\]
making the diagram
\begin{center}
\begin{tikzcd}[column sep=large, row sep=large]
     Sch_{impl}^{causal} \arrow[r, "\iota"] \arrow[dr, swap, "Model|_{causal}"] 
    & Sch_{impl} \arrow[d, "Model"] \\
    & Arr(\mathbf{KL(G)})
\end{tikzcd}
\end{center} 
commute.
\end{corollary}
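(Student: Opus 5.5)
The corollary has two parts: the inclusion $\iota$ is full and faithful, and $Model$ restricts along $\iota$ to a functor into $Arr(Sch_{sem})$ making the triangle commute. I would prove them in that order, relying almost entirely on the preceding proposition that $Sch_{impl}^{causal}$ is a full subcategory closed under identities and composition.

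\textbf{Full and faithful inclusion.} Define $\iota$ as the identity on objects and morphisms of $Sch_{impl}^{causal}$. Functoriality is immediate from the preceding proposition: identities and composites in $Sch_{impl}^{causal}$ are computed exactly as in $Sch_{impl}$. Faithfulness holds because $\iota$ is injective on each hom-set $\mathrm{Hom}_{Sch_{impl}^{causal}}(\Psi,\Psi')$, being the identity there. Fullness follows from the definition of $Sch_{impl}^{causal}$ as a full subcategory. That definition gives
\[\mathrm{Hom}_{Sch_{impl}^{causal}}(\Psi,\Psi') = \mathrm{Hom}_{Sch_{impl}}(\Psi,\Psi')\]
for all causal predictive implemented schemas $\Psi,\Psi'$, so every morphism of $Sch_{impl}$ between images of $\iota$ already lies in the image.

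\textbf{Restriction of $Model$.} Set $Model|_{causal} := Model\circ\iota$. As a composite of functors it is a functor, and the triangle commutes by construction, strictly rather than up to isomorphism. It remains to check that this functor lands in $Arr(Sch_{sem})$.

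\begin{itemize}
\item \emph{Objects.} This follows from conditions (C1) and (C2). Each $Model(\Psi_{CP})$ is a kernel, and (C2) explicitly requires the kernel and its intervened versions to lie in $Arr(Sch_{sem})$. The unintervened kernel is the case of the empty intervention.
\item \emph{Morphisms.} This follows from the definition of $Model$ itself, whose codomain is declared to be $Arr(Sch_{sem})\subset Arr(\mathbf{KL(G)})$. The argument in the preceding proposition already treats $Model(f)$ as a square in $Arr(Sch_{sem})$.
\end{itemize}

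Because $Sch_{sem}$ is a full subcategory of $\mathbf{KL(G)}$, $Arr(Sch_{sem})$ is a full subcategory of $Arr(\mathbf{KL(G)})$. Hence a square whose corners lie in $Sch_{sem}$ is automatically a morphism of $Arr(Sch_{sem})$.

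\textbf{Main obstacle.} The only delicate point is this corestriction: confirming that the vertical maps $dom(f,\eta)$ and $cod(f,\eta)$ of each square are Kleisli arrows between objects of $Sch_{sem}$. I would settle it by appealing to the fullness of $Sch_{sem}$ in $\mathbf{KL(G)}$ and to the stated codomain of $Model$, rather than by any new computation. If one prefers not to lean on that codomain declaration, the fallback is to state the triangle with codomain $Arr(\mathbf{KL(G)})$, exactly as drawn. That version needs nothing beyond $Model\circ\iota$ being a functor.
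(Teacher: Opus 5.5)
Your proposal is correct and matches the paper. The paper gives no separate proof and treats the corollary as immediate from the preceding proposition: the inclusion of a full subcategory is fully faithful, and $Model|_{causal}$ is just $Model\circ\iota$, which lands in $Arr(Sch_{sem})$ because that is the declared codomain of $Model$. Your detour through (C2), via the empty intervention, for objects is harmless but unnecessary, since the declared codomain already covers both objects and morphisms.
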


\subsubsection{Functorial correspondence with cPROP}

The category $Sch_{impl}^{causal}$ encodes predictive causal implemented schemas together with their measurable model semantics via the functor $Model|_{causal}$. We now want to connect this categorical layer to the algebraic and homotopical
framework of Mahadevan, where causal mechanisms and
search procedures such as GES are described within the language of coalgebraic PROPs (cPROPs). To this end, we introduce two functors:
\[
Sch_{impl}^{causal} \xrightarrow{F}
Im_{\mathrm{cPROP}} \xrightarrow{\iota}
\mathrm{cPROP},
\]
where $Im_{\mathrm{cPROP}}$ denotes the essential image of $F$ and $\iota$ is the canonical inclusion into the category of cPROPs.

\begin{definition}[Coalgebraic PROP]
Let $P$ be the free PROP generated by a commutative comonoid structure
$(\delta:1\to 2,\ \epsilon:1\to 0)$ together with the symmetry
$\tau:2\to 2$ satisfying the usual axioms:
\[
  (\delta\otimes id)\circ \delta = (id\otimes\delta)\circ \delta,
  \quad
  (\epsilon\otimes id)\circ \delta = id = (id\otimes\epsilon)\circ \delta,
  \quad
  \tau^2=id.
\]
A coalgebraic PROP (cPROP) in a symmetric monoidal category $\mathcal{C}$
is a symmetric monoidal functor $F:P\to\mathcal{C}$.
Morphisms between cPROPs are monoidal natural transformations between such functors.
We denote by $\mathrm{cPROP}$ the category whose objects are such functors and whose
morphisms are monoidal natural transformations.
\end{definition}

\begin{definition}[Semantic to cPROP functor $\varrho$]
Let $Arr(Sch_{sem}^{causal}) \subset Arr(Sch_{sem})\subset Arr(\mathbf{KL(G)})$ be the semantic
subcategory of measurable arrows that satisfy the DAG/Markov factorization and
intervention stability conditions (C1)–(C2).
Define a functor
\[
  \varrho:Arr(Sch^{causal}_{sem})\longrightarrow \mathrm{cPROP}
\]
as follows.
For each semantic arrow
$f=Model(\Psi_{CP}) :Dom(\psi_{CP})\to Cod(\psi_{CP})$ 
\[
  \varrho(f)=F_\Psi:P\longrightarrow\mathbf{Meas},
\]
where
\begin{itemize}
  \item each variable $v\in V$ is assigned an object $X_v$ of $\mathbf{Meas}$;
  \item the comonoid maps $\delta_v:X_v\to X_v\otimes X_v$ and
    $\epsilon_v:X_v\to I$ represent copy and marginalization operations;
  \item for each edge $u\to v\in E$, a morphism
        $K_{v|u}:X_u\to X_v$ (a measurable kernel) is included;
  \item composition and tensor in $P$ are interpreted as composition and product of kernels in $\mathbf{Meas}$, reproducing the DAG factorization:
\[ K(x_{Cod}\mid x_{Dom})= \prod_{v\in V_{Cod}}K(x_v\mid x_{Pa_G(v)}).\]
\end{itemize}
On morphisms $h:f\Rightarrow f'$ in $Arr(Sch_{sem})$
(the squares of measurable maps preserving the factorization structure),
$\varrho(h)$ is the corresponding monoidal natural transformation
$\varrho(f)\Rightarrow\varrho(f')$.
\end{definition}

\begin{proposition}[Functoriality of $\varrho$]
The assignment above defines a symmetric monoidal functor
\(
  \varrho:Arr(Sch_{sem}^{causal})\to \mathrm{cPROP}
\)
preserving identities and composition.
\end{proposition}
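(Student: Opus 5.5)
The plan is to show that $\varrho$ is well defined on objects and on morphisms, then check that it preserves identities and composition, and finally that it is compatible with the symmetric monoidal structure. Everything is reduced to the universal property of the free PROP $P$.

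\textbf{Objects.} A symmetric monoidal functor $P\to\mathbf{Meas}$ (more precisely into $\mathbf{KL(G)}$, since the $K_{v|u}$ are kernels) is determined up to unique isomorphism by two pieces of data. The first is the image of the generating object together with a commutative comonoid structure on it. The second is the requirement that the images of $\delta,\epsilon,\tau$ satisfy the coassociativity, counit and involution equations. For each variable $v$ I would take $\delta_v$ to be the diagonal kernel $x\mapsto\delta_{(x,x)}$, $\epsilon_v$ the unique map to the terminal $I=1$, and $\tau$ the swap. These satisfy the axioms pointwise, since they are deterministic maps in $\mathbf{Meas}$ embedded into $\mathbf{KL(G)}$ via the Dirac unit. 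The edge kernels $K_{v|u}$ are then assembled along a topological order of the DAG $G$ supplied by (C1). Condition (C1) guarantees that the resulting composite reproduces $Model(\Psi_{CP})$. Acyclicity of $G$ guarantees that the assembly is a well-defined finite string diagram. Different topological orders give equal composites by the interchange law in the symmetric monoidal category $\mathbf{KL(G)}$.

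\textbf{Morphisms.} A morphism $h:f\Rightarrow f'$ is a commuting square whose vertical legs $dom(h)$ and $cod(h)$ are structure-preserving measurable maps. I would define the components of $\varrho(h)$ on each variable as the corresponding coordinate of these legs. Naturality with respect to the generators $\delta,\epsilon,\tau$ holds because deterministic maps are comonoid homomorphisms for the copy/discard structure. Naturality is only required on generators, and this suffices by freeness of $P$. Monoidality of the transformation is immediate, since the components on $n$ are $n$-fold products.

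\textbf{Functoriality.} The identity square has identity legs, so it yields the identity transformation. For composition, the legs of a composite square are the composites of the legs, which gives $\varrho(h'\circ h)=\varrho(h')\circ\varrho(h)$ componentwise. If a symmetric monoidal structure on $Arr(Sch^{causal}_{sem})$ is intended, I would take it to be the pointwise product of arrows, and show that $\varrho$ sends it to the pointwise tensor of cPROPs, using the disjoint-union DAG.

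\textbf{Main obstacle.} The hard step is the naturality of $\varrho(h)$ together with the choice of variable-wise components. A general square in $Arr(\mathbf{KL(G)})$ has vertical legs that are arbitrary kernels. These need not be deterministic and need not respect the variable decomposition, and stochastic maps are \emph{not} comonoid homomorphisms for copying. I would first restrict the morphisms of $Arr(Sch^{causal}_{sem})$ to squares whose legs are deterministic and factor as products over a graph morphism $G\to G'$, which is the reading of ``preserving the factorization structure''. If that restriction is unavailable, the fallback is to relax the target to lax/oplax monoidal transformations.
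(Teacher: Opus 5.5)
\textbf{The gap: the kernels are never part of $\varrho(f)$.} The problem is at the end of your \textbf{Objects} paragraph, a step you treat as routine. As you note, a symmetric monoidal functor out of $P$ is determined by the image of its single generating object together with a commutative comonoid on it. In $\mathbf{KL(G)}$ (and likewise in the cartesian $\mathbf{Meas}$ the paper names) that comonoid is forced. Since $1$ is terminal, the image of $\epsilon$ is unique. The counit laws then force both marginals of the image of $\delta$ at $x$ to be the Dirac measure at $x$, so that image is the Dirac measure at $(x,x)$.

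Consequently a cPROP in $\mathbf{KL(G)}$ is, up to isomorphism, nothing more than a measurable space. (It would be a family $(X_v)_v$ if you passed to one colour per variable, which the single-sorted $P$ does not provide.) The kernels $K_{v|u}$ are not images of any morphism of $P$, whose morphisms are generated by copy, discard and swap. So ``assembling them along a topological order'' produces a morphism of $\mathbf{KL(G)}$ that is simply not part of $\varrho(f)$. Condition (C1) therefore plays no role, and two causal models on the same spaces with different DAGs or kernels receive isomorphic images.

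Your \textbf{Morphisms} paragraph shows the symptom: naturality is checked only against $\delta,\epsilon,\tau$, and the hypothesis that $h$ is a commuting square is never used. What you actually establish is functoriality of the forgetful assignment sending an arrow to its underlying spaces and a square to its legs. That is not a functor that encodes the factorization.

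To carry the intended content you would have to replace $P$ by a coloured PROP generated by $G$, as in Fong's causal theories. It would have one colour per variable, a generator $\bigotimes_{u\in Pa_G(v)}u\to v$ for each mechanism, and a comonoid on each colour, and $F_\Psi$ would send each generator to $K_{v\mid Pa_G(v)}$. Naturality at these generators is exactly where the commuting square would enter. But then the source PROP depends on $G$. A square between arrows with different DAGs, which is precisely what the later GES moves are, is then no longer a natural transformation between functors with a common domain, so morphisms would also have to record a map between the generating PROPs.

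\textbf{The morphism side.} Here your diagnosis is correct and goes beyond the paper's proof. That proof only asserts that identity kernels give identity transformations and that composition in $P$ mirrors composition in $\mathbf{Meas}$. Naturality at $\delta$ says $\mathrm{copy}\circ k=(k\otimes k)\circ\mathrm{copy}$, i.e.\ $k(x)(A\cap B)=k(x)(A)\,k(x)(B)$ for all measurable $A,B$. Taking $A=B$ forces every $k(x)$ to be $\{0,1\}$-valued. So squares with genuinely stochastic legs have no image at all.

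Restricting to deterministic, variable-wise legs removes the obstruction, but it proves the statement only for that subcategory. Your fallback to lax or oplax monoidal transformations does not repair it. The failure is naturality at $\delta$, not the monoidal constraints, and $\mathbf{KL(G)}$ has no 2-cells in which a lax naturality square could live.
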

\begin{proof}
Identity arrows in $Arr(Sch_{sem}$ correspond to identity kernels, which under $\varrho$ yield identity natural transformations
on cPROPs. Composition of semantic arrows corresponds to composition
of kernels; since the tensor and composition operations in $P$ mirror
those of $\mathbf{Meas}$, the monoidal coherence conditions hold,
hence $\varrho$ preserves composition. Symmetry is inherited from the
symmetry in $\mathbf{Meas}$.
\end{proof}

\begin{definition}[Causal–cPROP functor]
Define the composite functor
\[
  F := \varrho \circ Model|_{causal} :
  Sch_{impl}^{causal}
  \longrightarrow \mathrm{cPROP}.
\]
The essential image of $F$,
denoted $Im_{\mathrm{cPROP}}$,
is the full subcategory of $\mathrm{cPROP}$ whose objects are
\[\{\,F(\Psi_{CP})\,|\,\Psi_{CP}\in
Ob\big(Sch_{impl}^{causal}\big)\}\]
\end{definition}

\begin{proposition}
$F$ is a symmetric monoidal functor, and $Im_{\mathrm{cPROP}}$
is a full subcategory of $\mathrm{cPROP}$.
\end{proposition}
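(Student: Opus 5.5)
The plan is to obtain both claims from the definition $F := \varrho \circ Model|_{causal}$, using the Corollary that $Model$ restricts to $Model|_{causal}: Sch_{impl}^{causal}\to Arr(Sch_{sem})$ and the Proposition on functoriality of $\varrho$. I will proceed in three steps: functoriality of $F$, its monoidal structure, and the fullness of $Im_{\mathrm{cPROP}}$.

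\textbf{Step 1: functoriality.} First I will check that $Model|_{causal}$ lands in $Arr(Sch^{causal}_{sem})$, and not merely in $Arr(Sch_{sem})$. On objects this follows from (C1)--(C2): every $\Psi_{CP}\in Ob(Sch_{impl}^{causal})$ has a model that factorizes along a DAG and is intervention-stable. On morphisms, the Proposition that $Sch_{impl}^{causal}$ is closed under composition already argues that the image squares preserve the factorization structure. Composing two functors then gives a functor, so $F$ preserves identities and composition.

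\textbf{Step 2: monoidal structure.} $F$ is symmetric monoidal once $Sch_{impl}^{causal}$ carries a symmetric monoidal product and both factors are symmetric monoidal. I would take the product to be the parallel product $\otimes$ of $\mathcal{O}_{Sch}$, restricted to causal schemas. It corresponds to $Comb_{\parallel}$, whose symmetry, associativity and unit laws were stated earlier.

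\begin{itemize}
\item The restriction is well defined: the disjoint union of two DAGs is a DAG, and interventions act componentwise, so (C1)--(C2) are preserved.
\item $Model|_{causal}$ is strong monoidal: $Model(\Psi_1\otimes\Psi_2)$ should be the product kernel $model_1\times model_2$ on $Dom_1\times Dom_2\to Cod_1\times Cod_2$, matching the product domain and codomain assigned by $Comb_\parallel$. The structure isomorphisms are the canonical product isomorphisms in $\mathbf{KL(G)}$, which is symmetric monoidal via product measures.
\item $\varrho$ is symmetric monoidal by its Proposition.
\item The composite of symmetric monoidal functors is symmetric monoidal, with coherence maps obtained by composing those of the two factors.
\end{itemize}

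\textbf{Step 3: fullness of the image.} $Im_{\mathrm{cPROP}}$ is defined as the full subcategory of $\mathrm{cPROP}$ on the objects $F(\Psi_{CP})$. It is therefore full by construction, since it contains all monoidal natural transformations between those objects. It is a subcategory because the identities and composites of such transformations stay among the same objects. If one wants the essential image proper, I would also close the object set under isomorphism; fullness is unaffected. The canonical inclusion $\iota$ is then fully faithful.

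The main obstacle is Step 2, and specifically the strong monoidality of $Model|_{causal}$. $Model$ was never required to preserve $\otimes$, so this must be imposed or justified. I would do so by taking as a standing convention that $Model$ sends $Comb_\parallel$-composites to product kernels, which is the intended semantics of parallel combination. I would then verify the hexagon and unit coherence diagrams against the symmetry and unit laws of $Comb_\parallel$, using that in $\mathbf{KL(G)}$ these reduce to equalities of product measures.
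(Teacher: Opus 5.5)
Your Steps 1 and 3 contain everything in the paper's proof. The paper only says the result is immediate from the functoriality of $\varrho$ and $Model|_{causal}$, and from the closure of $\mathrm{cPROP}$ under identities and composites of monoidal natural transformations. Two of your additions are reasonable refinements of points the paper skips: you check that $Model|_{causal}$ actually lands in $Arr(Sch^{causal}_{sem})$, the domain of $\varrho$ (the Corollary only gives $Arr(Sch_{sem})$), and that fullness survives closing the image under isomorphism.

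The genuine gap is in Step 2, and the paper's proof has the same gap without saying so. Composing functors gives a functor, not a symmetric monoidal one. For $F$ to be monoidal you need a symmetric monoidal structure on $Sch_{impl}^{causal}$ and a strong monoidal structure on $Model|_{causal}$. The paper supplies neither. Declaring as a ``standing convention'' that $Model$ sends parallel composites to product kernels is an extra axiom, so you prove the monoidal claim only conditionally.

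The structure you propose is also not available as stated:
\begin{itemize}
\item The paper says explicitly that $Comb_{\parallel}$ is not to be interpreted as a categorical tensor product.
\item $Comb_{\parallel}$ is a generator of a free multicategory. There, no equations beyond the multicategory axioms hold, and a generating multimorphism is not universal. So its listed symmetry, associativity and unit laws cannot supply an associator, unitors and a symmetry, and nothing gives you $f\otimes g$ on morphisms with the required functoriality.
\item The $\otimes$ of $\mathcal{O}_{Sch}$ is only described as parallel application of operators to sets of schemas. It has no object part and no symmetry is asserted, so the bifunctor on $Sch_{impl}^{causal}$ would still have to be built.
\item You need the monoidal unit to lie in $Sch_{impl}^{causal}$; you only check closure under binary products.
\item The product-kernel structure you put on $Arr(Sch^{causal}_{sem})$ must be the one for which $\varrho$ is monoidal, and the paper's proposition on $\varrho$ never specifies that structure.
\end{itemize}
Without these additions, the paper's definitions only give that $F$ is a functor and that $Im_{\mathrm{cPROP}}$ is a full subcategory of $\mathrm{cPROP}$.
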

\begin{proof}
Immediate from the functoriality of both $\varrho$ and $Model|_{causal}$,
and from the closure of $\mathrm{cPROP}$ under identity and composition
of monoidal natural transformations.
\end{proof}

\begin{center}
\begin{tikzcd}[column sep=large, row sep=large]
  Sch_{impl}^{causal}
  \arrow[r, "F"]
  \arrow[dr, swap, "Model|_{causal}"]
  & Im_{cPROP}
  \arrow[r, hook, "\iota"]
  & \mathrm{cPROP} \\
  & Arr(Sch_{sem}) \arrow[ur, swap, "\varrho"]
\end{tikzcd}
\end{center}

\begin{remark}
The functor $F$ embeds every causal predictive implemented schema into a coalgebraic PROP that represents its measurable structure as a compositional system of local mechanisms. The inclusion
$\iota$
makes explicit that maybe only a subcollection of all cPROPs are actually realizable as images of causal schemas.
\end{remark}

\begin{lemma}[GES operations as natural transformations]
Let $\Psi_{CP}$ and $\Psi_{CP}'$ be causal predictive implemented schemas
whose associated DAGs $G$ and $G'$ differ by a single GES operation
(edge addition, deletion, or reversal).
Then their images $F(\Psi_{CP})$ and $F(\Psi_{CP}')$ in
$\mathrm{Im}_{\mathrm{cPROP}}$ are connected by a monoidal natural
transformation
\[
  \eta:\,F(\Psi_{CP})\Longrightarrow F(\Psi_{CP}'),
\]
which corresponds to the local modification of the morphism
representing the affected conditional $K(x_v\mid x_{\mathrm{Pa}_G(v)})$.
\end{lemma}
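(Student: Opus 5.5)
The plan is to build the natural transformation $\eta$ component by component, using the fact that $F(\Psi_{CP})=\varrho(Model(\Psi_{CP}))$ is a symmetric monoidal functor $P\to\mathbf{Meas}$. Such a functor is fixed by three pieces of data: its value on the generating object $1$, namely the product of the variable spaces $X_v$; the comonoid maps $\delta_v,\epsilon_v$; and the family of mechanism kernels $K(x_v\mid x_{Pa_G(v)})$ read off from the factorization (C1). A single GES step replaces $G$ by $G'$. An addition or deletion changes $Pa(v)$ for exactly one vertex $v$. A reversal $u\to v$ changes the parent sets of both $u$ and $v$. Every other vertex $w$ keeps $Pa_G(w)=Pa_{G'}(w)$, and its mechanism stays literally the same kernel in both images.

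The key steps, in order, are as follows.

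\begin{enumerate}
\item Fix the objects. Both images share the same variable set $V$ and the same spaces $X_v$. I will therefore take the component of $\eta$ at $1$ to be the identity on $\prod_v X_v$, or more precisely the kernel-level identity $\eta_1=\mathrm{id}$. Monoidality then forces $\eta_n=\eta_1^{\otimes n}$.
\item Use (C2) to relate the altered conditionals. Both schemas are intervention-stable models over the same variables, so the affected factor $K(x_v\mid x_{Pa_G(v)})$ can be rewritten as $K'(x_v\mid x_{Pa_{G'}(v)})$ by a marginalization or conditioning kernel. This kernel is built from $\epsilon$ (dropping a parent, for deletion) or from $\delta$ (duplicating a new parent, for addition). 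For reversal I would use Bayes inversion in $\mathbf{KL}(G)$, assuming standard Borel spaces so that disintegrations exist.
\item Check naturality. Naturality with respect to $\delta,\epsilon,\tau$ is automatic once the components are identities, because copy, delete and swap are the same maps in both functors. The real content is the requirement that the two functors agree on the generated mechanism morphisms up to this local correction.
\end{enumerate}

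The main obstacle is that identity components make naturality demand equality of the two functors on every morphism. That fails precisely at the modified mechanism, since $K\neq K'$ in general. I would first try to repair this by interpreting the lemma, as the paper does for $\varrho$ on morphisms, through a square in $Arr(Sch_{sem}^{causal})$. The square has the identity on $Dom$ and the correction kernel from step 2 on $Cod$, and $\eta:=\varrho$ of that square, with monoidal coherence following from the functoriality of $\varrho$ established in the Functoriality of $\varrho$ proposition. If that square fails to commute in general, the fallback is to restrict to score-equivalent (Markov-equivalent) steps, such as covered-edge reversals. There the joint distribution is unchanged, so the square commutes strictly, and deletion and addition are handled as lax components. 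I expect this commutation check for edge reversal to be the hardest point.
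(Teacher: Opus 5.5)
Your final route is the same as the paper's: realize the GES move as a morphism $a$ in $Arr(Sch_{sem}^{causal})$ from $Model(\Psi_{CP})$ to $Model(\Psi_{CP}')$, set $\eta:=\varrho(a)$, and get monoidality and naturality from the functoriality of $\varrho$. The paper calls the move a ``local surgery in $Sch_{sem}$'' and lets functoriality of $\varrho$, plus Mahadevan, do the rest.

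The gap is in how you obtain $a$. Step 2 does not hold. (C2) is a property of each model separately: its intervened kernels stay in the semantic subcategory. It says nothing relating the independently chosen parameters of $\Psi_{CP}$ and $\Psi_{CP}'$. So there is no reason for $K'(x_v\mid x_{\mathrm{Pa}_{G'}(v)})$ to be a marginalization, a copy-extension, or a Bayes inversion of $K(x_v\mid x_{\mathrm{Pa}_G(v)})$.

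Consequently the square you propose, identity on $Dom$ and one fixed correction kernel $k$ on $Cod$, needs $k\circ Model(\Psi_{CP})=Model(\Psi_{CP}')$ uniformly in $x_{Dom}$. This breaks already for an edge \emph{addition}, not only for reversal. Take $Dom=X$, $Cod=Y$ and $G'=G+(X\to Y)$: the left side is independent of $x$, while the right side is $K'(\cdot\mid x)$, which depends on $x$ unless the added edge is vacuous.

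Your fallback does not prove the stated lemma either. It handles only Markov-equivalent steps, whereas the lemma includes additions and deletions with arbitrary parameters for $\Psi_{CP}'$. Lax components are also not monoidal natural transformations.

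The paper does not settle this point by an argument; it assumes it. The later definition of the semantic realizations $a_{add}$, $a_{del}$ and the remark on their well-definedness take the existence of these commuting squares, preserving (C1)--(C2), as standing hypotheses. To match the paper, state the existence of $a$ as an explicit hypothesis rather than deriving it from (C2).

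Your ``main obstacle'' also depends on reading the kernels as part of $F(\Psi_{CP})$. With $P$ as the paper literally defines it (the free PROP on a commutative comonoid and the symmetry, with no mechanism generators), a symmetric monoidal functor $P\to\mathbf{Meas}$ records only a carrier with its copy and discard maps. Then your Step 1 identity is already a monoidal natural transformation; in cartesian $\mathbf{Meas}$, so is any measurable map of carriers. But such an $\eta$ cannot ``correspond to'' modifying $K(x_v\mid x_{\mathrm{Pa}_G(v)})$. Making the lemma contentful requires mechanism generators in a PROP shared by both functors, which neither you nor the paper set up.
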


\begin{proof}
Each GES move corresponds to a local change in the factorization of the
underlying kernel and in the associated wiring diagram of the cPROP.
Following Mahadevan, these local changes
are represented by monoidal natural transformations between the cPROP functors
encoding $G$ and $G'$. Functoriality of $\varrho$ guarantees that such a local
surgery in $Sch_{sem}$ induces a well-defined natural
transformation in $\mathrm{cPROP}$.
\end{proof}

\begin{proposition}[Convergence of schema generation under GES]
Consider the iterative procedure that, starting from
$\Psi_{CP}^{(0)}\in\mathrm{Ob}(\mathbf{Sch}_{\mathrm{impl}}^{\mathrm{causal}})$,
applies successive GES moves that monotonically improve a score functional
$S:\mathrm{Im}_{\mathrm{cPROP}}\to\mathbb{R}$ consistent with the true
generating structure. Then, under the usual regularity assumptions on $S$
(as in Mahadevan paper), the sequence
\[
  F(\Psi_{CP}^{(0)}),\,F(\Psi_{CP}^{(1)}),\,\dots
\]
converges (in score and homotopy equivalence) to the equivalence class of
cPROP objects representing the true causal structure.
\end{proposition}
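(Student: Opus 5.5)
The plan is to transport the problem from $Sch_{impl}^{causal}$ to the space of DAG equivalence classes via $F$, and then invoke the standard two-phase GES consistency argument (Chickering, as recast by Mahadevan) on that finite space.

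\textbf{Step 1: reduction to a finite quotient.} The variable set $V$ of the causal predictive schema is finite, so there are only finitely many DAGs on $V$. By (C1), each $F(\Psi_{CP}^{(n)})$ is determined, up to monoidal natural isomorphism, by its DAG $G_n$ and its local kernels $K(x_v\mid x_{Pa_{G_n}(v)})$. Two DAGs in the same Markov equivalence class give cPROPs that are connected by invertible monoidal natural transformations. This follows from the Lemma on GES operations as natural transformations, since covered edge reversals preserve the factorization. I will call this relation \emph{homotopy equivalence}. The induced quotient of $Im_{\mathrm{cPROP}}$ is therefore a finite set $\mathcal{E}$ of equivalence classes, and $S$ descends to $\mathcal{E}$, provided $S$ is score-equivalent. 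Score-equivalence is one of the regularity assumptions I will state explicitly.

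\textbf{Step 2: convergence in score.} Each GES move strictly increases $S$ by hypothesis. Hence the sequence $S(F(\Psi_{CP}^{(n)}))$ is strictly increasing on the finite set $\mathcal{E}$. It must therefore terminate after finitely many steps, at a class $[G^\ast]$ from which no single GES move (forward edge addition, then backward edge deletion) improves $S$. This gives convergence in score and stationarity in homotopy class.

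\textbf{Step 3: identification of the limit.} Here I use the regularity hypotheses: $S$ is decomposable, score-equivalent and locally consistent, and the true distribution is faithful to some DAG $G_{true}$. These are the assumptions in Mahadevan, inherited from Chickering. Following Chickering's two-phase argument:
\begin{itemize}
\item[(a)] At the end of the forward phase, local consistency forces the current model to be an independence map of the true distribution.
\item[(b)] In the backward phase, each deletion that removes a spurious dependency strictly increases the score, while no deletion of a true dependency does.
\end{itemize}
Hence the local optimum $[G^\ast]$ equals the Markov equivalence class of $G_{true}$. Through $\varrho$ and the Lemma, this class corresponds to the equivalence class of cPROP objects representing the true causal structure. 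Finally, (C2) ensures that every intermediate schema stays inside $Sch_{impl}^{causal}$, so the iteration is well defined inside $Im_{\mathrm{cPROP}}$.

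\textbf{Main obstacle.} I expect Step 3 to be the hardest, specifically making precise that "consistent with the true generating structure" implies local consistency and score-equivalence on cPROP objects. The difficulty is that $S$ is defined on $Im_{\mathrm{cPROP}}$, not on DAGs. My first attempt will be to define the regularity assumptions directly as pullbacks along $F$ of Chickering's conditions, so that the classical theorem applies verbatim. A secondary issue is that "local consistency" is an asymptotic, large-sample statement. The conclusion should accordingly be stated in the limit of data, or with probability tending to one.
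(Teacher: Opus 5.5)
Your overall strategy is the one the paper intends. The paper gives no argument for this proposition beyond the remark that follows it, which defers to Mahadevan. For the later GES-module proposition it appeals to $Sch_{impl}^{causal}/\sim\;\simeq\mathrm{Im}_{\mathrm{cPROP}}$ and to the lemma identifying GES moves with monoidal natural transformations. Your explicit finiteness-and-termination argument in Steps 1--2 therefore already goes beyond the paper.

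The genuine gap is Step 3(b). It is false as stated, and it stands in for the one non-trivial ingredient of GES optimality. Whether deleting a spurious edge raises a locally consistent score depends on the parent set of its head, i.e.\ on the DAG representative. Take $p$ faithful to $X\to Z\leftarrow Y$, so $X\perp Y$ is its only independence.
\begin{itemize}
\item The forward phase can end at the complete DAG $G_1=\{Z\to X,\ Y\to X,\ Z\to Y\}$, since each of these additions improves the score ($Z\not\perp X$, $Y\not\perp X\mid Z$, $Z\not\perp Y$).
\item In $G_1$, deleting the spurious edge $Y\to X$ would require $Y\perp X\mid Z$; deleting $Z\to Y$ would require $Z\perp Y$; deleting $Z\to X$ would require $Z\perp X\mid Y$. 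All three fail, so every single deletion lowers $S$.
\item Covered reversals leave $S$ unchanged, so a strict-improvement loop halts at a non-minimal I-map.
\end{itemize}
Only in the equivalent complete DAGs where $Z$ comes last does deleting the $X$--$Y$ edge help.

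What you are missing is twofold.
\begin{itemize}
\item[(i)] A backward move must be defined on Markov classes, namely as a deletion in \emph{any} DAG of the current class. In the paper's representative-level language, this is a block of covered reversals, invertible by ``Reversal preserves equivalence under coverage'', followed by one deletion. It must count as a single move, because the reversals alone have $\Delta S=0$; the paper's own backward loop with criterion $\Delta S>\epsilon$ would never select them.
\item[(ii)] You need Chickering's proof of Meek's conjecture: if $G$ is an I-map of $p$ not in $[G_{true}]$, then some DAG equivalent to $G$ has an edge whose deletion still yields an I-map. By local consistency this gives a strictly improving move.
\end{itemize}
Only with (i) and (ii) is the terminal class of Step 2 forced to be $[G_{true}]$. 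Step 3(a) is fine, using the composition property that faithfulness supplies.

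There are also three smaller points.
\begin{itemize}
\item In Step 1, your quotient identifies Markov-equivalent structures but not different kernels on the same structure. So $\mathcal{E}$ is finite and $S$ descends to it only if you assume $S\circ F$ depends on $\Psi_{CP}$ solely through the Markov class of its DAG (e.g.\ kernels refit by the $Upd_\Theta$ step and $S$ a penalized profile likelihood). Score-equivalence alone does not give this, and without it a strictly increasing score sequence need not terminate.
\item The lemma on GES operations produces monoidal natural transformations, not invertible ones. Invertibility across a Markov class should come from ``Reversal preserves equivalence under coverage'' together with Chickering's transformational characterization: equivalent DAGs are linked by sequences of covered reversals.
\item (C2) concerns interventions and does not give closure of $Sch_{impl}^{causal}$ under the moves. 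That closure comes from the acyclicity restriction and the measurability assumptions in the remark on well-definedness.
\end{itemize}
Your caveat that the conclusion is a large-sample statement is correct and should be written into the hypotheses.
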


\begin{remark}
This proposition formalizes the desired connection: the cognitive module
that performs structural search can be seen as an iterative endofunctor on
$\mathrm{Im}_{\mathrm{cPROP}}$, whose orbits converge (in the categorical
sense) to the correct homotopy class of causal mechanisms. Thus, Mahadevan's
machinery on GES and homotopy over cPROPs becomes directly applicable to the
predictive causal schemas defined in this work.
\end{remark}

\subsubsection{Equivalence of implemented causal schemas }

We denote by $\simeq_{\mathrm{cPROP}}$ for the chosen notion of equivalence in $\mathrm{cPROP}$ by Mahadevan.

\begin{definition}[Equivalence relation on causal predictive implemented schemas]
Define a binary relation $\sim$ on objects of
$Sch_{impl}^{causal}$ by
\[\Psi \sim \Psi^\prime \quad\Longleftrightarrow\quad
F(\Psi)\simeq_{\mathrm{cPROP}} F(\Psi^\prime).\]
We call the equivalence class of $\Psi$ the cPROP-equivalence class
of $\Psi$ and denote it by $[\Psi]_\sim$.
\end{definition}

\begin{remark}
Because $\simeq_{\mathrm{cPROP}}$ is an equivalence relation in
$\mathrm{cPROP}$ and $F$ is a functor, $\sim$ is an equivalence relation
on $Ob\big(Sch_{impl}^{causal}\big)$.
\end{remark}

\begin{definition}[Quotient category  $Sch_{impl}^{causal}/\sim$]
We form the quotient category $Sch_{impl}^{causal}/\sim$ whose objects
are equivalence classes $[\Psi]_\sim$. Morphisms are defined by taking morphisms in $Sch_{impl}^{causal}$ modulo conjugation by object-equivalences as follows.

For representatives $\Psi,\Psi^\prime$ define an equivalence relation $\approx$ on $\mathrm{Hom}_{\mathbf{Sch}}(\Psi,\Psi^\prime)$ by
\[ f\approx f^\prime \Longleftrightarrow
\exists\ \alpha:\Psi\overset{\cong}{\longrightarrow}\tilde\Psi,\;
\beta:\Psi'\overset{\cong}{\longrightarrow}\tilde\Psi'
\ \text{(with $\alpha,\beta$ induced by object-equivalences)}\ \text{such that}\
f^\prime=\beta\circ f\circ\alpha^{-1}.
\]
Then define
\[ \mathrm{Hom}_{\mathbf{Sch}/\!\sim}\bigl([\Psi]_\sim,[\Psi']_\sim\bigr)
:= \bigsqcup_{\substack{\tilde\Psi\in[\Psi]_\sim\\\tilde\Psi'\in[\Psi']_\sim}}
\mathrm{Hom}_{\mathbf{Sch}}(\tilde\Psi,\tilde\Psi')\big/\!\approx, \]
i.e. morphisms between classes are morphisms between representatives modulo the
relation generated by conjugation with object-equivalences. Composition is induced by composition of representatives that is well-defined because conjugation respects composition.
\end{definition}

\begin{proposition}[Canonical projection functor]
The assignment $\Psi\mapsto[\Psi]_\sim$ and $f\mapsto [f]_\approx$ determines
a functor
\[\pi:Sch_{impl}^{causal}
\longrightarrow Sch_{impl}^{causal}/\sim .\]
\end{proposition}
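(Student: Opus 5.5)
To show that $\pi$ is a functor, I check four things in turn: the object map is well defined, the morphism map is well defined, identities are preserved, and composition is preserved.

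\textbf{Objects.} The object assignment $\Psi\mapsto[\Psi]_\sim$ needs no argument beyond the preceding remark. That remark states that $\sim$ is an equivalence relation on $Ob(Sch_{impl}^{causal})$, so each $\Psi$ lands in exactly one class, and $[\Psi]_\sim$ is an object of $Sch_{impl}^{causal}/\sim$ by construction.

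\textbf{Morphisms.} A morphism $f:\Psi\to\Psi'$ in $Sch_{impl}^{causal}$ lies in the summand $\mathrm{Hom}_{\mathbf{Sch}}(\Psi,\Psi')$ of the disjoint union defining $\mathrm{Hom}_{\mathbf{Sch}/\sim}([\Psi]_\sim,[\Psi']_\sim)$, taking the representatives $\tilde\Psi=\Psi$ and $\tilde\Psi'=\Psi'$. Its class $[f]_\approx$ is therefore a morphism from $\pi(\Psi)$ to $\pi(\Psi')$. I will first make explicit that $\approx$ means the equivalence relation \emph{generated} by conjugation, as the definition says. This settles reflexivity, symmetry and transitivity. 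It also lets a morphism between one pair of representatives be identified with its conjugates between other representatives, so the sum over representatives becomes a genuine hom-set.

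\textbf{Identities.} I need $\pi(\mathrm{id}_\Psi)=[\mathrm{id}_\Psi]_\approx$ to be the identity of $[\Psi]_\sim$. The identity of the quotient category is the class of the identity of any representative. Different representatives give the same class, because for an object-equivalence $\alpha:\Psi\to\tilde\Psi$ we have $\alpha\circ\mathrm{id}_\Psi\circ\alpha^{-1}=\mathrm{id}_{\tilde\Psi}$. So the identity is independent of the representative chosen, and $\pi$ preserves it.

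\textbf{Composition.} I need $\pi(g\circ f)=[g]_\approx\circ[f]_\approx$. The right-hand side is defined as the class of a composite of representatives. When the representatives meet at the same middle object, $\pi(g)\circ\pi(f)=[g\circ f]_\approx$ holds on the nose. The substance is that composition in the quotient is well defined, i.e.\ independent of the representatives chosen. Suppose $f'=\beta\circ f\circ\alpha^{-1}$ and $g'=\gamma\circ g\circ\beta'^{-1}$. This requires the middle conjugating isomorphisms to match ($\beta=\beta'$), or else they must be realigned by inserting a further object-equivalence. Then $g'\circ f'=\gamma\circ(g\circ f)\circ\alpha^{-1}$, so $g'\circ f'\approx g\circ f$. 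This is the computation behind the definition's remark that ``conjugation respects composition.''

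\textbf{Main obstacle.} The delicate point is exactly this realignment of the middle isomorphisms. Composable classes may come from representatives whose codomain and domain differ: $\Psi'_1\neq\Psi'_2$ with $\Psi'_1\sim\Psi'_2$. To compose, I plan to insert a chosen isomorphism $\kappa:\Psi'_1\to\Psi'_2$ induced by their object-equivalence. For this I must assume, or extract from the definition, that object-equivalences are realized by isomorphisms in $Sch_{impl}^{causal}$ that are closed under composition and inverses, i.e.\ that they form a groupoid. Independence of the choice of $\kappa$ is then absorbed into $\approx$, since two choices differ by an automorphism that conjugation identifies. If the definition does not give this directly, I would restrict the quotient to those object-equivalences realized by isomorphisms, and state this explicitly as the hypothesis under which functoriality holds. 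Once composition is well defined, preservation of identities and composition by $\pi$ follows immediately, as above.
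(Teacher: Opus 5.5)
Your outline (object map, morphism map, identities, composition, all resting on conjugation being compatible with composition) matches the paper's proof, which asserts that compatibility in one sentence. You are right that the substance is well-definedness of composition in $Sch_{impl}^{causal}/\sim$, which the paper never checks.

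However, your resolution of that point fails. The claim that two choices of $\kappa$ ``differ by an automorphism that conjugation identifies'' is not correct. The relation $\approx$ only acts by object-equivalence isomorphisms at the domain and codomain of a morphism. The discrepancy $\kappa_2^{-1}\kappa_1$ (equally your mismatch $\beta'^{-1}\beta$) sits in the middle of the composite. Concretely, let $\sigma$ be a nontrivial designated automorphism of $\Psi'$. Then $\sigma\circ f=\sigma\circ f\circ\mathrm{id}^{-1}\approx f$, so $f$ and $\sigma\circ f$ represent the same class. Yet $g\circ f$ and $g\circ\sigma\circ f$ are $\approx$-related only if $g\sigma f=\gamma\, g f\,\alpha^{-1}$ for designated automorphisms $\alpha$ of $\Psi$ and $\gamma$ of $\Psi''$. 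In a general category this fails. Take sets with $f:1\to\{0,1\}$ picking $0$, $\sigma$ the swap, $g:\{0,1\}\to\{a,b\}$ a bijection, and no nontrivial designated automorphisms of $1$ or $\{a,b\}$. So closure of the object-equivalence isomorphisms under composition and inverses (a groupoid) does not make composition well defined.

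The missing hypothesis is coherence, or thinness, of these isomorphisms. For each pair $\Psi\sim\tilde\Psi$ there must be a unique designated isomorphism $\kappa_{\Psi,\tilde\Psi}$, with $\kappa_{\Psi,\Psi}=\mathrm{id}$ and $\kappa_{\tilde\Psi,\hat\Psi}\circ\kappa_{\Psi,\tilde\Psi}=\kappa_{\Psi,\hat\Psi}$. Under this hypothesis:
\begin{itemize}
\item $\beta=\beta'$ is forced;
\item each $\approx$-class has exactly one representative in every summand of the disjoint union;
\item your computation $g'\circ f'=\gamma\circ(g\circ f)\circ\alpha^{-1}$ goes through, and preservation of identities and composition by $\pi$ is immediate.
\end{itemize}
An equivalent formulation: fix a representative $\Psi_c$ of each class and isomorphisms $\kappa_\Psi:\Psi\to\Psi_c$ with $\kappa_{\Psi_c}=\mathrm{id}$. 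Take $\mathrm{Hom}(\Psi_c,\Psi'_c)$ as the hom-set between classes and set $\pi(f):=\kappa_{\Psi'}\circ f\circ\kappa_\Psi^{-1}$. Functoriality is then a telescoping identity.

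Either way, this coherence must be stated as an assumption; it does not follow from the definition as written. So must the lifting of every $\sim$-equivalence to an isomorphism, which you flagged and which the paper imposes only later as a standing hypothesis.
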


\begin{proof}
At the object level the map sends $\Psi$ to its equivalence class. For a morphism $f:\Psi\to\Psi^\prime$ set $\pi(f)=[f]_\approx$. If $f_1:\Psi\to\Psi^\prime$ and $f_2:\Psi'\to\Psi^{\prime\prime}$ then $\pi(f_2\circ f_1)=[f_2\circ f_1]_\approx$.
The proof of Well-definition and functoriality follow from the fact that conjugation by object-equivalences is compatible with composition and identities. 
\end{proof}

\begin{definition}[Induced functor \(\overline{F}\)]
Because $F$ sends equivalent objects to equivalent objects by definition,
there is a unique functor
\[\overline{F}:Sch_{impl}^{causal}/\sim
\longrightarrow Im_{\mathrm{cPROP}}
\]
such that $\overline{F}\circ\pi = F$. Concretely on objects
$\overline{F}([\Psi]_\sim)=F(\Psi)$ that is well-defined up to chosen isomorphism, and on a class $[f]_\approx:[\Psi]_\sim\to[\Psi']_\sim$ one defines $\overline{F}([f]_\approx)=F(f)$.
\end{definition}

\begin{lemma}[Well-definedness of $\overline{F}$]
The functor $\overline{F}$ is well-defined: if $f\approx f'$ then $F(f)=F(f')$.
\end{lemma}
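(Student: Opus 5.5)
The plan is to unwind the definition of $\approx$ and use functoriality of $F$ together with how $F$ acts on the object-equivalences used for conjugation. Suppose $f\approx f'$. By definition there are isomorphisms $\alpha:\Psi\overset{\cong}{\to}\tilde\Psi$ and $\beta:\Psi'\overset{\cong}{\to}\tilde\Psi'$, induced by object-equivalences, with $f'=\beta\circ f\circ\alpha^{-1}$. Applying the functor $F=\varrho\circ Model|_{causal}$ and using that it preserves composition and inverses of isomorphisms, we get
\[
F(f') \;=\; F(\beta)\circ F(f)\circ F(\alpha)^{-1}.
\]
If the relation $\approx$ is generated by several such conjugations, I would first check this for one step and then extend by induction on the length of the chain.

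The key step is to show that $F(\alpha)$ and $F(\beta)$ are identified with identities in $Im_{\mathrm{cPROP}}$. This has to hold at least up to the chosen isomorphism that makes $\overline{F}([\Psi]_\sim)=F(\Psi)$ well-defined. The argument would go as follows. Since $\alpha$ is induced by an object-equivalence $\Psi\sim\tilde\Psi$, we have $F(\Psi)\simeq_{\mathrm{cPROP}}F(\tilde\Psi)$. When constructing $\overline{F}$, we fix for each class a representative image $F(\Psi_0)$ and comparison isomorphisms $c_{\tilde\Psi}:F(\tilde\Psi)\to F(\Psi_0)$. We require these comparisons to be coherent, meaning $c_{\tilde\Psi}\circ F(\alpha)=c_\Psi$; this is the natural choice, since one can take $c_{\tilde\Psi}:=c_\Psi\circ F(\alpha)^{-1}$. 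Read through the comparisons, the image of $f$ as a morphism $F(\Psi_0)\to F(\Psi'_0)$ is then
\[
c_{\Psi'}\circ F(f)\circ c_\Psi^{-1}.
\]
Substituting the displayed formula for $F(f')$, the factors $F(\beta)$ and $F(\alpha)^{-1}$ are absorbed by $c_{\tilde\Psi'}$ and $c_{\tilde\Psi}^{-1}$. This gives equality of the transported morphisms, which is the precise sense of ``$F(f)=F(f')$'' for the induced functor. A final routine check is that the comparison choices respect composition, so that $\overline{F}$ preserves composition and identities and $\overline{F}\circ\pi=F$.

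The main obstacle is coherence of the comparison isomorphisms. An object-equivalence class may contain several distinct isomorphisms between the same pair $\Psi,\tilde\Psi$, and then $c_{\tilde\Psi}$ is not determined by a single $\alpha$. The first remedy I would try is to show that $F$ sends every isomorphism induced by an object-equivalence to the canonical $\simeq_{\mathrm{cPROP}}$ identification. Because such isomorphisms are vertical (they change only implementation data), $F$ should send them to monoidal natural transformations that act as identities on the DAG/kernel structure, so they agree after passing to $Im_{\mathrm{cPROP}}$. If that fails, I would weaken the statement to equality of $F(f)$ and $F(f')$ up to the canonical isomorphisms, which still suffices for $\overline{F}$ to be a well-defined functor.
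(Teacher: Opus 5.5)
Your first step is exactly the paper's proof. Functoriality gives $F(f')=F(\beta)\circ F(f)\circ F(\alpha)^{-1}$, and the paper then concludes, from the invertibility of $F(\alpha)$ and $F(\beta)$ alone, that $F(f)$ and $F(f')$ ``represent the same morphism'' of $Im_{\mathrm{cPROP}}$. You are right that this inference is not valid as it stands. Conjugate morphisms are not equal, and when $\tilde\Psi\neq\Psi$ the two morphisms do not even share a domain. So the statement only has content after transporting along chosen comparison isomorphisms $c$, which is what the paper's phrase ``well-defined up to chosen isomorphism'' in the definition of $\overline{F}$ implicitly requires. Your computation that coherent comparisons absorb $F(\alpha)$ and $F(\beta)$ is correct.

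The gap is the coherence step you flag, and it is not a side issue: the lemma cannot hold without it. Suppose $\alpha,\alpha':\Psi\to\tilde\Psi$ are two isomorphisms induced by object-equivalences. Conjugating $\mathrm{id}_\Psi$ by $\alpha$ on the source and $\alpha'$ on the target gives $\mathrm{id}_\Psi\approx\alpha'\circ\alpha^{-1}$. For any choice of comparisons, the lemma then forces $F(\alpha')=F(\alpha)$; taking $\tilde\Psi=\Psi$ and $\alpha'=\mathrm{id}$, every such automorphism must be sent to an identity. Nothing in the paper's hypotheses gives this, and your remedy does not supply it. Even granting that these isomorphisms are vertical, a vertical morphism $(\mathrm{id}_\psi,(\ell,t_\Theta))$ in general changes language and parameters. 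Then $Model$ changes the local kernels $K_{v\mid Pa_G(v)}$, and $\varrho$ returns a monoidal natural isomorphism between two different cPROP functors. There is no reason for it to be an identity, or for two parallel ones to coincide.

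Your fallback of equality ``up to the canonical isomorphisms'' is circular, since canonical here just means coherent. With merely \emph{some} isomorphisms, $\overline{F}([f]_\approx)$ is defined only up to conjugation, and composition does not descend. For conjugating isomorphisms $a_i,b_i$ one has $(b_2g_2a_2^{-1})\circ(b_1g_1a_1^{-1})=b_2\,g_2\,(a_2^{-1}b_1)\,g_1\,a_1^{-1}$, which contains the uncancelled automorphism $a_2^{-1}b_1$ of the middle object. So neither your argument nor the paper's proves the lemma from the stated hypotheses. It needs an explicit assumption that $F$ identifies parallel isomorphisms induced by object-equivalences. Under that assumption your choice $c_{\tilde\Psi}:=c_\Psi\circ F(\alpha)^{-1}$ is independent of $\alpha$, and the rest of your proof, including functoriality of $\overline{F}$, goes through.
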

\begin{proof}
If $f'$ is obtained from $f$ by conjugation with object-equivalences
$\alpha,\beta$, that is $f'=\beta\circ f\circ\alpha^{-1}$ then functoriality of
$F$ gives $F(f')=F(\beta)\circ F(f)\circ F(\alpha)^{-1}$. Because
$\alpha,\beta$ come from object-equivalences mapping under $F$ to monoidal
natural isomorphisms, $F(\beta)$ and $F(\alpha)$ are invertible; hence
$F(f')$ and $F(f)$ represent the same morphism in $Im_{\mathrm{cPROP}}$.
\end{proof}

\begin{definition}[Reconstruction functor $R$]
We define a reconstruction (up to equivalence) functor
\[R:Im_{\mathrm{cPROP}}\longrightarrow
Sch_{impl}^{causal}/\sim\]
on objects by choosing for each object $X\in Ob(Im_{\mathrm{cPROP}})$
a representative $\Psi_X\in Sch_{impl}^{causal}$ with
$F(\Psi_X)\cong X$ (this is possible since $Im_{\mathrm{cPROP}}$ is the essential image of $F$), and setting $R(X):=[\Psi_X]_\sim$. On a morphism
$\mu:X\to Y$ in $Im_{\mathrm{cPROP}}$ pick any $f:\Psi_X\to\Psi_Y$
such that $F(f)=\mu$ (such an $f$ exists because $F$ is surjective on morphisms
between chosen representatives up to conjugation) and define $R(\mu):=[f]_\approx$.
\end{definition}

\begin{proposition}[Well-definedness and equivalence]
Under the standing hypotheses (in particular: $Im_{\mathrm{cPROP}}$ is the essential image of $F$, and equivalences in $\mathrm{cPROP}$ lift to object-equivalences in $Sch_{impl}^{causal}$), the reconstruction functor $R$ is well-defined and yields a quasi-inverse to $\overline{F}$. In particular
\[ Sch_{impl}^{causal}/\sim \;\simeq\; Im_{\mathrm{cPROP}} \]
as categories.
\end{proposition}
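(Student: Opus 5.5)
The plan is to prove that $\overline{F}$ and $R$ are mutually quasi-inverse. We check the two composites $\overline{F}\circ R$ and $R\circ\overline{F}$ against the identity functors, working throughout with the chosen representatives $\Psi_X$ from the definition of $R$.

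\textbf{Step 1: $R$ is well-defined on objects.} For each $X\in Ob(Im_{\mathrm{cPROP}})$ we fix one $\Psi_X$ with $F(\Psi_X)\cong X$. Any other choice $\Psi'_X$ satisfies $F(\Psi'_X)\cong X\cong F(\Psi_X)$. Hence $\Psi'_X\sim\Psi_X$ by the definition of $\sim$, and $[\Psi_X]_\sim$ does not depend on the choice.

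\textbf{Step 2: $R$ is well-defined on morphisms.} Suppose $f,f':\Psi_X\to\Psi_Y$ both satisfy $F(f)=F(f')=\mu$, or more generally agree up to the chosen isomorphisms $F(\Psi_X)\cong X$. We must show $f\approx f'$. Here we use the standing hypothesis that equivalences in $\mathrm{cPROP}$ lift to object-equivalences in $Sch_{impl}^{causal}$. Together with the surjectivity up to conjugation assumed in the definition of $R$, we read this as saying that $F$ is faithful up to conjugation. Concretely, if $F(f)=F(f')$, the comparison isomorphism lifts to object-equivalences $\alpha,\beta$ with $f'=\beta f\alpha^{-1}$. Functoriality of $R$ then follows. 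For $R(\nu\mu)$ we may choose $g\circ f$, which is legitimate because $F(g\circ f)=F(g)F(f)=\nu\mu$. Identities are handled by choosing $\mathrm{id}_{\Psi_X}$.

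\textbf{Step 3: the unit and counit.} For the counit $\overline{F}R\cong\mathrm{Id}$ we have $\overline{F}R(X)=F(\Psi_X)\cong X$. We take the chosen isomorphisms $\varepsilon_X$ as components; naturality is exactly the relation $F(f)=\mu$ transported along $\varepsilon$. For the unit $\mathrm{Id}\cong R\overline{F}$ we have $R\overline{F}([\Psi]_\sim)=[\Psi_{F(\Psi)}]_\sim$, and $\Psi_{F(\Psi)}\sim\Psi$ because $F(\Psi_{F(\Psi)})\cong F(\Psi)$. So the two sides are the same object of the quotient, and the identity class serves as component. Naturality reduces to $R(F(f))=[f]_\approx$, which holds by Step 2 since $f$ is itself a valid lift of $F(f)$. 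Alternatively, one can show that $\overline{F}$ is full, faithful and essentially surjective and invoke the standard characterization of equivalences. Essential surjectivity is immediate from the definition of the essential image, and fullness follows from the surjectivity-up-to-conjugation hypothesis.

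\textbf{Main obstacle.} The delicate point is faithfulness of $\overline{F}$, that is, Step 2. Nothing in the construction of $F=\varrho\circ Model|_{causal}$ prevents two morphisms with different implementation data $(\ell_f,f_\Theta)$ from having the same cPROP image. The relation $\approx$ only identifies morphisms related by conjugation, not all morphisms with equal images. I would first try to absorb this into the lifting hypothesis: strengthen it to state that any two lifts of the same cPROP morphism differ by object-equivalences. If that is not available, I would replace $\approx$ by the kernel congruence $f\approx f'\iff F(f)=F(f')$, which contains conjugation by the well-definedness lemma for $\overline{F}$. With this replacement faithfulness holds by construction and the rest of the argument goes through unchanged.
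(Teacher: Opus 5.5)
Your proposal follows the paper's proof: $R$ is well-defined on objects because any two chosen preimages of $X$ are $\sim$-equivalent, and on morphisms because two lifts of the same $\mu$ are taken to differ by conjugation with object-equivalences; functoriality comes from composing representatives, and then $\overline{F}\circ R\cong\mathrm{id}$ and $R\circ\overline{F}=\mathrm{id}$ up to canonical isomorphism. The paper asserts the morphism-level step, and in the definition of $R$ the existence of lifts, without deriving either from the stated lifting hypothesis, so the faithfulness issue you single out is real and is left unresolved by the paper's standing hypotheses as well; stating it as an explicit assumption, as you propose, gives the accurate version of the same proof.
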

\begin{proof}
Essential surjectivity of $F$ guarantees that for every $X\in Im_{\mathrm{cPROP}}$ a preimage $\Psi_X$ exists. If a different choice $\Psi'_X$ is made, then by definition $\Psi_X\sim\Psi'_X$, hence they determine the same class in the quotient. For morphisms, different choices of $f$ mapping to the same $\mu$ differ by conjugation with object-equivalences and thus define the same class $[f]_\approx$. Therefore $R$ is well-defined. Functoriality of $R$ is checked by composing representatives and
passing to classes; finally $\overline{F}\circ R\cong \mathrm{id}_{Im_{\mathrm{cPROP}}}$
and $R\circ \overline{F}=\mathrm{id}_{Sch/\sim}$ up to canonical isomorphism, so the two categories are equivalent.
\end{proof}

\begin{remark}
The technical hypotheses used above are satisfied in our setting: by construction $Im_{\mathrm{cPROP}}$ is the essential image of $F$, and the intended notion of equivalence in $\mathrm{cPROP}$ (monoidal natural isomorphism or the homotopical refinement) is exactly what we quotiented by on the schema-side.
Hence the quotient category identifies precisely those implemented schemas that
are indistinguishable from the point of view of their cPROP representation, and
the reconstruction functor provides the desired map from algebraic representations back to equivalence classes of implemented schemas.
\end{remark}

\subsubsection{Structural update operators for causal predictive implemented schemas}

We now identify three fundamental structural operators on causal predictive schemas: addition, deletion, and reversal of arcs.  
These operators are entirely induced by the fundamental operators
$Encap$ and Ref of the free multicategory $Sch_{syn}$, that are also extended to $Sch_{impl}$.

\begin{definition}[Arc addition]
Let $\Psi \in \mathbf{Sch}_{syn}$ be a causal predictive schema, and let
$\psi_{X \to Y}$ be the arc-schema representing the directed edge
$X \rightarrow Y$.
The addition operator is the unary operation that acts over causal predictive schemas
\[
\mathrm{Add}_{X \to Y} : \Psi \longmapsto \Psi^+,
\]
defined syntactically by
\[\Psi^+ := Encap(\Psi,\, \psi_{X \to Y}).\]

By construction, both inputs specialize to the output:
\[\Psi \preceq \Psi^+,
\qquad
\psi_{X \to Y} \preceq \Psi^+,\]
meaning that $\Psi^+$ is the minimal schema that jointly abstracts $\Psi$
and the new causal mechanism $X \to Y$.

Under the implementation functor, the operator corresponds to inserting
the conditional mechanism $P(Y \mid X, \ldots)$ into the model of the schema, in terms of the corresponding representation language and parameters.
\end{definition}

\begin{definition}[Arc deletion]
Let $\Psi$ be a causal predictive schema containing the representation of the arc $X \to Y$, so that
$\psi_{X \to Y} \preceq \Psi$.
The deletion operator is the unary operation acting over causal predictive schemas
\[
\mathrm{Delete}_{X \to Y} : \Psi \longmapsto \Psi^-,
\]
defined through the refactorization of $\Psi$:
\[
Ref(\Psi) \approx (\Psi^-,\, \psi_{X \to Y}).
\]
Thus,
\[\Psi^- := \pi_1(Ref(\Psi))\]
where $\pi_1$ denotes the projection onto the component that removes the
arc-schema.

This operation extracts the arc-schema and returns the residual schema
$\Psi^-$ in which the dependency $X \to Y$ has been removed.
Under implementation, it corresponds to deleting the factor
$P(Y \mid X, \ldots)$ from the causal factorization in terms of the corresponding language of representation and parameters.
\end{definition}

\begin{definition}[Arc reversal]
Let $\Psi$ be a causal predictive schema such that $X \to Y$ is an arc of $\Psi$.
The reversal operator produces a new schema $\Psi^{\mathrm{rev}}$
where the arc is replaced by the opposite dependency $Y \to X$:
\[\mathrm{Reverse}_{X \to Y} : \Psi \longmapsto \Psi^{\mathrm{rev}}.\]

It can be defined by a sequential composition of deletion and addition:
\[ \Psi^{\mathrm{rev}}
\;:=\;
\mathrm{Add}_{Y \to X}
\big(
    \mathrm{Delete}_{X \to Y}(\Psi)
\big).\]

Equivalently, using the fundamental operators of $\mathbf{Sch}_{syn}$:
\[\Psi^{\mathrm{rev}}=Encap\big(\pi_1(Ref(\Psi)),\; \psi_{Y \to X}
\big).\]

Under the implementation functor, this corresponds to removing the
conditional mechanism $P(Y \mid X, \ldots)$ and inserting the mechanism
$P(X \mid Y, \ldots)$. All in terms of the corresponding representation language and parameters
\end{definition}

Now given a syntactic move $\mu$ on a causal predictive schema, that is, do an Add, Delete or Reversal operator over a schema, we will do the following steps:
\begin{enumerate}
    \item \textbf{Implementation:} lift the syntactic move to an arrow in
    $Arr(Sch_{sem})$ via
    $\mathrm{Model}|_{causal}$ and the implementation functor. Concretely, produce a semantic square (a morphism in the arrow category) that encodes the structural
    update at the level of measurable kernels and spaces that the model of the schema involves.

  \item \textbf{cPROP image:} apply $\varrho$ to the semantic arrow to
    obtain a morphism (a symmetric monoidal functor) $F_\Psi$ and the
    corresponding natural transformation between $F$-images of the
    source and target objects.

  \item \textbf{Component formulas and coherence:} exhibit the
    component maps of the natural transformation (on objects of the
    domain PROP $P$), prove naturality and monoidality coherence
    diagrams, and check conditions (e.g. covered-edge) when needed.
\end{enumerate}

We now detail each of these steps and provide the precise constructions
and lemmas.

\paragraph{Semantic realization of the syntactic moves}

As we mentioned, since the operators are defined in $Sch_{syn}$, they can be extended to $Sch_{impl}$ since its a Grothendieck construction over $Sch_{syn}$, and in more detail, they gain more sense (not to say complete sense) when these operators are considered in the subcategory $Sch_{impl}^{causal}$. First, we are going to deepen in the semantic realization through the functor $Model|_{causal}$ of these operators

\begin{definition}[Semantic realizations of syntactic moves]
Let $\Psi\in Ob(Sch_{impl}^{causal})$ and let
$\psi_{X\to Y}\in Ob(Sch_{impl}^{causal}))$ be an implemented arc-schema in the same language representation as $\Psi$.  We identify the 
semantic morphisms in $Arr(Sch_{sem}^{causal})$ as follows:

\begin{enumerate}
  \item \textbf{Semantic Add morphism}. The implementation morphism
    $\Psi^+ := \mathrm{Add}_{X \to Y} = Encap(\Psi, \psi_{X\to Y})$ is mapped by
    $\mathrm{Model}|_{causal}$ to the following morphism
    \[a_{add} : f_\Psi \Longrightarrow f_{\Psi^+}\]
    in $Arr(Sch_{sem}^{causal})$, where $f_\Psi$ and
    $f_{\Psi^+}$ are the implemented models corresponding to $\Psi$
    and $\Psi^+$ by the model functor.  Concretely, $a_{add}$ is the commuting square of measurable maps that injects the conditional kernel
    $K_{Y\mid X}$ (the implementation of $\psi_{X\to Y}$) into the
    factorization of $f_{\Psi^+}$ while acting as identity on the
    remaining factors.

  \item \textbf{Semantic deletion morphism:} Suppose
    $\psi_{X\to Y}\preceq\Psi$.  The deletion morphism
    $\mathrm{Delete}_{X\to Y}:=\pi_1(Ref(\Psi))$ induces a semantic arrow
    \[ a_{del} : f_\Psi \Longrightarrow f_{\Psi^-}\]
    which is represented by the commuting square that projects away the
    factor \(K_{Y\mid X}\) of $\Psi$ and returns the residual factorization $\Psi^-$.

  \item \textbf{Semantic reversal morphism:} The syntactic composition
    $\mathrm{Reverse}_{X\to Y} := \mathrm{Add}_{Y\to X}\circ
    \mathrm{Delete}_{X\to Y}$ is realized semantically by composing
    the corresponding semantic arrows:
    \[
      a_{rev} := a_{add(Y\to X)} \circ a_{del(X\to Y)}.
    \]
\end{enumerate}
\end{definition}

\begin{remark}[Well-definedness and measurability]
The arrows $a_{add},a_{del}$ above are well-defined elements of the
arrow-category provided the following technical points are satisfied:
(i) the local kernels $K_{\cdot|\cdot}$ implementing the arc-schema
are measurable kernels in $\mathbf{KL(G)}$; (ii) the injection and
projection squares preserve the DAG/Markov factorization (C1) and
intervention stability (C2).  These are standard hypotheses in the
semantic set-up and are assumed.
\end{remark}

\paragraph{ Applying functor \(\varrho\) and producing monoidal natural transformations}

\begin{lemma}[From semantic morphism to monoidal natural transformations]
Let $a:f\Rightarrow f'$ be a semantic arrow in
$Arr(\mathbf{Sch}_{sem}^{causal})$.  Then $\varrho(a)$ is a
(symmetric) monoidal natural transformation
\[\varrho(a):\varrho(f)\Longrightarrow\varrho(f') \]
between the symmetric monoidal functors $\varrho(f),\varrho(f') :
P\to\mathbf{Meas}$ assigned by $\varrho$.
\end{lemma}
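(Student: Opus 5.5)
The strategy is to build the components of $\varrho(a)$ directly from the commuting square $a$, and then reduce naturality and monoidality to checks on the generators of $P$. By definition, a morphism $a:f\Rightarrow f'$ in $Arr(Sch_{sem}^{causal})$ is a commuting square
\[
f'\circ a_{dom} = a_{cod}\circ f ,
\]
where $a_{dom}:Dom_f\to Dom_{f'}$ and $a_{cod}:Cod_f\to Cod_{f'}$ are measurable maps. By the remark on well-definedness and measurability, the square preserves the DAG factorization (C1) and intervention stability (C2).

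\textbf{Step 1: components on the generating object.} The objects of $P$ are the natural numbers $n$, and $n=1^{\otimes n}$. A monoidal natural transformation $\varrho(a)$ is therefore determined by its component at $1$.
\begin{itemize}
\item Under $\varrho(f)$, the object $1$ is sent to the family $(X_v)_{v\in V}$ of variable spaces of $f$.
\item I will define $\varrho(a)_1$ as the family of maps $X_v\to X'_{v}$ that $a_{dom}$ and $a_{cod}$ induce on each coordinate. This uses that $Dom$ and $Cod$ are finite products of the $X_v$.
\item For a vertex $v$ added or removed by the move, the missing factor is filled with the unit $I$. The map into or out of it is then the counit $\epsilon_v$ or the identity.
\item I then set $\varrho(a)_n := \varrho(a)_1^{\otimes n}$. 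With this definition, monoidality, namely $\varrho(a)_{m+n}=\varrho(a)_m\otimes\varrho(a)_n$ together with the unit condition $\varrho(a)_0=\mathrm{id}_I$, holds by construction.
\end{itemize}

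\textbf{Step 2: naturality on generators.} Since $P$ is free on $\delta$, $\epsilon$ and $\tau$, it suffices to verify the naturality square for these three generators. Naturality for composites and tensors then follows formally.
\begin{itemize}
\item For $\tau$, the square commutes because $\varrho(a)_2$ is a tensor power and $\mathbf{Meas}$ is symmetric.
\item For $\delta$ and $\epsilon$, the squares say that the component maps commute with copying and with discarding.
\item Discarding is terminal in $\mathbf{Meas}$, so the $\epsilon$ square always holds.
\item Copying commutes with any \emph{deterministic} measurable map. This is why I will insist that the components of $\varrho(a)$ be measurable functions (Dirac kernels), not general kernels.
\end{itemize}

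\textbf{Step 3: compatibility with the kernels $K_{v|u}$.} The definition of $\varrho$ also includes the edge kernels in $\varrho(f)$. I must therefore check the square
\[
K'_{v|u}\circ\varrho(a)_u=\varrho(a)_v\circ K_{v|u}
\]
for every edge that is preserved by the move. This is where I invoke (C1) for $a$: the square $a$ acts as the identity on the unchanged factors, and it injects or projects only the affected conditional $K(x_v\mid x_{Pa_G(v)})$. Hence the square reduces to the identity on preserved edges, and to the unit or counit on the modified one. Intervention stability (C2) guarantees that do-replacements stay inside the semantic class. This yields symmetric monoidality, and the transformation is natural with respect to $\varrho(f)$ and $\varrho(f')$.

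\textbf{Main obstacle.} I expect the hardest part to be Step 3 for edges whose parent sets change, for example the child $v$ in an Add move, where $Pa_{G'}(v)=Pa_G(v)\cup\{X\}$. There the required square does not commute on the nose: $K'_{v|\cdot}$ genuinely depends on the new parent, and $K_{v|\cdot}$ does not. My first attempt will be to treat the old kernel as the marginalization of the new one along $X$, that is, to precompose with $\epsilon_X$ and use the comonoid laws. If this fails, I will weaken the claim to a lax monoidal transformation, whose components for the affected vertex are defined only up to this marginalization.
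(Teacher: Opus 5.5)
Your route is the paper's own sketch: components read off the square and extended tensorwise, naturality checked on generators, monoidality by construction, symmetry inherited from $\mathbf{Meas}$. But Step 3 is a genuine gap, and the obstacle you flag is fatal rather than technical. Add, Delete and Reverse leave the vertex set fixed, so your components on the variable spaces are identities. (``Injecting'' or ``projecting'' a conditional is an operation on kernels, not a measurable map $X_Y\to X_Y$, so it cannot serve as a component.) Naturality at the generator carrying $K_{Y\mid \mathrm{Pa}_G(Y)}$ then reads $K'_{Y\mid\cdot}=K_{Y\mid\cdot}$, which fails for every non-trivial move. For Add the two kernels do not even have the same arity, so $\varrho(f)$ and $\varrho(f')$ are not functors on one common PROP with kernel generators. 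Neither of your fixes repairs this. ``Precomposing with $\epsilon_X$'' is not well-typed: recovering a kernel out of $\mathrm{Pa}_G(Y)$ from $K'_{Y\mid X,\mathrm{Pa}_G(Y)}$ requires integrating against a conditional for $X$ given the other parents, which is extra data. Even then, the identity $K_{Y\mid z}=\int K'_{Y\mid x,z}\,p(dx\mid z)$ is false for a genuinely new dependency. Weakening to a lax \emph{monoidal} transformation relaxes the monoidal constraint, not naturality, and naturality is what fails; lax naturality would need 2-cells in $\mathbf{Meas}$, which is a 1-category. The paper's sketch simply asserts that this square commutes ``because $a$ respects the factorization'', so it has the same hole, and following it more closely will not help.

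The missing observation is that, with $P$ as the paper actually defines it (free on $\delta,\epsilon,\tau$ only), kernels are not morphisms of $P$. So Step 3 is not part of naturality at all. Since $\mathbf{Meas}$ is cartesian, the counit laws force the comonoid $(\varrho(f)(\delta),\varrho(f)(\epsilon))$ on $\varrho(f)(1)$ to be the diagonal and the terminal map. Hence a symmetric monoidal functor $P\to\mathbf{Meas}$ is determined up to monoidal isomorphism by the single object $\varrho(f)(1)$. Every measurable map $\alpha:\varrho(f)(1)\to\varrho(f')(1)$, extended by $\alpha_n=\alpha^{\times n}$, is then a monoidal natural transformation by exactly your Step 2. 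So Steps 1--2 alone prove the lemma as literally stated, after three repairs:
\begin{itemize}
\item Set $\varrho(f)(1)=\prod_v X_v$ rather than a ``family''; a one-coloured PROP has only one object to assign.
\item Drop the added/removed-vertex clause. The moves fix $V$, and a map $I\to X_v$ would be a point, not $\epsilon_v$.
\item Make explicit that the legs of $a$ are deterministic. In $Arr(\mathbf{KL(G)})$ they are a priori kernels, which are neither morphisms of $\mathbf{Meas}$ nor compatible with copying. The paper likewise just assumes ``a commuting square of measurable maps''.
\end{itemize}
The price is that the lemma then says nothing about the kernels, i.e.\ about the move itself. That is a defect of the definition of $\varrho$, not of your argument: kernels live in $\mathbf{KL(G)}$, not $\mathbf{Meas}$, and there is no generator in $P$ for them to be the image of.
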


\begin{proof}[Sketch ]
The definition of $\varrho$ assigns to each semantic arrow $f$ a
symmetric monoidal functor \(F_f:P\to\mathbf{Meas}\) determined by the
variables $v\mapsto X_v$, comonoid maps $\delta_v,\epsilon_v$, and
local kernels $K_{v|\mathrm{Pa}_G(v)}$. A semantic arrow $a$ is
represented by a commuting square of measurable maps that modifies only
the local kernels in the local region affected by the operation move
and acts as identity elsewhere. Applying $\varrho$ to $a$ yields the
pointwise assignment on objects \(p\in Ob(P)\) given by measurable maps
\[\varrho(a)_p : F_f(p)\longrightarrow F_{f'}(p).\]
Naturality follows because the square representing $a$ commutes with
the factorization: for each generator morphism $g:p\to q$ in $P$
(which corresponds to either a copy/marginalization or a kernel), the
diagram
\[
\begin{tikzcd}
F_f(p)\ar[d,"F_f(g)"']\ar[r,"\varrho(a)_p"] & F_{f'}(p)\ar[d,"F_{f'}(g)"]\\
F_f(q)\ar[r,"\varrho(a)_q"'] & F_{f'}(q)
\end{tikzcd}
\]
commutes because $a$ respects the factorization (it was a commuting
square). Monoidality (compatibility with tensor) is immediate from the
construction because $\varrho(a)$ acts tensorwise (it is identity on
components not involved in the move and composes kernels on the affected
components), hence the coherence squares for monoidal natural
transformations commute. Symmetry is inherited from $\mathbf{Meas}$.
\end{proof}

\paragraph{component formulas for addition, deletion, reversal}

Let $P$ denote the free PROP of ports/wiring associated to the schema
domain (the same $P$ used to define $\varrho(f)=F_f:P\to\mathbf{Meas}$).
We write $F_\Psi:=\varrho(\mathrm{Model}(\Psi))$.

\begin{proposition}[Component description of \(\varrho(a_{add})\) and \(\varrho(a_{del})\)]
\label{prop:component_description}
With notation as above:
\begin{enumerate}
  \item The component \(\varrho(a_{add})_p: F_\Psi(p)\to F_{\Psi^+}(p)\)
    is the identity map unless the port \(p\) involves the variable
    \(Y\) (or its downstream dependents). On the affected ports the
    component is given by inserting the kernel \(K_{Y\mid X}\) into the
    composed kernel:
    \[
      \varrho(a_{add})_p(\,\cdot\,)
      \;=\;
      \bigl(\; \cdot \;\mapsto\; K_{Y\mid X}(\,\cdot\mid X)\times (\text{previous factors})\bigr).
    \]

  \item The component \(\varrho(a_{del})_p: F_\Psi(p)\to F_{\Psi^-}(p)\)
    is the identity on ports disjoint from \(Y\), and on affected ports
    it is the marginalization (or projection) that removes the factor
    corresponding to \(K_{Y\mid X}\):
    \[
      \varrho(a_{del})_p(f(\cdot)) \;=\; \int_{X} f(\cdot,X)\; d\mu_X.
    \]
\end{enumerate}
Composition yields \(\varrho(a_{rev})\) as the composite of the two
transformations.
\end{proposition}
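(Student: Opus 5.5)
The plan is to compute the components of $\varrho(a_{add})$ and $\varrho(a_{del})$ directly from their definitions, treating each move as a local surgery on the factorization guaranteed by (C1). Write the kernel of $f_\Psi=\mathrm{Model}(\Psi)$ as $K_\Psi(x_{Cod}\mid x_{Dom})=\prod_{v\in V_{Cod}}K(x_v\mid x_{Pa_G(v)})$. The added (resp.\ deleted) arc $X\to Y$ changes only the parent set of the single vertex $Y$: $Pa_{G^+}(Y)=Pa_G(Y)\cup\{X\}$ (resp.\ $Pa_{G^-}(Y)=Pa_G(Y)\setminus\{X\}$). So the factorizations of $f_\Psi$ and $f_{\Psi^\pm}$ differ in exactly one factor.

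First, partition the objects $p\in Ob(P)$ (finite tensor words of variable ports) into \emph{unaffected} ports, whose wiring avoids $Y$ and every descendant of $Y$ in $G$, and \emph{affected} ports. On an unaffected port, $F_\Psi(p)$ and $F_{\Psi^\pm}(p)$ are built from the same objects $X_v$, the same comonoid maps $\delta_v,\epsilon_v$, and the same local kernels. Since the squares $a_{add},a_{del}$ act as the identity off the modified factor (the earlier definition of semantic realizations), the component $\varrho(a)_p$ must be the identity.

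Second, on affected ports, read off the component from the semantic square.

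\begin{itemize}
\item \emph{Addition.} The square $a_{add}$ injects $K_{Y\mid X}$, the implementation of $\psi_{X\to Y}$. Composing in $\mathbf{KL(G)}$ gives the factor $K(y\mid x_{Pa_G(Y)},x)$. Copying $X$ with $\delta_X$ and tensoring with the previous factors yields the stated formula.
\item \emph{Deletion.} $\Psi^-=\pi_1(Ref(\Psi))$ drops the dependence on $X$ in the $Y$-factor. Semantically this is the unique kernel obtained by integrating out the $X$-argument against its marginal $\mu_X$, that is, post-composing with $\epsilon_X$ in the PROP. This gives $\int_X f(\cdot,X)\,d\mu_X$.
\end{itemize}

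Naturality and monoidality of these components are already supplied by the lemma \emph{From semantic morphism to monoidal natural transformations}. The reversal case then follows from the definition $a_{rev}=a_{add(Y\to X)}\circ a_{del(X\to Y)}$ and the functoriality of $\varrho$ (Proposition on functoriality of $\varrho$): $\varrho(a_{rev})=\varrho(a_{add})\circ\varrho(a_{del})$, the vertical composite of the two transformations.

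The main obstacle is the deletion formula. Marginalization depends on a choice of $\mu_X$, and the marginal of $X$ is not canonical when $X$ itself has parents or is a descendant of other affected nodes. For the result to be well posed and to preserve (C1)–(C2), I would fix $\mu_X$ to be the marginal induced by $f_\Psi$ restricted to $Pa_G(Y)\setminus\{X\}$. That is, I would use the conditional marginal $\mu_{X\mid Pa_G(Y)\setminus\{X\}}$, computed by the disintegration available on standard Borel spaces. I would then check that this yields a measurable kernel and that descendants of $Y$ are merely reparametrized, so their components are identities composed with the new $Y$-factor. A secondary issue is that for addition the new graph must be acyclic, so $X$ cannot be a descendant of $Y$. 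I would add this as a standing hypothesis, consistent with the covered-edge conditions mentioned in the three-step procedure.
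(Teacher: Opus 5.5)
You follow the paper's own route. Its proof likewise reads the components off the action of $\varrho$ on the local kernels (a kernel inserted multiplicatively, then removed by marginalization) and invokes the lemma \emph{From semantic morphism to monoidal natural transformations} for naturality.

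The genuine gap is that invocation: the components you describe cannot be components of a natural transformation $F_\Psi\Rightarrow F_{\Psi^\pm}$. An arc move does not change the variables, so $F_\Psi(p)=F_{\Psi^\pm}(p)$, and $\varrho(a)_p$ must be a map on these spaces. Inserting or integrating out a kernel is instead an operation on the images of \emph{morphisms} of $P$. Concretely, your partition makes $\varrho(a)$ the identity on the ports of $X$ and of $Pa_G(Y)$, since none of them is $Y$ or a descendant of $Y$. Suppose the $Y$-mechanism is the image of a morphism $g:Pa_G(Y)\otimes X\to Y$ of $P$, with $F_\Psi(g)$ the old factor ignoring $x$. (This already requires more than the free commutative-comonoid PROP, whose morphisms are only copies, discards and symmetries.) Then naturality at $g$ for addition reads $\varrho(a_{add})_Y\circ F_\Psi(g)=F_{\Psi^+}(g)$. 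The left side is independent of $x$ and the right side is not, so no choice of $\varrho(a_{add})_Y$ works. For deletion the roles are exchanged: $F_{\Psi^-}(g)$ is independent of $x$, while $\varrho(a_{del})_Y\circ F_\Psi(g)$ generally is not. So the lemma cannot supply naturality here, and the paper's sketch relies on it at exactly the same point. Making the statement true requires changing what $\varrho(a)$ is, for instance a rewrite of the presenting string diagram indexed by mechanisms rather than ports. Filling in details will not suffice.

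Two of your local derivations also fail as written.
\begin{itemize}
\item \emph{Addition.} No Kleisli composite yields $K(y\mid x_{Pa_G(Y)},x)$. $K_{Y\mid X}$ and the old factor $K(\cdot\mid x_{Pa_G(Y)})$ both have codomain $X_Y$, so neither composes with the other. Tensoring $K_{Y\mid X}$ with the previous factors either produces two copies of $Y$ or, if the old $Y$-factor is dropped, forgets $Pa_G(Y)$. The enlarged conditional is new data, given by the fresh parameters of $\Psi^+$ that the GES workflow fits afterwards with $Upd_\Theta$. The displayed formula therefore only makes sense if the old $Y$-factor is \emph{replaced}.
\item \emph{Deletion.} Post-composing with $\epsilon_X$ cannot remove an input; what you mean is precomposition with a kernel $\mu_{X\mid\cdot}$. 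Your choice $\mu_{X\mid Pa_G(Y)\setminus\{X\}}$ is the natural one, because it makes the new factor the true conditional of $Y$ given $Pa_{G^-}(Y)$. However, $f_\Psi$ is conditional on $x_{Dom}$. For $X\in T_{Dom}$ it induces no distribution on $X$ at all. Otherwise the induced conditional may depend on $x_{Dom}$, which adds parents to $Y$ and breaks the $G^-$-factorization.
\end{itemize}
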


\begin{proof}
These component formulas come from the defining action of \(\varrho\)
on kernels: \(F_\Psi\) maps a port to a measurable space equipped with a
kernel given by the product/compose of local kernels. The addition
operation inserts a new local kernel multiplicatively; deletion removes
it by marginalization (integration over the corresponding input).
Naturality and tensor coherence follow from Lemma 3.3 .
\end{proof}

\paragraph{Conditions for reversals to produce invertible equivalence preserving transformation}

\begin{definition}[Reminder: Covered-edge condition]
Let $G$ be the DAG underlying $\Psi$ and let $e:X\to Y\in E(G)$.
We say $e$ is covered in $G$ if
$Pa_G(Y)\setminus\{X\}\subseteq Pa_G(X)$, equivalently,
every parent of $Y$ other than $X$ is also a parent of $X$.
\end{definition}

\begin{proposition}[Reversal preserves equivalence under coverage]
If $e=X\to Y$ is covered in $G$, then the semantic reversal arrow
$a_{rev}$ induces a monoidal natural transformation
$\varrho(a_{rev}):F_\Psi\Rightarrow F_{\Psi^{rev}}$ which moreover
implements a Markov-equivalence-preserving transformation, i.e. the
two cPROP objects are equivalent in the sense used by Mahadevan.
\end{proposition}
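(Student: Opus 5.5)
The plan is to reduce the statement to the classical covered-edge lemma of Chickering (reversing a covered edge in a DAG yields a Markov-equivalent DAG). We then transport this graph-level fact through the functors $Model|_{causal}$ and $\varrho$.

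\textbf{Step 1 (graph level).} Let $G$ be the DAG of $\Psi$ and $G'$ the DAG obtained by replacing $X\to Y$ with $Y\to X$. We first check that $G'$ is acyclic: a cycle through $Y\to X$ would require a directed path $X\rightsquigarrow Y$ in $G$ other than the edge itself. Such a path would enter $Y$ through some parent $Z\neq X$. Coverage gives $Z\in Pa_G(X)$, which produces the cycle $X\rightsquigarrow Z\to X$ in $G$, a contradiction. We then verify that $G$ and $G'$ have the same skeleton and the same v-structures, so they are Markov equivalent by Verma--Pearl. The skeleton is unchanged by construction. For v-structures, coverage gives $Pa_{G'}(X)=Pa_G(Y)\cup\{Y\}\setminus\{X\}$ and $Pa_{G'}(Y)=Pa_G(Y)\setminus\{X\}$, so no collider at $X$ or $Y$ is created or destroyed.

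\textbf{Step 2 (semantic level).} By the Definition of semantic realizations, $a_{rev}=a_{add(Y\to X)}\circ a_{del(X\to Y)}$. We then write the local mechanism explicitly. With $Z=Pa_G(Y)\setminus\{X\}\subseteq Pa_G(X)$, the factor $K(x\mid Pa_G(X))\,K(y\mid x,z)$ is a joint kernel on $(X,Y)$ given $Pa_G(X)$. Its disintegration in the other order, $K'(y\mid Pa_G(X))\,K'(x\mid y, Pa_G(X))$, exists in the standard Borel setting, and $K'$ factorizes according to $G'$. We take the implemented kernels of $\psi_{Y\to X}$ to be exactly these. By the component formulas of Proposition~\ref{prop:component_description}, the composite is then the identity on ports not touching $\{X,Y\}$. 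On the affected ports it is the Bayes inversion map, which has an inverse given by the reverse disintegration (reversing the now covered edge $Y\to X$ in $G'$). Hence $a_{rev}$ is an isomorphism in $Arr(Sch^{causal}_{sem})$. Conditions (C1) and (C2) for $\Psi^{rev}$ follow from Step 1 together with the remark on well-definedness and measurability.

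\textbf{Step 3 (cPROP level).} Since $\varrho$ is a functor (Proposition on the functoriality of $\varrho$), it sends the isomorphism $a_{rev}$ to an invertible monoidal natural transformation $\varrho(a_{rev})\colon F_\Psi\Rightarrow F_{\Psi^{rev}}$. Naturality and monoidality come from the Lemma turning semantic morphisms into monoidal natural transformations. Markov equivalence from Step 1 then matches Mahadevan's equivalence $\simeq_{\mathrm{cPROP}}$, which we will invoke as identifying cPROPs whose DAGs lie in the same Markov equivalence class.

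The main obstacle is Step 2: showing that $a_{rev}$ is genuinely invertible rather than merely a composite of a lossy projection and an injection. The deletion step marginalizes, so the proof must use the specific choice of $K_{X\mid Y}$ produced by the disintegration, and coverage is what guarantees this choice involves no new parents. I would first handle the finite/discrete case, where Bayes' rule is explicit and the inverse is easy to check. I would then extend to standard Borel spaces via the existence of regular conditional probabilities, accepting equality only almost surely.
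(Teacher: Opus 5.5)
\textbf{There is a genuine gap.} It lies in the hypothesis you actually use, not in your technique.

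The paper defines $X\to Y$ to be covered when $Pa_G(Y)\setminus\{X\}\subseteq Pa_G(X)$, which is a one-sided inclusion. Your acyclicity argument correctly needs only this inclusion. Your v-structure step does not: since $Pa_{G'}(X)=Pa_G(X)\cup\{Y\}$, the identity $Pa_{G'}(X)=(Pa_G(Y)\setminus\{X\})\cup\{Y\}$ also requires $Pa_G(X)\subseteq Pa_G(Y)$.

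Without that second inclusion the statement as written is false. Take $W\to X\to Y$:
\begin{itemize}
\item $Pa_G(Y)\setminus\{X\}=\emptyset\subseteq\{W\}=Pa_G(X)$, so the edge is covered in the paper's sense.
\item Reversal creates the v-structure $W\to X\leftarrow Y$ with $W,Y$ non-adjacent.
\item Hence $W\perp Y\mid X$ is implied by $G$ but not by $G'$, so $G$ and $G'$ are not Markov equivalent.
\end{itemize}

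The same missing inclusion breaks your Step 2. The reversed factor $K'(y\mid Pa_G(X))$ has to serve as the mechanism of $Y$ in $G'$, where $Pa_{G'}(Y)=Pa_G(Y)\setminus\{X\}$. In the example, $K'(y\mid w)=\int K(y\mid x)\,K(dx\mid w)$ in general depends on $w$. Since $G'$ forces $W\perp Y$, no choice of inserted kernel $K_{X\mid Y}$ reproduces the original joint.

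\textbf{The repair} is to assume Chickering's condition $Pa_G(Y)=Pa_G(X)\cup\{X\}$. The paper's own proof tacitly relies on this too when it appeals to ``the classical graphical criterion.'' You should state the strengthening explicitly rather than let it enter silently through the parent-set identity.

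Under that condition your argument is sound and follows the paper's route: the covered-edge lemma at the graph level, a delete-then-insert realization that leaves the joint unchanged, then functoriality of $\varrho$. It also supplies what the paper omits: acyclicity of $G'$, the Verma--Pearl check, and the explicit disintegration choice of $K_{X\mid Y}$.

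The condition also resolves your ``main obstacle.'' With $Pa_G(X)=Pa_G(Y)\setminus\{X\}=:Z$, the marginal produced by the delete step, $\int K(y\mid x,z)\,K(dx\mid z)$, is exactly $K'(y\mid z)$. So deletion already yields the correct $Y$-factor of the reversed factorization, and inserting $K'(x\mid y,z)$ restores the joint.

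\textbf{One remaining case.} Your disintegration presupposes that $X$ carries its own mechanism $K(x\mid Pa_G(X))$, i.e. $X\in V_{Cod}$. For an edge leaving a domain variable there is no distribution on $X$ to invert, so that case must be excluded or treated separately.
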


\begin{proof}
When $e$ is covered the classical graphical criterion ensures that
reversing $e$ yields a DAG $G'$ in the same Markov equivalence class
and that the factorization change can be implemented by local re-factor
and re-insert steps that do not change the joint distribution up to the
same conditional independences. Semantically, the deletion step removes
$K_{Y\mid X}$ and returns a residual factorization; the addition step
inserts $K_{X\mid Y}$. Because of the covering property, the marginal
and insertion commute with surrounding kernels and the composite
natural transformation is compatible with an isomorphism of the
resulting symmetric monoidal functors in $\mathrm{cPROP}$. The
technical verification reduces to checking a finite number of monoidal
coherence diagrams (all local) which follow from the covering inclusion
hypothesis.
\end{proof}




\subsection{Causal Discovery Cognitive module}

\begin{definition}[Causal Discovery Cognitive module]
    We define a Causal Discovery Cognitive Module as a cognitive module in an agent that is specialized in generating (learned) causal predictive implemented schemas
\end{definition}

\begin{definition}[GES Causal Discovery Cognitive Module]
    We define the GES CDCM as the cognitive module that implements the GES algorithm to learn the structure graph of causal predictive implemented schemas.

    The workflow of the cognitive module is based on the GES algorithm, but applied to causal predictive schemas. It is as follows:

    \[\text{Forward Phase} \bullet \text{Backward Phase}\]
    where
    \begin{itemize}
        \item Forward Phase: \[Loop(\Delta S > \epsilon, arg \max_{\mu \in \{Add_{u\to v}, \;|\; \mu(\Psi) \text{ remains DAG}\}} S(\mu(\Psi_{CP})) \bullet Upd_\Theta(\mu(\Phi))\]
        \item Backward Phase: \[Loop(\Delta S>\epsilon,arg \max_{\mu \in \{Del_{u\to v}, Rev_{u\to v},  \;|\; \text{(u,v) is covered if Rev}\}} S(\mu(\Psi_{CP})) \bullet Upd_\Theta(\mu(\Phi))\]
    \end{itemize}
\end{definition}

\begin{proposition}
    The GES CDCM converge in a causal predictive implemented schemas which structure DAG is in the same equivalence class as the real one.
\end{proposition}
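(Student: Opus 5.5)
The strategy is to reduce the statement to Chickering's classical consistency theorem for GES, in the same way that Corollary~\ref{cor: Upd as a lifting of B} reduced the value-iteration module to Banach's fixed point theorem. The bridge is the functor $F=\varrho\circ Model|_{causal}$ together with the equivalence $Sch_{impl}^{causal}/\sim\;\simeq\; Im_{\mathrm{cPROP}}$. Under these, each syntactic move $Add_{u\to v}$, $Del_{u\to v}$, $Rev_{u\to v}$ applied to $\Psi_{CP}$ becomes, through the semantic realizations $a_{add},a_{del},a_{rev}$ and the lemma on GES operations as natural transformations, exactly the corresponding edge operation on the underlying DAG $G$. I will work under the standard hypotheses: a finite variable set, a causally sufficient and faithful generating distribution, and a score $S$ that is decomposable, score-equivalent and locally consistent (e.g.\ BIC) in the large-sample limit. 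These are the ``usual regularity assumptions'' already invoked in the proposition on convergence of schema generation under GES.

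\textbf{Step 1: lifting lemma.} I first prove that the structural part of the module commutes with ordinary GES on DAGs. Writing $\mathrm{DAG}(\Psi)$ for the graph given by (C1), I show that
\[
\mathrm{DAG}(\mu(\Psi_{CP}))=\mu_{G}(\mathrm{DAG}(\Psi_{CP})).
\]
This follows from Proposition~\ref{prop:component_description}: addition inserts the factor $K_{Y\mid X}$ and deletion marginalizes it away. I also need that the subsequent $Upd_\Theta$ only refits parameters. Since it is vertical, it leaves $\mathrm{DAG}(\cdot)$ and the $\sim$-class unchanged, and with maximum-likelihood refitting $S(\mu(\Psi_{CP}))$ coincides with the graph score $S(\mu_G(G))$. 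The admissibility constraints (``remains DAG'', ``covered if Rev'') are the same on both sides. By the proposition that reversal preserves equivalence under coverage, covered reversals do not change the $\sim$-class. They can therefore be discarded in the score comparison, so the backward phase is effectively deletion-only on equivalence classes, as in Chickering's formulation.

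\textbf{Step 2: termination.} Each accepted iteration raises $S$ by more than $\epsilon$, and the set of DAGs on finitely many variables is finite. Hence both $Loop$'s stop after finitely many steps, and the module outputs some $\Psi^\star_{CP}$.

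\textbf{Step 3: correctness.} By Step 1, the sequence $F(\Psi^{(n)}_{CP})$ tracks a GES run on DAGs. I then import Chickering's two-phase argument. Local consistency implies that when the forward phase halts, the distribution is Markov with respect to the current graph, so the graph is an I-map. From an I-map, the backward phase reaches a minimal I-map. Under faithfulness, that minimal I-map lies in the true Markov equivalence class. Transporting this back through $\sim$ and the equivalence $\overline{F}$, $R$ shows that $\mathrm{DAG}(\Psi^\star_{CP})$ is Markov equivalent to the true DAG. This is the claim.

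\textbf{Main obstacle.} The hardest step is Step 3's use of Chickering's result. Chickering's GES moves between equivalence classes using Insert/Delete operators on CPDAGs. The module as defined instead does greedy single-edge moves on individual DAG representatives, and greedy single-edge DAG search is \emph{not} guaranteed to reach an I-map in the forward phase. My first attempt would be to reinterpret $Add_{u\to v}$ as ranging over all DAG members of the current $\sim$-class. Such moves are well-defined on $Sch_{impl}^{causal}/\sim$ by the well-definedness lemma for $\overline{F}$, and with that reading Chickering's neighbourhood is recovered. If this reinterpretation cannot be justified, the fallback is to add it as an explicit hypothesis, namely that the forward phase explores the class neighbourhood, and then state the proposition under that hypothesis.
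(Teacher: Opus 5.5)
Your bridge is the paper's own. Its proof is only the appeal to $Sch^{causal}_{impl}/\!\sim\;\simeq Im_{\mathrm{cPROP}}$ and to the lemma on GES operations as natural transformations, so the real convergence content, which the paper does not supply, is what you import from Chickering. That import has a gap, but not where you place it.

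The forward-phase claim in your Step~3 is correct for the module as defined, and no reinterpretation of $Add_{u\to v}$ is needed. Chickering's forward-phase argument works for an arbitrary single DAG $G$. If $G$ is not an I-map, composition (which faithfulness supplies) gives a node $Y$ and a non-adjacent non-descendant $X$ with $X\not\perp Y\mid \mathrm{Pa}_G(Y)$. Adding $X\to Y$ to $G$ itself keeps it acyclic and raises $S$ by local consistency. So when the forward loop halts, the current DAG is already an I-map.

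What fails is the backward phase. Two claims are wrong: the Step~1 sentence that covered reversals ``can be discarded \ldots\ so the backward phase is effectively deletion-only on equivalence classes'', and the Step~3 claim that under faithfulness a minimal I-map lies in the true class. Chickering's Delete neighbourhood consists of single-edge deletions from \emph{any} DAG in the current class, and covered reversals are exactly how one reaches those representatives. Discarding them leaves only deletions from the current DAG.

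Minimal is also not minimum. Suppose the truth is $X\to Z\leftarrow Y$. The complete DAG $Z\to X,\ Z\to Y,\ X\to Y$ is a minimal I-map: its three deletions would need $X\perp Z$, $Y\perp Z\mid X$ and $X\perp Y\mid Z$, all false. So every deletion lowers $S$. Every covered reversal has $\Delta S=0$ for a score-equivalent $S$, so the loop $\Delta S>\epsilon$ stops there. The module can actually arrive there. If its first forward move is $Z\to X$ (tied with $X\to Z$), no two-edge DAG containing $Z\to X$ is an I-map. The forward phase therefore ends at a complete DAG containing $Z\to X$, and from any such DAG no deletion improves $S$.

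The missing ingredient is Chickering's proof of Meek's conjecture: if $G^\ast\le G$, there is a sequence of covered reversals and single-edge deletions from $G$ to $G^\ast$ passing only through I-maps. Hence some representative of the class of $G$ admits an improving deletion. Your fallback hypothesis therefore belongs on $Del_{u\to v}$, not on $Add_{u\to v}$. Let $Del_{u\to v}$ range over all DAGs in the current $\sim$-class; equivalently, allow a chain of covered reversals before a deletion and test $\Delta S>\epsilon$ only on the composite. This is natural on $Sch^{causal}_{impl}/\!\sim$.

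This fix also requires Chickering's definition $\mathrm{Pa}_G(Y)=\mathrm{Pa}_G(X)\cup\{X\}$ of a covered edge. With the paper's one-sided inclusion, reversing $X\to Y$ in $W\to X\to Y$ creates a v-structure, so the proposition on reversals preserving equivalence that you cite does not hold as stated.
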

\begin{proof}
    The proof is straightforward since the 1-1 correspondence of the causal predictive implemented and CPROP, and the correspondence of Add, Delete and Reverse operators with natural transformations between CPROPs.
\end{proof}
\twocolumn
\bibliographystyle{unsrt}  
\bibliography{bib}
\onecolumn

\end{document}